\DeclareRobustCommand\onedot{\futurelet\@let@token\@onedot}
\newtheorem{theorem}{Theorem} 
\begin{document}
	
	\title{Scalable Label Distribution Learning for Multi-Label Classification}
	
	\author{Xingyu~Zhao,
		Yuexuan~An,
		Lei~Qi,
		and~Xin~Geng
		\IEEEcompsocitemizethanks{\IEEEcompsocthanksitem Xingyu Zhao is with the College of Computer Science and Technology, Nanjing University of Aeronautics and Astronautics, Nanjing 211106, China, and also with the School of Computer Science and Engineering, Southeast University, Nanjing 211189, China. Yuexuan~An, Lei~Qi, and Xin Geng are with the School of Computer Science and Engineering, Southeast University, Nanjing 211189, China, and also with Key Laboratory of New Generation Artificial Intelligence Technology and Its Interdisciplinary Applications (Southeast University), Ministry of Education, China (e-mail: zhaoxy@nuaa.edu.cn; yx\_an@seu.edu.cn;  qilei@seu.edu.cn; xgeng@seu.edu.cn).}
	}
	
	\maketitle
	
	\begin{abstract}
		Multi-label classification (MLC) refers to the problem of tagging a given instance with a set of relevant labels. Most existing MLC methods are based on the assumption that the correlation of two labels in each label pair is symmetric, which is violated in many real-world scenarios. Moreover, most existing methods design learning processes associated with the number of labels, which makes their computational complexity a bottleneck when scaling up to large-scale output space. To tackle these issues, we propose a novel method named Scalable Label Distribution Learning (SLDL) for multi-label classification which can describe different labels as distributions in a latent space, where the label correlation is asymmetric and the dimension is independent of the number of labels. Specifically, SLDL first converts labels into continuous distributions within a low-dimensional latent space and leverages the asymmetric metric to establish the correlation between different labels. Then, it learns the mapping from the feature space to the latent space, resulting in the computational complexity is no longer related to the number of labels. Finally, SLDL leverages a nearest-neighbor-based strategy to decode the latent representations and obtain the final predictions. Extensive experiments illustrate that SLDL achieves very competitive classification performances with little computational consumption.
	\end{abstract}
	
	\begin{IEEEkeywords}
		Multi-Label Classification, Label Distribution Learning, Large-Scale Output Space, Label Correlation.
	\end{IEEEkeywords}

	\section{Introduction}
	
	\IEEEPARstart{L}{earning} with ambiguity, where an instance cannot be fully described using a single concept, is a prominent subject within the machine learning community \cite{ChenJ2023,HuangW2023,BCR,LDL}. Label ambiguity is unavoidable in some applications. In image classification, a single image might contain multiple objects (e.g., a cat and a dog in the same image). Similarly, in text classification, a document might cover multiple topics (e.g., a news article discusses both technology and finance). Multi-label classification (MLC) emerges as a common approach to address the challenge of label ambiguity \cite{MLL2014,MLL2022}. MLC allows instances to be annotated with multiple labels simultaneously, offering a practical solution to the ambiguity problem. In the context of MLC, each class label is treated as a logical indicator, where 1 denotes relevance to the instance, and 0 signifies irrelevance. Such labels, represented by 1 or 0, are commonly referred to as \textit{logical labels}. Over the years, various MLC techniques have found widespread application across diverse domains, including document classification \cite{ZongD2023}, image recognition \cite{ZhangJ2023}, video concept detection \cite{LoH2011}, temporal action detection \cite{GaoZ2023}, fraud detection \cite{Wang2020}, among others.
	
	In real-world MLC problems, exploring the correlations among different labels is a crucial research area. Label correlation provides more accurate label information compared to the original logical labels, making it a valuable aspect to enhance the learning capabilities of MLC models. Numerous research efforts focus on uncovering label correlation information to enhance the learning capabilities of MLC models, such as classifier chain-based approaches \cite{AdaBoost.C2,GerychW2021,CC}, sequence-based approaches \cite{SGM}, embedding-based approaches \cite{CLIF,C2AE,DELA}, label distribution learning (LDL)-based approaches \cite{FLEM,LEMLL,SMILE}, etc. Given the importance of effectively mining underlying information from different labels, the exploration of label correlation is widely undertaken in various real-world multi-label tasks \cite{SuX2023,CorNet,WangF2023}.
	
	\begin{figure} [t]
		\centering 
		\resizebox{\columnwidth}{!}{
			\subfloat[labels: \textit{sky}, \textit{cloud}, \textit{plant}]{
				\includegraphics[width=0.47\columnwidth]{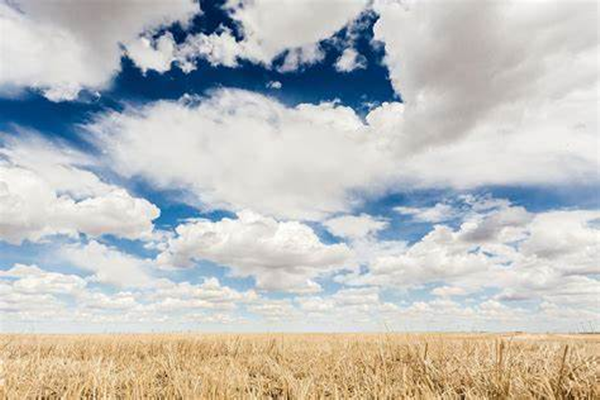}
				\label{fig:sky+cloud}			
			}
			\subfloat[labels: \textit{sky}, \textit{plant}]{
				\includegraphics[width=0.47\columnwidth]{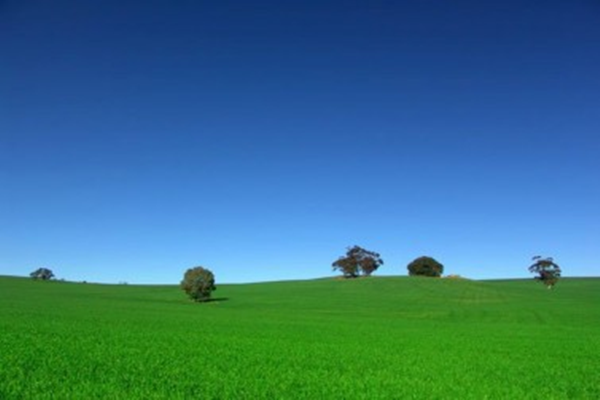}
				\label{fig:sky-cloud}
			}
		}
		\caption{An illustration of exemplar images and their corresponding labels. The correlation of ``\textit{sky}'' and ``\textit{cloud}'' are asymmetric. Specifically, if ``\textit{cloud}'' appears in an instance, then ``\textit{sky}'' also appears; while if ``\textit{sky}'' appears, ``\textit{cloud}'' may not necessarily appear.
		} 
		\label{fig:sky}
	\end{figure}

	However, existing MLC methods grapple with two primary challenges. Firstly, a prevalent assumption in many of these methods is that label correlations within each label pair are symmetric\cite{ZongD2023,C2AE,FLEM}, which often does not hold in real-world scenarios. For instance, Fig. \ref{fig:sky} illustrates two images which are annotated with labels ``\textit{sky}'', ``\textit{cloud}'', ``\textit{plant}'' and ``\textit{sky}'', ``\textit{plant}'', respectively. Notably, the correlation between ``\textit{sky}'' and ``\textit{cloud}'' is asymmetrical. Specifically, if ``\textit{cloud}'' appears in an instance, ``\textit{sky}'' also appears; nevertheless, if ``\textit{sky}'' appears, ``\textit{cloud}'' may not necessarily be present. Recognizing and accounting for such asymmetric correlations between labels can provide more accurate information for MLC, necessitating a shift away from the assumption of symmetric label correlations \cite{HuangS2012,BaoJ2022}. Secondly, many existing methods tailor their learning processes based on the number of labels \cite{CC,AdaBoost.C2,SGM}, leading to computational complexities that become bottlenecks when scaling up to large-scale output space MLC tasks. These tasks introduce new computational and statistical challenges, prompting the need for more scalable and efficient MLC methodologies.
	
	To address these challenges, we propose Scalable Label Distribution Learning (SLDL) for multi-label classification. SLDL characterizes distinct labels through latent distributions with asymmetric label correlations which are independent of the number of labels. Specifically, SLDL first transforms labels into continuous distributions within a low-dimensional Gaussian embedding space. To effectively capture the accurate correlation among different labels, it employs an asymmetric metric to establish label correlations in the Gaussian embedding space. Subsequently, it learns a mapping from the feature space to the latent space, where computational complexity is no longer tied to the number of labels. Finally, SLDL utilizes a nearest-neighbor-based strategy to decode the latent representations and generate the ultimate predictions. This approach significantly enhances both the scalability and classification performance of the model, offering a promising solution to the computational challenges posed by multi-label classification tasks with large-scale output space.
	
	Our contributions are as follows:
	\begin{itemize}
		\item We introduce Scalable Label Distribution Learning (SLDL) as a scalable and effective solution for multi-label classification tasks with large-scale output space.
		\item We articulate different labels within a latent low-dimensional Gaussian embedding space and propose an asymmetric metric to effectively capture the accurate correlation among different labels.
		\item We induce an approach combined with a simple yet effective objective function and the L-BFGS optimization algorithm to learn the mapping from the feature space to the latent embedding space, where the computational complexity is no longer related to the number of labels.
		\item We conduct comprehensive analyses of the proposed SLDL method, demonstrating its superior performance in addressing multi-label classification tasks with large-scale output space both theoretically and practically.
	\end{itemize}
	
	The subsequent sections of the paper are structured as follows. Firstly, Section \ref{sec:relatedwork} provides a concise review and discussion of related work. Secondly, Sections \ref{sec:method} delve into the technical details and theoretical analysis of Scalable Label Distribution Learning (SLDL). Following this, Section \ref{sec:experiments} presents the results of comparative experiments. Lastly, Section \ref{sec:conclusions} draws conclusions about SLDL.

	\section{Related Work} \label{sec:relatedwork}
	
	\subsection{Multi-Label Classification}
	
	In recent years, the field of multi-label classification (MLC) has witnessed significant attention \cite{VCLDL,PACA}. Traditional multi-label classification approaches fall into three main categories based on label correlation order: first-order, second-order, and high-order approaches \cite{MLL2014}. First-order methods \cite{BR,ZhangM2018,Bonsai,Slice} treat multi-label classification as independent binary classification tasks, neglecting potential information sharing among labels and overlooking the mutual benefits of knowledge gained from one label for learning other labels. Second-order approaches \cite{Rank-SVM,CLR} focus on pairwise label correlations, emphasizing differences between relevant and irrelevant labels without fully leveraging relationships among multiple labels. High-order methods \cite{SLEEC,AnnexML,RankAE,GLaS,EXMLDS} extend beyond pairwise correlations to consider relationships among label subsets or all class labels, aiming for a more comprehensive label space representation. However, a common limitation is their assumption of equal label importance.
	
	To address this limitation, researchers have explored assigning different weights to labels based on class frequencies. For instance, \cite{DB} introduces the distribution-balanced loss to rebalance weights and alleviate the over-suppression of negative labels. An alternative is the use of an asymmetric loss, as seen in \cite{RidnikT2021}, employing different $\gamma$ values to weight positive and negative samples in focal loss \cite{focal}. Additionally, \cite{HuangY2021} combines negative-tolerant regularization (NTR) \cite{DB} and class-balanced focal loss (CB) \cite{CB} to create a novel loss function called CB-NTR. Balanced softmax \cite{balancedsoftmax} transforms the multi-label classification loss into comparisons between relevant and irrelevant label scores to balance label importance. However, these methods rely on manually designed rules and lack the ability to dynamically adjust label importance based on specific instances. Moreover, they often overlook correlations among different labels, which are valuable for accurate predictions and should be fully leveraged.
	
	Regarding large-scale output space multi-label classification tasks, existing approaches can be categorized into four types: 1) One-vs-all, 2) Tree-based, 3) Embedding-based, and 4) Deep learning-based methods. One-vs-all methods \cite{PD-Sparse,DiSMEC} employ a separate binary classifier for each label when classifying a new instance. Tree-based methods \cite{FastXML,Bonsai} adopt decision tree principles. Embedding-based approaches \cite{SLEEC,DXML,GLaS,AnnexML} aim to reduce the effective number of labels by projecting label vectors into a lower-dimensional space. Deep learning-based methods \cite{xmlcnn,attentionXML} incorporate the latest deep learning technologies into large-scale output space classification. It should be noticed that most existing multi-label classification methods link the learning process to the number of labels, which seriously affects their scalability. Simultaneously, these methods often neglect to capitalize on asymmetric correlations among different labels, which restricts the performance of classification models.
	
	\begin{figure*}[t]
		\centering 
		\includegraphics[width=2\columnwidth]{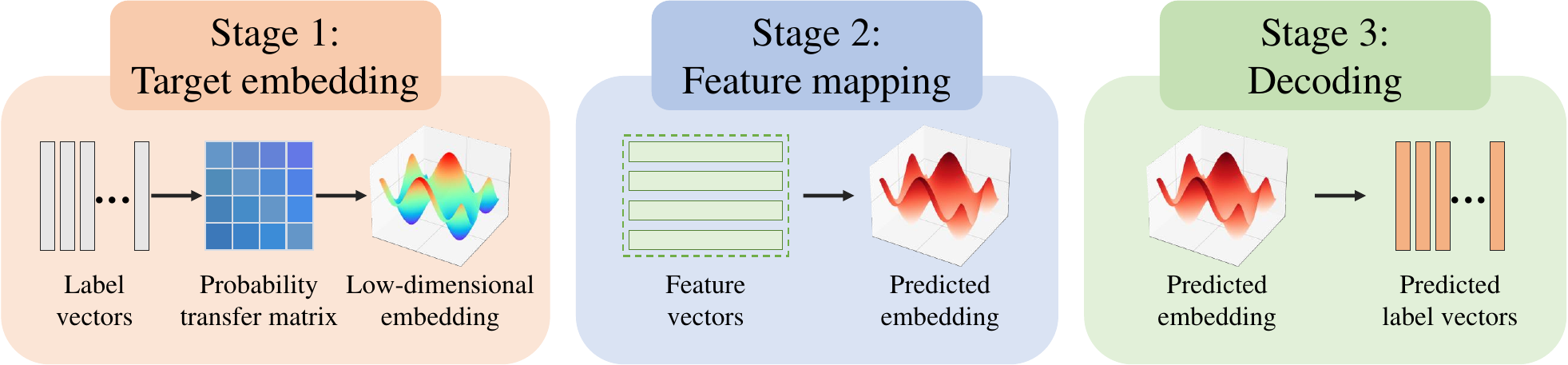}
		\caption{The schematic diagram of SLDL. The whole process of SLDL can be divided into three stages: (1) target embedding: transform the label vectors into low-dimensional embedding vectors, where the dimension of the target space is reduced and	the asymmetric label correlations are constructed; (2) feature mapping: learn a mapping function from feature vectors to embedding vectors, where the computational complexity is no longer related to the number of labels; (3) decoding: map the target embedding vector to the predicted label vector.}
		\label{fig:model}
	\end{figure*}
	
	\subsection{Label Distribution Learning}
	
	Label Distribution Learning (LDL) has emerged as a promising paradigm for uncovering correlations among different labels, assigning a label distribution to an instance and directly learning a mapping from instance to label distribution \cite{LDL,ILDL,GLDL}. LDL has demonstrated success in various real-world applications, including facial landmark detection \cite{Su2019}, age estimation \cite{Gao2018}, head pose estimation \cite{Geng2014}, zero-shot learning \cite{Huo2017}, and emotion analysis from texts \cite{Zhou2016}. The field has seen the development of specialized LDL algorithms such as SA-IIS and SA-BFGS \cite{LDL} which integrate the maximum entropy model \cite{BergerA1996} and K-L divergence using the improved iterative scaling strategy \cite{IIS} and the BFGS algorithm respectively for optimization. SCE-LDL \cite{SCE-LDL} introduces sparsity constraints into the objective function to enhance the model. Novel ideas have also been incorporated into LDL, including viewing it as a regression problem (LDSVR \cite{LDSVR}), using differentiable decision trees (LDL Forests \cite{LDLF}), and developing a deep neural network-based model called Deep LDL \cite{DLDL}.

	Efforts to exploit label correlations within LDL can be categorized into three main groups: 1) global label correlation, 2) local label correlation, and 3) both global and local label correlations. For global label correlation, methods including EDL \cite{EDL}, LDLLC \cite{LDLLC}, IncomLDL \cite{IncomLDL}, LALOT \cite{LALOT}, and LDLSF \cite{LDLSF} adopt various strategies to incorporate label correlations into their optimization objectives. Addressing local label correlation, EDL-LRL \cite{EDL-LRL}, GD-LDL-SCL \cite{GD-LDL-SCL}, and Adam-LDL-SCL \cite{Adam-LDL-SCL} focus on capturing local label correlations within clusters of samples. For both global and local label correlations, LDL-LCLR \cite{LDL-LCLR} captures global correlations with a low-rank matrix and updates it on different clusters to explore local label correlation. CLDL \cite{CLDL} leverages the continuous nature of labels to extract high-order correlations among different labels. 
	
	Some recent studies try to leverage label distribution to guide the training of multi-label classification models. \cite{LDR} studies a family of loss functions named label-distributionally robust (LDR) losses that are formulated from a distributionally robust optimization (DRO) perspective, where the uncertainty in the given label information is modeled and captured by taking the worst case of distributional weights. \cite{WangS2020} proposes an adversarial learning framework to enforce similarity between the joint distribution of the ground-truth multi-labels and the predicted multiple labels. By adversarial learning, the joint label distribution of the predicted multi-labels converges to the joint distribution inherent in the ground-truth multi-labels, and thus boosts the performance of multi-label classification. \cite{FLEM} proposes a novel approach that effectively integrates the learning process of label distribution and the training process of multi-label classification. However, It is essential to highlight that the computational complexity of LDL is generally higher than traditional learning frameworks, particularly when extending logical labels to label distribution vectors. The complexity significantly increases with a large number of labels, rendering these algorithms impractical. Moreover, existing methods often neglect to leverage asymmetric correlations among different labels, which further limits the performance of the model.
	
	In the next section, we will introduce a novel approach for multi-label classification tasks with large-scale output spaces named Scalable Label Distribution Learning (SLDL). SLDL differs substantially from existing MLC and LDL methods. In SLDL, labels are considered as a continuous distribution in the latent low-dimensional space. Learning in this continuous label distribution space allows for the effective capture of asymmetric label correlations. Moreover, the computational complexity is no longer tied to the number of labels, substantially improving the scalability of the model.

	\section{The SLDL Method}  \label{sec:method}
	
	\subsection{Problem Setting}

	Firstly, the primary notations employed in this paper are defined as follows. Let $\mathcal{D}=\{ \left(\boldsymbol{x}_1, \boldsymbol{y}_1\right), \cdots,  \left(\boldsymbol{x}_n, \boldsymbol{y}_n\right)\}$ represent the given training dataset, where $\boldsymbol{x}_i \in \mathcal{X} \subset \mathbb{R}^q$ denotes the input feature vector, and $\boldsymbol{y}_i \in \mathcal{Y} = \{ 0, 1\}^{c}$ represents the corresponding label vector, with $c$ denoting the number of possible labels. Define $\boldsymbol{X}=\left[ \boldsymbol{x}_1, \cdots, \boldsymbol{x}_n \right]$ as the feature matrix and $\boldsymbol{Y}=\left[ \boldsymbol{y}_1, \cdots, \boldsymbol{y}_n \right]$ as the label matrix. For an instance $\boldsymbol{x} \in \mathbb{R}^q$, the objective is to train a multi-label classifier $h\left(\boldsymbol{x}\right):\mathbb{R}^q \rightarrow \{ 0, 1\}^{c}$ to predict the proper labels for $\boldsymbol{x}$. The summary of the mainly used notations is listed in Table \ref{table:symbols}.

	\subsection{Overview}
	
	In various practical applications, the dimensionality of the output space can be exceedingly high. For instance, there are a vast number of categories on Wikipedia and one might wish to develop a classifier that annotates a new article with the most relevant subset of these categories. This results in a massive output space for the classifier, posing significant challenges to the learning system \cite{VCLDL,DXML}. To handle this challenge, SLDL incorporates the target embedding method. Concretely, the target embedding method represents logical labels as continuous, dense vectors in a lower-dimensional space, so as to reduce the label dimensionality and capture the correlations between labels \cite{GLaS}.
	As illustrated in Fig. \ref{fig:model}, SLDL first transforms $\boldsymbol{y}_i$ into an embedding vector ${\boldsymbol{z}}_i\in\mathcal{Z}=\mathbb{R}^{\hat{c}}$, where $\hat{c}\ll c$. Subsequently, SLDL learns the mapping from the instance to the low-dimensional embedding vector. During the prediction phase, the output score is decoded from the predicted embedding vector using a nearest-neighbor-based strategy. This methodology substantially reduces the resource consumption of the algorithm while effectively exploring the asymmetric correlation among different labels. In the remainder of this section, we will present the implementation of the SLDL method.
	
	\subsection{Asymmetric-Correlational Gaussian Embedding}
	
	In SLDL, the extraction of asymmetric correlations is facilitated through the utilization of Gaussian embedding whose capacity offers a flexible and expressive representation of label distributions. To learn a reasonable transition relationship between different labels, we construct a probability transfer matrix. Then, each label is treated as a multivariate Gaussian distribution, and an asymmetric metric is employed to establish the asymmetric correlation between each pair of labels. Through learning in the latent Gaussian embedding space, the continuous distributions that contain the asymmetric correlation among different labels can be effectively learned, which enhances the ability of the model to describe and exploit the nuanced asymmetric correlations inherent in real-world multi-label classification scenarios.
	
	\begin{table}[t]
		\centering
		\setlength{\tabcolsep}{4mm}
		\caption{Summary of the Mainly Used Notations.}
		\label{table:symbols}
		\begin{tabular}{ll}
			\hline
			{Symbol} & {Definition} \\
			\hline
			{$\mathcal{D}$} & {Training dataset} \\
			{$\mathcal{X}$} & {Feature space} \\
			{$\mathcal{Y}$} & {Label space} \\
			{$\boldsymbol{x}_i$} & {The $i$-th feature vector} \\
			{$\boldsymbol{y}_i$} & {The $i$-th label vector} \\
			{$\boldsymbol{z}_i$} & {The $i$-th embedding vector} \\
			{$\boldsymbol{X}$} & {The feature matrix of training instances} \\
			{$\boldsymbol{Y}$} & {The label matrix for training instances} \\
			{$\boldsymbol{Z}$} & {The embedding matrix for training instances} \\
			{$\boldsymbol{A}$} & {The label correlation matrix} \\
			{$\boldsymbol{P}$} & {The probability transfer matrix} \\
			{$\boldsymbol{W}$} & {The model parameter} \\
			{$q$} & {The number of feature dimensions} \\
			{$n$} & {The number of training instances} \\
			{$c$} & {The number of labels} \\
			{$\hat{c}$} & {The number of embedding dimension} \\
			{$h$} & {The multi-label classifier} \\
			{$\gamma$} & {The discount factor} \\
			\hline
		\end{tabular}%
	\end{table}%

	\subsubsection{Probability Transfer Matrix Construction}

	We first construct a label correlation matrix $\boldsymbol{A} \in \mathbb{R}^{c\times c}$, in which the corresponding element is 1 if the two labels ever co-exist with each other in any sample:
	\begin{equation}
		{\boldsymbol{A}_{ij}}=\left\{ \begin{aligned}
			& 1, \text{if the label $i$ and $j$ ever co-exist,} \\ 
			& 0, \text{otherwise.} \\ 
		\end{aligned} \right.
		\label{eq:label-correlation-A}
	\end{equation}
	
	For each row, we normalize the label correlation vector:
	\begin{equation}
		{\widehat{\boldsymbol{A}}_{ij}}=\frac{{\boldsymbol{A}_{ij}}}{\sum\nolimits_{l=1}^{c}{{\boldsymbol{A}_{il}}}}.
		\label{eq:label-correlation-A-normalized}
	\end{equation}
	
	In order to effectively establish the correlation between different labels, we leverage random walk simulation and aggregate the accumulated values. Denote ${\boldsymbol{P} }^{(i)}$ as the probability transfer matrix at the $i$-th step and ${\gamma }^{(i)}$ as the discount factor balancing the short-term and long-term transfer probabilities. The parameters of random walk simulation are initialized as
	\begin{equation}
		{\boldsymbol{P}^{(0)}}=\widehat{\boldsymbol{A}},
		\label{eq:init-P}
	\end{equation}
	\begin{equation}
		{{\gamma }^{(0)}}=1.
		\label{eq:init-gamma}
	\end{equation}
	
	Subsequently, we perform the random walk processes several times and obtain the accumulated probability transfer matrix ${\boldsymbol{P}^{(total)}}$:
	\begin{equation}
		\left\{ \begin{aligned}
			& {\boldsymbol{P}^{(i)}}\leftarrow {\boldsymbol{P}^{(i-1)}}\widehat{\boldsymbol{A}}, \\ 
			& {{\gamma }^{(i)}}\leftarrow {{\gamma }^{(i-1)}}/2, \\ 
			& {\boldsymbol{P}^{(total)}}\leftarrow {\boldsymbol{P}^{(total)}}+{{\gamma }^{(i)}}\cdot {\boldsymbol{P}^{(i)}}. \\ 
		\end{aligned} \right.
		\label{eq:random-walk}
	\end{equation}
	
	After that, we standardize the accumulated probability transfer matrix and obtain the standardized matrix ${\boldsymbol{\widehat{P}}^{(total)}}$ which can effectively reflect the probability transfer relation among different labels:
	\begin{equation}
		{\boldsymbol{\widehat{P}}_{ij}^{(total)}}=\frac{\boldsymbol{P}_{ij}^{(total)}}{\sum\nolimits_{l=1}^{c}{\boldsymbol{P}_{il}^{(total)}}}.
		\label{eq:standardized-probability-transfer-matrix}
	\end{equation}

	\subsubsection{Learning Asymmetric-Correlational Gaussian Embedding}
	
	After obtaining the standardized probability transfer matrix, we can learn the asymmetric-correlational Gaussian embedding. Inspired by \cite{DELA,VCLDL}, we assume each label as a multivariate Gaussian distribution with a diagonal covariance structure $\mathcal{N}\left( \boldsymbol{\mu}, \boldsymbol{\sigma}^2\boldsymbol{I}  \right)$, where $\boldsymbol{I}$ is the identity matrix. We follow a multi-round optimization to generate a reasonable distribution for each label. In each round, we perform the optimization for each label. Concretely, for label $i$, we consider label $i$ as an anchor, and then rank the remaining labels according to the $i$-th row in ${\boldsymbol{\widehat{P}}^{(total)}}$, choosing the $j$-th label of ranked remaining labels as positive label and the $(j+1)$-th label as negative label. Then the objective function can be formulated as:
	\begin{equation}
		\min {{\left[ \texttt{KL}\left( {\mathcal{N}_{i}}\left\| {\mathcal{N}_{j}} \right. \right)-\texttt{KL}\left( {\mathcal{N}_{i}}\left\| {\mathcal{N}_{j+1}} \right. \right)+\tau  \right]}_{+}},
		\label{eq:loss-embedding}
	\end{equation}
	where 
	\begin{equation}
		\texttt{KL}\left( {{\mathcal{N}}_{i}}\left\| {{\mathcal{N}}_{j}} \right. \right)=-\frac{1}{2}\sum\limits_{k=1}^{K}{\left[ \log \upsilon _{i,j}^{(k)}-\upsilon _{i,j}^{(k)}-\tau _{i,j}^{(k)}+1 \right]},
	\end{equation}
	denotes the Kullback-Leibler (KL) divergence which establish asymmetric correlation between $\mathcal{N}\left( {{\boldsymbol{\mu }}_{i}},\boldsymbol{\sigma }_{i}^{2}\boldsymbol{I} \right)$ and $\mathcal{N}\left( {{\boldsymbol{\mu }}_{j}},\boldsymbol{\sigma }_{j}^{2}\boldsymbol{I} \right)$ corresponding to labels $i$ and $j$ respectively, $\tau$ is the threshold, $\upsilon _{i,j}^{(k)}\text{=}\frac{\boldsymbol{\sigma }_{i}^{(k)2}}{\boldsymbol{\sigma }_{j}^{(k)2}}$, $\tau _{i,j}^{(k)}=\frac{{{\left( \boldsymbol{\mu }_{i}^{(k)}-\boldsymbol{\mu }_{j}^{(k)} \right)}^{2}}}{\boldsymbol{\sigma }_{j}^{(k)2}}$, $K$ is the dimension of the latent space and $(\cdot)^{(k)}$ denotes the $k$-th element.	We optimize the objective function for several rounds and obtain the Gaussian embedding for each label. Then, for a specific instance $i$, we leverage the learned Gaussian embedding and its label vector to obtain its embedding vector:
	\begin{equation}
		{\boldsymbol{z}_{i}}=\sum\limits_{l=1}^{c}{{\boldsymbol{y}_{i}^{(l)}}}{\boldsymbol{\mu }_{l}},
		\label{eq:z}
	\end{equation}
	where ${\boldsymbol{y}_{i}^{(l)}}$ denotes the $l$-th element of $\boldsymbol{y}_{i}$. The pseudocode of the target embedding algorithm is given in algorithm \ref{algorithm:embedding}.

	\begin{algorithm}[t]
		\caption{SLDL: Target Embedding Algorithm}
		\label{algorithm:embedding}
		\begin{algorithmic}
			\State {\bfseries Required:} Training dataset: ${\mathcal{D}=\left\{\left(\boldsymbol{x}_i,\boldsymbol{y}_i\right)\right\}}_{i=1}^N$
			\State {\bfseries Required:} Embedding dimensionality: $\hat{c}$
			\State {\bfseries 1:} Construct the normalized label correlation matrix $\boldsymbol{A}$ according to Eqs.(\ref{eq:label-correlation-A}) and (\ref{eq:label-correlation-A-normalized})
			\State {\bfseries 2:} Initialize the parameters of random walk simulation according to Eqs.(\ref{eq:init-P}) and (\ref{eq:init-gamma})
			\State {\bfseries 3:} Performance the random walk processes and obtain the standardized probability transfer matrix according to Eqs.(\ref{eq:random-walk}) and (\ref{eq:standardized-probability-transfer-matrix})
			\State {\bfseries 4:} Initialize the multivariate Gaussian distribution $\mathcal{N}$ for each label
			\State {\bfseries 5:} Optimize the latent distribution for each label according to Eq.(\ref{eq:loss-embedding})
			\State {\bfseries 6:} Obtain the target embedding vectors of training samples according to Eq.(\ref{eq:z})
			\State {\bfseries Output:} Target embedding matrix $\boldsymbol{Z}$ of training samples
		\end{algorithmic}
	\end{algorithm}

	\subsection{Model Training and Prediction}	
	
	\subsubsection{Model Training}
	
	The learning process of the model is to learn a mapping function from the feature space $\mathcal{X}$ to the embedding space $\mathcal{Z}$. After obtaining the embedding vector of each instance, we form the embedding matrix $\boldsymbol{Z}$ for all instances. Then we optimize the following simple yet effective objective function to obtain the model parameter:
	\begin{equation}
		\mathcal{L}\left(\boldsymbol{W}\right)=\left\| \boldsymbol{Z}-\boldsymbol{X}\boldsymbol{W} \right\|_{F}^{2}+\alpha \left\| \boldsymbol{W} \right\|_{F}^{2},
		\label{eq:loss}
	\end{equation}
	where $\boldsymbol{W}$ represents the model parameter, $\alpha$ is the balancing factor. We utilize the Limited-memory Broyde-Fletcher-Goldfarb-Shanno (L-BFGS) algorithm \cite{L-BFGS} to minimize the function $\mathcal{L}\left(\boldsymbol{W}\right)$. L-BFGS approximates the inverse Hessian matrix using an iteratively updated matrix instead of storing the full matrix, making it suitable for large-scale optimizations. Consider the second-order Taylor series of $\mathcal{L'}\left(\boldsymbol{W}\right) = -\mathcal{L}\left(\boldsymbol{W}\right)$ at the current estimate of the parameter $\boldsymbol{W}^{\left(t\right)}$:
	\begin{equation}
		\begin{aligned}
			{\mathcal{L}}'\left( {{\boldsymbol{W}}^{\left( t+1 \right)}} \right)  \approx & {\mathcal{L}}'\left( {{\boldsymbol{W}}^{\left( t \right)}} \right)+\nabla {\mathcal{L}}'{{\left( {{\boldsymbol{W}}^{\left( t+1 \right)}} \right)}^{T}}\boldsymbol{\Delta}  \\ 
			& +\frac{1}{2}{{\Delta }^{T}}\mathcal{H}\left( {{\boldsymbol{W}}^{\left( t \right)}} \right)\boldsymbol{\Delta},
		\end{aligned}
		\label{eq:Hessian}
	\end{equation}
	where $\boldsymbol{\Delta} ={{\boldsymbol{W}}^{\left( t+1 \right)}}-{{\boldsymbol{W}}^{\left( t \right)}}$ is the update step, $\nabla {{\mathcal{L}}^{\prime }}\left( {{\boldsymbol{W}}^{\left( t \right)}} \right)$ and $\mathcal{H}\left( {{\boldsymbol{W}}^{\left( t \right)}} \right)$ are the gradient and Hessian matrix of ${{\mathcal{L}}^{\prime }}\left( {{\boldsymbol{W}}^{\left( t \right)}} \right)$ at ${{\boldsymbol{W}}^{\left( t \right)}}$, respectively. Then the minimizer of Eq.(\ref{eq:Hessian}) is
	\begin{equation}
		{{\boldsymbol{\Delta }}^{\left( t \right)}}=-{{\mathcal{H}}^{-1}}\left( {{\boldsymbol{W}}^{\left( t \right)}} \right)\nabla {{\mathcal{L}}^{\prime }}\left( {{\boldsymbol{W}}^{\left( t \right)}} \right).
	\end{equation}

	\begin{algorithm}[t]
		\caption{SLDL: Training Algorithm}
		\label{algorithm:train}
		\begin{algorithmic}
			\State {\bfseries Required:} Training dataset: ${\mathcal{D}=\left\{\left(\boldsymbol{x}_i,\boldsymbol{y}_i\right)\right\}}_{i=1}^N$
			\State {\bfseries Required:} Target embedding matrix $\boldsymbol{Z}$ of training samples
			\State {\bfseries 1:} Normalize $\boldsymbol{x}_\ast$ using L2-normalization
			\State {\bfseries 2:} Initialize the model parameter $\boldsymbol{W}$
			\State {\bfseries 3:} Optimize the model parameter $\boldsymbol{W}$ according to Eq. (\ref{eq:loss})
			\State {\bfseries Output:} Model parameter $\boldsymbol{W}$
		\end{algorithmic}
	\end{algorithm}

	\begin{algorithm}[t]
		\caption{SLDL: Test Algorithm}
		\label{algorithm:test}
		\begin{algorithmic}
			\State {\bfseries Required:} Training dataset: ${\mathcal{D}=\left\{\left(\boldsymbol{x}_i,\boldsymbol{y}_i\right)\right\}}_{i=1}^N$
			\State {\bfseries Required:} Target embedding matrix $\boldsymbol{Z}$ of training samples
			\State {\bfseries Required:} testing sample $\boldsymbol{x}_\ast$
			\State {\bfseries Required:} number of neighbors for predicting: $k$
			\State {\bfseries 1:} Normalize $\boldsymbol{x}_\ast$ and $\boldsymbol{x}_\ast$ using L2-normalization
			\State {\bfseries 2:} Calculate the transformed embedding matrix $\boldsymbol{\widehat{Z}}$ according to Eq.(\ref{eq:Z-hat})
			\State {\bfseries 3:} Obtain the corresponding target vector in the embedding space ${{\boldsymbol{\hat{z}}}}_\ast$ for $\boldsymbol{x}_\ast$ according to Eq.(\ref{eq:z-ast})
			\State {\bfseries 4:} Form $\Xi$ using $k$ samples with the smallest cosine distance to $\ {\hat{\boldsymbol{z}}}_\ast$ in $\boldsymbol{\widehat{Z}}$
			\State {\bfseries 5:} Obtain output labels according to Eq.(\ref{eq:decoder})
			\State {\bfseries Output:} Predicted labels for the testing sample $\boldsymbol{x}_\ast$
		\end{algorithmic}
	\end{algorithm}

	The line search Newton method utilizes ${{\boldsymbol{\Delta }}^{\left( t \right)}}$ as the search direction, denoted as ${{\boldsymbol{D }}^{\left( t \right)}}={{\boldsymbol{\Delta }}^{\left( t \right)}}$, and the model parameters are updated using:
	\begin{equation}
		{{\boldsymbol{W}}^{\left( t+1 \right)}}={{\boldsymbol{W}}^{\left( t \right)}}+{{\beta }^{\left( t \right)}}{{\boldsymbol{D}}^{\left( t \right)}},
	\end{equation}
	where the step length ${{\beta }^{\left( t \right)}}$ is determined through a line search procedure to satisfy the strong Wolfe conditions \cite{JorgeN2006}:
	\begin{equation}
		\begin{aligned}
			{{\mathcal{L}}^{\prime }}\left( {{\boldsymbol{W}}^{\left( t \right)}}+{{\beta }^{\left( t \right)}}{{\boldsymbol{D}}^{\left( t \right)}} \right)\le & {{\mathcal{L}}^{\prime }}\left( {{\boldsymbol{W}}^{\left( t \right)}} \right) \\  & + {{c}_{1}}{{\beta }^{\left( t \right)}}\nabla {{\mathcal{L}}^{\prime }}{{\left( {{\boldsymbol{W}}^{\left( t \right)}} \right)}^{T}}{{\boldsymbol{D}}^{\left( t \right)}},
		\end{aligned}
	\end{equation}
	\begin{equation}
		\left| \nabla {{\mathcal{L}}^{\prime }}\left( {{\boldsymbol{W}}^{\left( t \right)}}+{{\beta }^{\left( t \right)}}{{\boldsymbol{D}}^{\left( t \right)}} \right) \right|\le {{c}_{2}}\left| \nabla {{\mathcal{L}}^{\prime }}{{\left( {{\boldsymbol{W}}^{\left( t \right)}} \right)}^{T}}{{\boldsymbol{D}}^{\left( t \right)}} \right|,
	\end{equation}
	where $0<c_1<c_2<1$. The L-BFGS algorithm aims to avoid the explicit calculation of $\mathcal{H}^{-1}\left({{\boldsymbol{W}}^{\left( t \right)}}\right)$ by approximating it with an iteratively updated matrix $\boldsymbol{B}$, i.e.,
	\begin{equation}
		\begin{aligned}
			&{{\boldsymbol{B}}^{\left( t+1 \right)}}\\
			= & \left( I-{{\boldsymbol{\Lambda}}^{\left( t \right)}}{{\boldsymbol{S}}^{\left( t \right)}}{{\left( {{\boldsymbol{U}}^{\left( t \right)}} \right)}^{T}} \right){{\boldsymbol{B}}^{\left( t \right)}}\left( I-{{\boldsymbol{\Lambda} }^{\left( t \right)}}{{\boldsymbol{S}}^{\left( t \right)}}{{\left( {{\boldsymbol{U}}^{\left( t \right)}} \right)}^{T}} \right)\\  &+ {{\boldsymbol{\Lambda} }^{\left( t \right)}}{{\boldsymbol{S}}^{\left( t \right)}}{{\left( {{\boldsymbol{U}}^{\left( t \right)}} \right)}^{T}}, \\ 
		\end{aligned}
	\end{equation}
	where $\boldsymbol{I}$ is the identity matrix, $\boldsymbol{S}={{\boldsymbol{W}}^{\left( t+1 \right)}}-{{\boldsymbol{W}}^{\left( t \right)}}$, ${\boldsymbol{U}^{\left( t \right)}}=\nabla {{\mathcal{L}}^{\prime }}\left( {{\boldsymbol{W}}^{\left( t+1 \right)}} \right)-\nabla {{\mathcal{L}}^{\prime }}\left( {{\boldsymbol{W}}^{\left( t \right)}} \right)$, and ${{\boldsymbol{\Lambda }}^{\left( t \right)}}=\frac{1}{{{\boldsymbol{S}}^{\left( t \right)}}{{\boldsymbol{U}}^{\left( t \right)}}}$.
	
	We use $m$ to denote the learning iterations of L-BFGS and $N$ to denote the number of samples, then the computational complexity of the training process of SLDL is $\mathcal{O}(mNq\hat{c})$, which is no longer associated with the number of labels.

	\subsubsection{Model Prediction}
	
	In the prediction process, SLDL first calculates the embedding matrix $\boldsymbol{\widehat{Z}}$ of training samples transformed by $\boldsymbol{W}$:
	\begin{equation}
		\boldsymbol{\widehat{Z}} = \boldsymbol{X}\boldsymbol{W}.
		\label{eq:Z-hat}
	\end{equation}
	
	For a new sample $\boldsymbol{x}_\ast$, SLDL first calculates its corresponding target vector in the embedding space ${{\boldsymbol{\hat{z}}}}_\ast$:
	\begin{equation}
		{{\boldsymbol{\hat{z}}}_\ast}={{\boldsymbol{x}}_\ast}\boldsymbol{W}.
		\label{eq:z-ast}
	\end{equation}
	After that, a decoder is employed to map the target embedding vector to the predicted label vector. For simplicity, SLDL adopts a nearest-neighbor-based decoding method. The cosine distance of the new sample ${{\boldsymbol{\hat{z}}}}_\ast$ to the $i$-th sample ${\boldsymbol{\hat{z}}}_i$ in ${{\boldsymbol{\widehat{Z}}}}$ is calculated by 
	\begin{equation}
		d_i=
		{1-\frac{{{\boldsymbol{\hat{z}}}_\ast}\cdot {{\boldsymbol{\hat{z}}}_{i}}}{\left\| {{\boldsymbol{\hat{z}}}_\ast} \right\|\left\| {{\boldsymbol{\hat{z}}}_{i}} \right\|}}
		.\end{equation}
	Subsequently, by utilizing the reciprocal of the cosine distance as the weight, SLDL acquires the weighted sum of the original label vectors corresponding to the samples with the smallest cosine distance to ${\hat{\boldsymbol{z}}}_\ast$ in ${\boldsymbol{\widehat{Z}}}$:
	\begin{equation}
		\label{eq:decoder}
		{\boldsymbol{\hat{y}}_{*}}=\sum\nolimits_{i\in \Xi }{\frac{{\boldsymbol{y}_{i}}}{d_i}}
		,\end{equation}
	where $\Xi$ represents the index set corresponding to the samples with the smallest cosine distance to ${\hat{\boldsymbol{z}}}_\ast$ in ${\boldsymbol{\widehat{Z}}}$. Finally, SLDL selects the labels with top-$k$ scores for model prediction. The pseudocode of the training algorithm and test algorithm is given in algorithm \ref{algorithm:train} and algorithm \ref{algorithm:test}, respectively.

	\begin{table*}[t]
		\centering
		\setlength{\tabcolsep}{5mm}
		\caption{Statistics of multi-label classification benchmark datasets.}
		\label{table:datasets}
		\begin{tabular}{llccc}
			\toprule
			No.   & Dataset & Number of samples & Number of feature dimensions & Number of labels \\
			\midrule
			1     & cal500 & 502   & 68    & 174 \\
			2     & corel16k-s1 & 13766 & 500   & 153 \\
			3     & corel16k-s2 & 13761 & 500   & 164 \\
			4     & corel16k-s3 & 13760 & 500   & 154 \\
			5     & CUB   & 11788 & 128   & 312 \\
			6     & delicious & 16105 & 500   & 983 \\
			7     & eurlex-dc & 19348 & 500   & 412 \\
			8     & eurlex-sm & 19348 & 500   & 201 \\
			9     & espgame & 20770 & 1000  & 268 \\
			10    & stackex-chemistry & 6961  & 540   & 175 \\
			11    & stackex-chess & 1675  & 585   & 227 \\
			12    & stackex-coffee & 225   & 1763  & 123 \\
			13    & stackex-cooking & 10491 & 577   & 400 \\
			14    & stackex-cs & 9270  & 635   & 274 \\
			15    & stackex-philosophy & 3971  & 842   & 233 \\
			\bottomrule
		\end{tabular}%
	\end{table*}%

	\subsection{Theoretical Analysis}
	
	We study the theoretical guarantees of SLDL to reveal the effectiveness of Gaussian embedding with asymmetric metrics	below. Let $\mathcal{C}$ be the possible cost function for measuring the penalty between the ground-truth label vector $\boldsymbol{y}$ and the predicted label vector $\boldsymbol{\hat{y}}$. We assume $\mathcal{C}\left(\boldsymbol{y},\boldsymbol{\hat{y}}\right) \ge 0$ iff $\boldsymbol{y}$ and $\boldsymbol{\hat{y}}$ are the same. Then we have the following theorem:
	\begin{theorem}
		\label{thm:bound}
		For any sample $\left(\boldsymbol{x},\boldsymbol{y}\right)$, let $\boldsymbol{z}$ be the embedding vector of $\boldsymbol{y}$, $\boldsymbol{\hat{z}}$ be the predicted embedding vector, and $\boldsymbol{\tilde{z}}$ be the nearest embedding vector of $\boldsymbol{\hat{z}}$. Then the following bound holds
		\begin{equation}
			\mathcal{C}\left( \boldsymbol{y},\boldsymbol{\hat{y}} \right)\le b{{\left( \mathcal{E}\left( \boldsymbol{z},\boldsymbol{\tilde{z}} \right)-\mathcal{C}{{\left( \boldsymbol{y},\boldsymbol{\hat{y}} \right)}^{\frac{1}{2}}} \right)}^{2}}+b{\mathcal{E}\left( \boldsymbol{z},\boldsymbol{\hat{z}} \right)^{2}},
		\end{equation}
		where $\mathcal{E}\left(\cdot, \cdot\right)$ denotes Euclidean distance and $b > 1$ is a constant.
	\end{theorem}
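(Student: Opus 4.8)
The plan is to reduce the claimed inequality to a one–variable quadratic inequality in $\sqrt{\mathcal{C}(\boldsymbol{y},\boldsymbol{\hat{y}})}$, and to extract the single geometric fact I actually need — a triangle–inequality bound linking the three embedding distances — from the nearest-neighbor definition of $\boldsymbol{\tilde{z}}$. Writing $E_1=\mathcal{E}(\boldsymbol{z},\boldsymbol{\tilde{z}})$, $E_2=\mathcal{E}(\boldsymbol{z},\boldsymbol{\hat{z}})$, and $u=\mathcal{C}(\boldsymbol{y},\boldsymbol{\hat{y}})^{1/2}\ge 0$ (well defined since $\mathcal{C}\ge 0$), the target becomes $u^2\le b(E_1-u)^2+bE_2^2$, after which everything is elementary.

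First I would establish the geometric step. Since $\boldsymbol{\tilde{z}}$ is the nearest embedding to $\boldsymbol{\hat{z}}$ and $\boldsymbol{z}$ is an admissible candidate (a point I revisit below), $\mathcal{E}(\boldsymbol{\hat{z}},\boldsymbol{\tilde{z}})\le \mathcal{E}(\boldsymbol{\hat{z}},\boldsymbol{z})=E_2$. Combining this with the triangle inequality $E_1\le \mathcal{E}(\boldsymbol{z},\boldsymbol{\hat{z}})+\mathcal{E}(\boldsymbol{\hat{z}},\boldsymbol{\tilde{z}})$ yields $E_1\le 2E_2$, hence $E_1^2\le 4E_2^2$. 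This is the only quantitative ingredient the rest of the argument rests on.

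Next I would expand and rearrange. Moving $u^2$ to the right turns the claim into $(b-1)u^2-2bE_1u+b(E_1^2+E_2^2)\ge 0$, an upward-opening parabola in $u$ because $b>1$. Its minimum is nonnegative exactly when its discriminant $4b^2E_1^2-4b(b-1)(E_1^2+E_2^2)$ is nonpositive, which simplifies to the clean condition $E_1^2\le (b-1)E_2^2$. Feeding in $E_1^2\le 4E_2^2$ from the geometric step shows that any constant $b\ge 5$ makes the discriminant nonpositive, so the quadratic is nonnegative for every real $u$ — in particular for $u=\mathcal{C}(\boldsymbol{y},\boldsymbol{\hat{y}})^{1/2}$ — which is precisely the asserted bound. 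In the extremal case $E_1=2E_2$ with $b=5$ the quadratic collapses to the perfect square $(2u-5E_2)^2$, so the bound is tight and the constant cannot in general be taken below $5$, which is consistent with the stated requirement $b>1$.

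I expect the delicate point, rather than the algebra, to be the justification that $\boldsymbol{z}$ may be treated as a candidate in the nearest-neighbor comparison, i.e.\ that $\mathcal{E}(\boldsymbol{\hat{z}},\boldsymbol{\tilde{z}})\le \mathcal{E}(\boldsymbol{\hat{z}},\boldsymbol{z})$ genuinely holds: the decoder of Algorithm~\ref{algorithm:test} searches over the predicted training embeddings $\boldsymbol{X}\boldsymbol{W}$ rather than over target embeddings, so this step implicitly assumes a realizability/consistency condition placing $\boldsymbol{z}$ (or an equally close candidate) in the search set. A minor side case is the degenerate $E_2=0$, which forces $E_1=0$ and reduces the bound to $\mathcal{C}\le b\,\mathcal{C}$, trivially true. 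Worth noting is that no property of $\mathcal{C}$ beyond nonnegativity is used, since $\mathcal{C}$ enters the argument only through $u=\sqrt{\mathcal{C}}\ge 0$.
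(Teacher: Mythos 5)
Your proof is correct, and it takes a genuinely different route from the paper. The paper's own proof never derives a relation between $\mathcal{E}(\boldsymbol{z},\boldsymbol{\tilde{z}})$ and $\mathcal{E}(\boldsymbol{z},\boldsymbol{\hat{z}})$: it simply \emph{posits} the bound $\mathcal{E}(\boldsymbol{z},\boldsymbol{\tilde{z}})\le (b-1)\,\mathcal{E}(\boldsymbol{z},\boldsymbol{\hat{z}})$ as the defining property of the constant $b$ (citing prior work on nearest-neighbor decoding), replaces $\mathcal{E}(\boldsymbol{z},\boldsymbol{\hat{z}})^2$ by $\frac{1}{b-1}\mathcal{E}(\boldsymbol{z},\boldsymbol{\tilde{z}})^2$, and finishes by completing the square to exhibit $\frac{1}{b}\,\mathcal{C}(\boldsymbol{y},\boldsymbol{\hat{y}})$ as a lower bound. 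You instead derive the geometric ingredient from first principles --- the nearest-neighbor property plus the triangle inequality gives $E_1\le 2E_2$ --- and then dispatch the algebra by a discriminant analysis of the quadratic $(b-1)u^2-2bE_1u+b(E_1^2+E_2^2)$ in $u=\mathcal{C}^{1/2}$. Notably, your discriminant condition $E_1^2\le (b-1)E_2^2$ is exactly the squared form of the relation the paper assumes, so you have in effect rediscovered the paper's hypothesis and then \emph{justified} it with the explicit value $b-1=4$: where the paper's $b$ is an instance-dependent assumed constant, you obtain a universal constant $b\ge 5$ together with a tightness example $(2u-5E_2)^2$ showing $5$ cannot be improved within this argument. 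Your caveat about realizability --- that $\boldsymbol{z}$ must be an admissible candidate in the search set, whereas Algorithm~\ref{algorithm:test} searches over $\boldsymbol{X}\boldsymbol{W}$ --- is well taken, but note it applies equally to the paper, whose posited bound silently encodes the same assumption. One further point in your favor: the paper's middle inequality $\mathcal{E}(\boldsymbol{z},\boldsymbol{\hat{z}})^2\ge\frac{1}{b-1}\mathcal{E}(\boldsymbol{z},\boldsymbol{\tilde{z}})^2$ actually requires the squared relation $E_1^2\le(b-1)E_2^2$, while the unsquared bound it states only yields $E_2^2\ge E_1^2/(b-1)^2$; by working with squared quantities throughout, your version sidesteps this minor exponent slip in the paper's proof.
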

	\begin{proof}
		Since $\boldsymbol{\tilde{z}}$ is the nearest neighbor of $\boldsymbol{\hat{z}}$, we define $\mathcal{E}\left( \boldsymbol{z},\boldsymbol{\tilde{z}} \right) \le \left(b-1\right) \mathcal{E}\left( \boldsymbol{z},\boldsymbol{\hat{z}} \right)$ that bounds the distance of $\boldsymbol{z}$ and $\boldsymbol{\tilde{z}}$ \cite{HuangK2017}. Then we have:
		\begin{equation}
			\begin{aligned}
				& {{\left( \mathcal{E}\left( \boldsymbol{z},\boldsymbol{\tilde{z}} \right)-{{\left( \mathcal{C}\left( \boldsymbol{y},\boldsymbol{\hat{y}} \right) \right)}^{\frac{1}{2}}} \right)}^{2}}+\mathcal{E}{{\left( \boldsymbol{z},\boldsymbol{\hat{z}} \right)}^{2}} \\ 
				\ge & {{\left( \mathcal{E}\left( \boldsymbol{z},\boldsymbol{\tilde{z}} \right)-{{\left( \mathcal{C}\left( \boldsymbol{y},\boldsymbol{\hat{y}} \right) \right)}^{\frac{1}{2}}} \right)}^{2}}+\frac{1}{b-1}\mathcal{E}{{\left( \boldsymbol{z},\boldsymbol{\tilde{z}} \right)}^{2}} \\ 
				= & \frac{b}{b-1}{{\left( \mathcal{E}\left( \boldsymbol{z},\boldsymbol{\tilde{z}} \right)-\frac{b-1}{b}{{\left( \mathcal{C}\left( \boldsymbol{y},\boldsymbol{\hat{y}} \right) \right)}^{\frac{1}{2}}} \right)}^{2}}+\frac{1}{b}\mathcal{C}\left( \boldsymbol{y},\boldsymbol{\hat{y}} \right) \\ 
				\ge & \frac{1}{b}\mathcal{C}\left( \boldsymbol{y},\boldsymbol{\hat{y}} \right),
			\end{aligned}
		\end{equation}
		which finishes the proof.
	\end{proof}
	
	Theorem \ref{thm:bound} demonstrates that the cost function of SLDL can be bounded by the sum of embedding error ${{\left( \mathcal{E}\left( \boldsymbol{z},\boldsymbol{\tilde{z}} \right)-\mathcal{C}{{\left( \boldsymbol{y},\boldsymbol{\hat{y}} \right)}^{\frac{1}{2}}} \right)}^{2}}$ and regression error ${\mathcal{E}\left( \boldsymbol{z},\boldsymbol{\hat{z}} \right)^{2}}$, which indicates that superior embedding learning approach can effectively improve the performance of the model. As we can observe in Section \ref{sec:ablation} below, our embedding method has significant advantages over vanilla embedding approaches that use the mean square error loss function, which proves the effectiveness of our proposed method.

	\begin{table*}[t]
		\centering
		\scriptsize
		\setlength{\tabcolsep}{0.3mm}
		\caption{Experimental results (mean $\pm$ std) on the MLC benchmark datasets measured by P@$1$ (nDCG@$1$). Each underlined result indicates that SLDL is statistically superior to the comparing method. The best performance on each dataset is denoted in boldface.}
		\label{table:result-P@1}
		\resizebox{2\columnwidth}{!}{
			\begin{tabular}{lcccccccccccc}
				\toprule
				Dataset & SLEEC \cite{SLEEC} & DXML \cite{DXML} & FL \cite{focal} & CB \cite{CB} & DB \cite{DB} & PACA \cite{PACA} & FLEM \cite{FLEM} & DELA \cite{DELA} & CLIF \cite{CLIF} & KD-TEA \cite{KD-TEA} & AdaC2 \cite{AdaBoost.C2} & SLDL (Ours) \\
				\midrule
				cal500 & \underline{86.66$\pm$3.16} & \underline{86.45$\pm$2.51} & \underline{87.85$\pm$3.11} & \underline{87.85$\pm$3.82} & \underline{85.45$\pm$4.10} & \underline{88.05$\pm$3.42} & \textbf{88.45$\pm$2.46} & \underline{85.07$\pm$4.64} & \underline{84.67$\pm$4.58} & \underline{87.05$\pm$3.90} & \underline{85.26$\pm$5.07} & \textbf{88.45$\pm$2.11} \\
				corel16k-s1 & \underline{36.53$\pm$1.37} & \underline{31.70$\pm$0.94} & \underline{32.64$\pm$0.85} & \underline{32.58$\pm$1.01} & \underline{30.01$\pm$0.76} & \underline{28.38$\pm$1.71} & \underline{35.97$\pm$1.13} & \underline{27.89$\pm$0.70} & \underline{27.03$\pm$1.19} & \underline{30.82$\pm$1.66} & \underline{36.37$\pm$1.18} & \textbf{37.86$\pm$0.85} \\
				corel16k-s2 & \underline{36.60$\pm$1.05} & \underline{31.54$\pm$0.96} & \underline{33.26$\pm$0.66} & \underline{33.12$\pm$0.77} & \underline{30.22$\pm$1.26} & \underline{28.76$\pm$1.56} & \underline{36.58$\pm$1.04} & \underline{28.30$\pm$0.79} & \underline{27.05$\pm$1.04} & \underline{31.51$\pm$1.43} & \underline{36.36$\pm$1.46} & \textbf{37.79$\pm$1.00} \\
				corel16k-s3 & \underline{36.29$\pm$1.41} & \underline{31.96$\pm$1.26} & \underline{32.49$\pm$1.65} & \underline{32.64$\pm$1.62} & \underline{29.65$\pm$1.84} & \underline{28.53$\pm$1.71} & \underline{35.52$\pm$1.35} & \underline{27.88$\pm$1.28} & \underline{26.87$\pm$1.33} & \underline{31.87$\pm$1.58} & \underline{36.41$\pm$0.77} & \textbf{37.80$\pm$1.51} \\
				CUB   & \underline{86.02$\pm$1.53} & \underline{84.39$\pm$0.95} & \underline{86.28$\pm$1.06} & \underline{86.49$\pm$1.06} & \underline{86.11$\pm$1.38} & \underline{86.18$\pm$0.94} & \underline{84.96$\pm$0.64} & \underline{87.05$\pm$1.16} & \underline{83.90$\pm$1.19} & \underline{86.82$\pm$1.29} & \underline{71.41$\pm$5.10} & \textbf{87.13$\pm$1.06} \\
				delicious & \underline{67.49$\pm$0.87} & \underline{59.83$\pm$1.12} & \underline{65.68$\pm$0.92} & \underline{65.61$\pm$0.90} & \underline{62.59$\pm$0.73} & \underline{65.84$\pm$1.01} & \underline{67.15$\pm$0.48} & \underline{65.05$\pm$0.94} & \underline{61.34$\pm$0.77} & \underline{67.54$\pm$0.93} & \underline{65.57$\pm$1.47} & \textbf{67.67$\pm$0.96} \\
				eurlex-dc & \underline{54.41$\pm$1.36} & \underline{54.33$\pm$0.85} & \underline{26.40$\pm$1.37} & \underline{26.30$\pm$1.41} & \underline{27.26$\pm$1.49} & \underline{45.33$\pm$2.79} & \underline{22.46$\pm$1.43} & \underline{45.83$\pm$1.39} & \underline{58.89$\pm$1.19} & \underline{40.30$\pm$1.58} & \underline{30.51$\pm$2.46} & \textbf{62.03$\pm$1.34} \\
				eurlex-sm & \underline{64.34$\pm$1.46} & \underline{68.54$\pm$1.18} & \underline{35.50$\pm$1.54} & \underline{35.52$\pm$1.55} & \underline{29.83$\pm$1.14} & \underline{67.21$\pm$1.78} & \underline{31.49$\pm$1.39} & \underline{62.70$\pm$1.33} & \textbf{75.39$\pm$0.96} & \underline{61.01$\pm$1.68} & \underline{33.01$\pm$5.03} & 74.77$\pm$1.49 \\
				espgame & \underline{37.69$\pm$1.13} & \underline{31.35$\pm$1.42} & \underline{35.21$\pm$1.61} & \underline{35.45$\pm$1.53} & \underline{33.29$\pm$1.40} & \underline{37.47$\pm$1.62} & \underline{37.78$\pm$1.52} & \underline{33.83$\pm$1.28} & \underline{32.01$\pm$0.85} & \underline{36.20$\pm$1.01} & \textbf{43.57$\pm$1.24} & 41.89$\pm$1.36 \\
				stackex-chemistry & \underline{23.23$\pm$1.37} & \underline{28.89$\pm$1.18} & \underline{38.56$\pm$1.73} & \underline{38.40$\pm$1.87} & \underline{36.99$\pm$1.44} & \underline{35.57$\pm$1.57} & \underline{35.10$\pm$2.08} & \underline{38.73$\pm$1.83} & \underline{36.29$\pm$1.72} & \underline{40.30$\pm$1.50} & \underline{41.47$\pm$2.06} & \textbf{44.74$\pm$1.18} \\
				stackex-chess & \underline{29.38$\pm$3.98} & \underline{52.77$\pm$4.15} & \underline{44.06$\pm$3.39} & \underline{45.07$\pm$4.41} & \underline{41.49$\pm$2.62} & \underline{41.55$\pm$2.88} & \underline{39.82$\pm$3.48} & \underline{46.68$\pm$4.53} & \underline{44.24$\pm$3.96} & \underline{45.91$\pm$3.46} & \underline{49.73$\pm$3.07} & \textbf{55.40$\pm$4.77} \\
				stackex-coffee & \underline{41.40$\pm$12.06} & \underline{29.72$\pm$9.35} & \underline{22.31$\pm$10.80} & \underline{24.01$\pm$10.44} & \underline{25.45$\pm$14.38} & \underline{17.81$\pm$7.58} & \underline{24.90$\pm$14.01} & \underline{28.06$\pm$9.42} & \underline{29.88$\pm$11.79} & \underline{24.01$\pm$10.34} & \underline{34.19$\pm$8.26} & \textbf{45.32$\pm$10.70} \\
				stackex-cooking & \underline{22.93$\pm$0.98} & \underline{30.68$\pm$6.91} & \underline{44.10$\pm$1.69} & \underline{44.26$\pm$1.61} & \underline{42.61$\pm$1.73} & \underline{44.00$\pm$1.51} & \underline{37.82$\pm$1.25} & \underline{49.27$\pm$1.64} & \underline{45.76$\pm$1.02} & \underline{51.02$\pm$1.25} & \underline{55.55$\pm$1.65} & \textbf{57.46$\pm$0.88} \\
				stackex-cs & \underline{29.44$\pm$1.22} & \underline{43.79$\pm$1.49} & \underline{50.76$\pm$2.08} & \underline{50.70$\pm$2.12} & \underline{51.45$\pm$1.81} & \underline{48.28$\pm$2.02} & \underline{46.80$\pm$1.92} & \underline{51.31$\pm$2.08} & \underline{48.30$\pm$1.62} & \underline{53.68$\pm$2.12} & \underline{53.47$\pm$2.13} & \textbf{55.95$\pm$2.36} \\
				stackex-philosophy & \underline{22.59$\pm$2.21} & \underline{48.50$\pm$2.67} & \underline{41.15$\pm$2.50} & \underline{41.05$\pm$2.14} & \underline{38.30$\pm$2.57} & \underline{39.03$\pm$3.26} & \underline{37.98$\pm$2.23} & \underline{44.67$\pm$2.20} & \underline{43.54$\pm$1.91} & \underline{44.42$\pm$3.09} & \underline{49.99$\pm$2.05} & \textbf{53.64$\pm$2.78} \\
				\midrule
				Avg. Rank & 6.60  & 7.53  & 6.80  & 6.67  & 8.73  & 7.67  & 7.40  & 6.80  & 8.13  & 5.47  & 4.87  & \textbf{1.13}  \\
				\bottomrule
			\end{tabular}%
		}
	\end{table*}%

	\begin{table*}[t]
		\centering
		\scriptsize
		\setlength{\tabcolsep}{0.3mm}
		\caption{Experimental results (mean $\pm$ std) on the MLC benchmark datasets measured by P@$3$. Each underlined result indicates that SLDL is statistically superior to the comparing method. The best performance on each dataset is denoted in boldface.}
		\label{table:result-P@3}
		\resizebox{2\columnwidth}{!}{
			\begin{tabular}{lcccccccccccc}
				\toprule
				Dataset & SLEEC \cite{SLEEC} & DXML \cite{DXML} & FL \cite{focal} & CB \cite{CB} & DB \cite{DB} & PACA \cite{PACA} & FLEM \cite{FLEM} & DELA \cite{DELA} & CLIF \cite{CLIF} & KD-TEA \cite{KD-TEA} & AdaC2 \cite{AdaBoost.C2} & SLDL (Ours) \\
				\midrule
				cal500 & \underline{73.04$\pm$2.92} & \underline{73.50$\pm$2.76} & \underline{74.16$\pm$4.03} & \underline{74.95$\pm$4.92} & \underline{72.43$\pm$3.41} & \underline{74.89$\pm$3.55} & \underline{74.83$\pm$3.33} & \underline{72.97$\pm$4.15} & \underline{72.90$\pm$4.31} & \underline{75.30$\pm$2.93} & \underline{75.70$\pm$2.68} & \textbf{76.10$\pm$2.59} \\
				corel16k-s1 & \underline{27.37$\pm$0.91} & \underline{24.23$\pm$0.71} & \underline{25.23$\pm$0.45} & \underline{25.29$\pm$0.36} & \underline{22.95$\pm$0.43} & \underline{20.87$\pm$0.99} & \underline{27.32$\pm$0.43} & \underline{20.87$\pm$0.51} & \underline{20.46$\pm$0.64} & \underline{23.71$\pm$0.62} & \underline{27.47$\pm$0.61} & \textbf{28.39$\pm$0.67} \\
				corel16k-s2 & \underline{27.46$\pm$0.79} & \underline{24.26$\pm$0.58} & \underline{25.53$\pm$0.82} & \underline{25.51$\pm$0.71} & \underline{23.09$\pm$0.62} & \underline{21.51$\pm$0.85} & \underline{27.63$\pm$0.69} & \underline{21.27$\pm$0.45} & \underline{20.58$\pm$0.73} & \underline{23.92$\pm$0.89} & \underline{27.57$\pm$0.86} & \textbf{28.59$\pm$0.77} \\
				corel16k-s3 & \underline{27.32$\pm$1.05} & \underline{24.39$\pm$0.96} & \underline{25.22$\pm$0.99} & \underline{25.33$\pm$1.06} & \underline{22.84$\pm$0.95} & \underline{20.91$\pm$1.24} & \underline{27.50$\pm$0.99} & \underline{20.83$\pm$0.85} & \underline{20.25$\pm$0.85} & \underline{23.59$\pm$0.95} & \underline{27.35$\pm$0.61} & \textbf{28.36$\pm$0.86} \\
				CUB   & \underline{80.20$\pm$1.02} & \underline{74.48$\pm$0.97} & \underline{79.14$\pm$0.90} & \underline{79.42$\pm$0.96} & \underline{79.80$\pm$0.95} & \underline{79.95$\pm$0.99} & \underline{77.73$\pm$0.80} & \textbf{81.28$\pm$0.77} & \underline{78.32$\pm$1.08} & 80.47$\pm$1.06 & \underline{72.56$\pm$2.46} & 80.27$\pm$0.86 \\
				delicious & \underline{61.43$\pm$0.85} & \underline{53.45$\pm$0.98} & \underline{59.91$\pm$0.40} & \underline{59.86$\pm$0.51} & \underline{57.23$\pm$0.67} & \underline{59.92$\pm$0.91} & \underline{61.33$\pm$0.44} & \underline{59.66$\pm$0.68} & \underline{55.78$\pm$1.01} & \underline{61.75$\pm$0.43} & \underline{60.82$\pm$1.15} & \textbf{62.13$\pm$0.72} \\
				eurlex-dc & \underline{26.64$\pm$0.52} & \underline{26.56$\pm$0.38} & \underline{12.28$\pm$0.51} & \underline{12.26$\pm$0.52} & \underline{13.21$\pm$0.48} & \underline{21.32$\pm$1.05} & \underline{11.14$\pm$0.48} & \underline{22.35$\pm$0.59} & \underline{27.59$\pm$0.53} & \underline{20.00$\pm$0.55} & \underline{15.02$\pm$0.89} & \textbf{28.68$\pm$0.45} \\
				eurlex-sm & \underline{44.82$\pm$1.04} & \underline{46.48$\pm$0.81} & \underline{28.47$\pm$0.93} & \underline{28.45$\pm$0.90} & \underline{25.38$\pm$1.12} & \underline{44.34$\pm$1.11} & \underline{25.89$\pm$0.71} & \underline{41.63$\pm$0.99} & \underline{50.01$\pm$0.70} & \underline{40.32$\pm$1.10} & \underline{27.04$\pm$1.30} & \textbf{50.63$\pm$0.92} \\
				espgame & \underline{29.73$\pm$0.48} & \underline{24.47$\pm$0.69} & \underline{28.53$\pm$0.81} & \underline{28.65$\pm$0.76} & \underline{26.41$\pm$0.83} & \underline{28.62$\pm$1.13} & \underline{30.34$\pm$0.69} & \underline{27.03$\pm$0.55} & \underline{25.81$\pm$0.62} & \underline{29.22$\pm$0.62} & \textbf{34.57$\pm$0.78} & 33.76$\pm$0.31 \\
				stackex-chemistry & \underline{16.29$\pm$0.44} & \underline{19.49$\pm$0.78} & \underline{26.29$\pm$0.82} & \underline{26.33$\pm$0.70} & \underline{24.92$\pm$0.88} & \underline{23.16$\pm$0.76} & \underline{23.67$\pm$0.74} & \underline{25.80$\pm$0.69} & \underline{23.68$\pm$0.99} & \underline{26.58$\pm$0.63} & \underline{26.58$\pm$0.94} & \textbf{28.27$\pm$0.68} \\
				stackex-chess & \underline{19.58$\pm$1.28} & \underline{31.78$\pm$2.41} & \underline{27.62$\pm$1.74} & \underline{27.74$\pm$1.68} & \underline{27.20$\pm$1.55} & \underline{25.89$\pm$1.69} & \underline{25.51$\pm$1.50} & \underline{29.00$\pm$1.80} & \underline{28.58$\pm$1.89} & \underline{28.99$\pm$2.29} & \underline{31.00$\pm$1.31} & \textbf{32.93$\pm$2.53} \\
				stackex-coffee & \textbf{30.99$\pm$5.54} & \underline{19.98$\pm$3.71} & \underline{16.79$\pm$4.99} & \underline{16.05$\pm$4.40} & \underline{17.54$\pm$4.73} & \underline{13.37$\pm$3.93} & \underline{17.36$\pm$6.61} & \underline{22.42$\pm$7.16} & \underline{20.95$\pm$5.99} & \underline{16.78$\pm$3.75} & \underline{22.85$\pm$4.87} & 30.08$\pm$5.30 \\
				stackex-cooking & \underline{14.71$\pm$0.49} & \underline{19.48$\pm$3.45} & \underline{27.98$\pm$0.79} & \underline{27.99$\pm$0.72} & \underline{26.82$\pm$0.78} & \underline{25.73$\pm$0.87} & \underline{24.42$\pm$0.53} & \underline{29.11$\pm$0.62} & \underline{27.28$\pm$0.52} & \underline{31.45$\pm$0.90} & \textbf{32.99$\pm$0.69} & 31.99$\pm$0.41 \\
				stackex-cs & \underline{19.99$\pm$0.64} & \underline{30.44$\pm$0.68} & \underline{36.67$\pm$0.71} & \underline{36.76$\pm$0.67} & \underline{36.22$\pm$0.75} & \underline{33.34$\pm$0.96} & \underline{33.74$\pm$0.56} & \underline{36.53$\pm$0.87} & \underline{34.57$\pm$1.01} & \underline{38.08$\pm$0.61} & \underline{38.17$\pm$1.06} & \textbf{39.14$\pm$0.75} \\
				stackex-philosophy & \underline{15.69$\pm$0.86} & \underline{28.83$\pm$0.56} & \underline{25.95$\pm$0.61} & \underline{25.88$\pm$0.82} & \underline{25.15$\pm$0.94} & \underline{23.80$\pm$0.86} & \underline{23.13$\pm$0.83} & \underline{26.76$\pm$0.89} & \underline{26.88$\pm$1.02} & \underline{27.48$\pm$0.72} & \underline{29.88$\pm$0.68} & \textbf{31.25$\pm$0.72} \\
				\midrule
				Avg. Rank & 6.60  & 7.67  & 7.00  & 6.67  & 8.87  & 8.33  & 7.47  & 6.80  & 8.07  & 5.13  & 3.93  & \textbf{1.33}  \\
				\bottomrule
			\end{tabular}%
		}
	\end{table*}%

	\begin{table*}[t]
		\centering
		\scriptsize
		\setlength{\tabcolsep}{0.3mm}
		\caption{Experimental results (mean $\pm$ std) on the MLC benchmark datasets measured by P@$5$. Each underlined result indicates that SLDL is statistically superior to the comparing method. The best performance on each dataset is denoted in boldface.}
		\label{table:result-P@5}
		\resizebox{2\columnwidth}{!}{
			\begin{tabular}{lcccccccccccc}
				\toprule
				Dataset & SLEEC \cite{SLEEC} & DXML \cite{DXML} & FL \cite{focal} & CB \cite{CB} & DB \cite{DB} & PACA \cite{PACA} & FLEM \cite{FLEM} & DELA \cite{DELA} & CLIF \cite{CLIF} & KD-TEA \cite{KD-TEA} & AdaC2 \cite{AdaBoost.C2} & SLDL (Ours) \\
				\midrule
				cal500 & \underline{68.44$\pm$3.15} & \underline{67.80$\pm$3.51} & \underline{68.00$\pm$3.53} & \underline{68.17$\pm$3.30} & \underline{65.97$\pm$2.54} & \underline{67.87$\pm$3.16} & \underline{68.56$\pm$2.65} & \underline{67.17$\pm$3.15} & \underline{67.33$\pm$4.33} & \underline{69.56$\pm$2.47} & \underline{68.89$\pm$2.63} & \textbf{69.64$\pm$3.21} \\
				corel16k-s1 & \underline{22.24$\pm$0.61} & \underline{19.99$\pm$0.69} & \underline{20.82$\pm$0.33} & \underline{20.82$\pm$0.31} & \underline{19.15$\pm$0.41} & \underline{17.04$\pm$0.75} & \underline{22.45$\pm$0.42} & \underline{17.11$\pm$0.32} & \underline{16.69$\pm$0.53} & \underline{19.59$\pm$0.54} & \underline{22.30$\pm$0.44} & \textbf{23.40$\pm$0.42} \\
				corel16k-s2 & \underline{22.31$\pm$0.62} & \underline{19.94$\pm$0.47} & \underline{21.03$\pm$0.55} & \underline{21.02$\pm$0.58} & \underline{19.22$\pm$0.36} & \underline{17.69$\pm$0.54} & \underline{22.67$\pm$0.58} & \underline{17.33$\pm$0.42} & \underline{16.93$\pm$0.65} & \underline{19.71$\pm$0.65} & \underline{22.40$\pm$0.47} & \textbf{23.46$\pm$0.65} \\
				corel16k-s3 & \underline{22.31$\pm$0.60} & \underline{20.01$\pm$0.64} & \underline{21.03$\pm$0.63} & \underline{21.02$\pm$0.60} & \underline{19.19$\pm$0.67} & \underline{17.15$\pm$0.95} & \underline{22.56$\pm$0.71} & \underline{17.09$\pm$0.56} & \underline{16.67$\pm$0.56} & \underline{19.33$\pm$0.49} & \underline{22.23$\pm$0.57} & \textbf{23.37$\pm$0.58} \\
				CUB   & 76.56$\pm$0.94 & \underline{69.25$\pm$1.10} & \underline{75.26$\pm$1.03} & \underline{75.39$\pm$1.18} & \underline{75.93$\pm$0.96} & \underline{75.97$\pm$0.98} & \underline{73.35$\pm$0.71} & \textbf{77.59$\pm$0.72} & \underline{74.90$\pm$1.15} & 76.63$\pm$1.16 & \underline{71.40$\pm$1.40} & 76.45$\pm$0.83 \\
				delicious & \underline{56.59$\pm$0.82} & \underline{48.73$\pm$0.95} & \underline{55.36$\pm$0.47} & \underline{55.28$\pm$0.57} & \underline{52.78$\pm$0.55} & \underline{55.34$\pm$0.77} & \underline{56.62$\pm$0.50} & \underline{55.06$\pm$0.64} & \underline{51.51$\pm$0.78} & \underline{57.11$\pm$0.47} & \underline{56.51$\pm$0.76} & \textbf{57.64$\pm$0.68} \\
				eurlex-dc & \underline{17.71$\pm$0.35} & \underline{18.05$\pm$0.25} & \underline{8.71$\pm$0.33} & \underline{8.69$\pm$0.34} & \underline{9.08$\pm$0.28} & \underline{14.65$\pm$0.61} & \underline{7.89$\pm$0.30} & \underline{15.44$\pm$0.22} & \underline{18.43$\pm$0.30} & \underline{14.06$\pm$0.32} & \underline{10.41$\pm$0.45} & \textbf{18.68$\pm$0.30} \\
				eurlex-sm & \underline{31.52$\pm$0.64} & \underline{32.83$\pm$0.59} & \underline{19.49$\pm$0.66} & \underline{19.48$\pm$0.67} & \underline{19.88$\pm$0.68} & \underline{30.73$\pm$0.66} & \underline{17.66$\pm$0.57} & \underline{29.25$\pm$0.65} & \textbf{34.35$\pm$0.41} & \underline{28.64$\pm$0.62} & \underline{20.50$\pm$0.87} & 34.31$\pm$0.55 \\
				espgame & \underline{25.08$\pm$0.36} & \underline{20.83$\pm$0.40} & \underline{24.56$\pm$0.43} & \underline{24.58$\pm$0.46} & \underline{22.69$\pm$0.52} & \underline{24.00$\pm$0.61} & \underline{25.89$\pm$0.42} & \underline{22.75$\pm$0.44} & \underline{21.76$\pm$0.39} & \underline{25.06$\pm$0.42} & \textbf{28.93$\pm$0.42} & 28.37$\pm$0.40 \\
				stackex-chemistry & \underline{13.08$\pm$0.28} & \underline{15.25$\pm$0.63} & \underline{20.25$\pm$0.57} & \underline{20.19$\pm$0.63} & \underline{19.34$\pm$0.43} & \underline{17.69$\pm$0.56} & \underline{18.22$\pm$0.43} & \underline{19.54$\pm$0.60} & \underline{18.19$\pm$0.55} & \underline{20.44$\pm$0.49} & \underline{19.89$\pm$0.48} & \textbf{21.20$\pm$0.37} \\
				stackex-chess & \underline{15.58$\pm$1.02} & \underline{23.63$\pm$1.60} & \underline{20.94$\pm$0.98} & \underline{21.21$\pm$1.08} & \underline{20.64$\pm$1.21} & \underline{19.80$\pm$1.35} & \underline{19.84$\pm$1.03} & \underline{21.73$\pm$1.01} & \underline{21.42$\pm$1.18} & \underline{22.10$\pm$1.32} & \underline{22.85$\pm$1.11} & \textbf{24.24$\pm$1.38} \\
				stackex-coffee & \underline{22.40$\pm$2.38} & \underline{15.72$\pm$2.86} & \underline{12.55$\pm$3.23} & \underline{11.95$\pm$3.34} & \underline{13.29$\pm$3.37} & \underline{10.96$\pm$2.80} & \underline{13.26$\pm$4.06} & \underline{17.45$\pm$3.58} & \underline{15.68$\pm$3.58} & \underline{13.11$\pm$3.18} & \underline{17.36$\pm$3.09} & \textbf{22.67$\pm$2.97} \\
				stackex-cooking & \underline{11.30$\pm$0.29} & \underline{14.64$\pm$2.15} & \underline{20.48$\pm$0.35} & \underline{20.47$\pm$0.37} & \underline{19.75$\pm$0.38} & \underline{18.75$\pm$0.66} & \underline{18.30$\pm$0.34} & \underline{20.96$\pm$0.30} & \underline{19.66$\pm$0.45} & 22.76$\pm$0.64 & \textbf{23.15$\pm$0.39} & 22.59$\pm$0.23 \\
				stackex-cs & \underline{15.79$\pm$0.51} & \underline{22.99$\pm$0.62} & \underline{27.87$\pm$0.52} & \underline{27.78$\pm$0.49} & \underline{27.29$\pm$0.38} & \underline{25.29$\pm$0.59} & \underline{25.54$\pm$0.29} & \underline{27.80$\pm$0.59} & \underline{26.25$\pm$0.70} & \underline{28.74$\pm$0.37} & \underline{28.20$\pm$0.48} & \textbf{28.95$\pm$0.47} \\
				stackex-philosophy & \underline{12.94$\pm$0.60} & \underline{21.12$\pm$0.52} & \underline{19.54$\pm$0.50} & \underline{19.53$\pm$0.43} & \underline{19.43$\pm$0.49} & \underline{17.56$\pm$0.84} & \underline{17.56$\pm$0.47} & \underline{19.45$\pm$0.39} & \underline{19.66$\pm$0.54} & \underline{20.61$\pm$0.51} & \underline{21.75$\pm$0.41} & \textbf{22.69$\pm$0.56} \\
				\midrule
				Avg. Rank & 6.47  & 7.67  & 6.60  & 7.13  & 8.60  & 8.80  & 7.00  & 6.93  & 8.13  & 4.80  & 4.27  & \textbf{1.47}  \\
				\bottomrule
			\end{tabular}%
		}
	\end{table*}%

	\section{Experiments} \label{sec:experiments}
	
	In this section, we evaluate the efficiency and performance of SLDL across multiple multi-label classification datasets. All methods are implemented using PyTorch, and the experiments are performed on a GPU server equipped with an NVIDIA Tesla V100 GPU, AMD Ryzen 7 5800X processor CPU, and 32 GB GPU memory. To evaluate the proposed approach, we conduct experiments on fifteen benchmark datasets widely employed in multi-label classification research. The details of these datasets are provided in Table \ref{table:datasets}.

	\subsection{Comparing Algorithms}
	
	We compare our proposed methods with several state-of-the-art MLC technologies. The compared methods include:

	\begin{table*}[t]
		\centering
		\scriptsize
		\setlength{\tabcolsep}{0.3mm}
		\caption{Experimental results (mean $\pm$ std) on the MLC benchmark datasets measured by nDCG@$3$. Each underlined result indicates that SLDL is statistically superior to the comparing method. The best performance on each dataset is denoted in boldface.}
		\label{table:result-nDCG@3}
		\resizebox{2\columnwidth}{!}{
			\begin{tabular}{lcccccccccccc}
				\toprule
				Dataset & SLEEC \cite{SLEEC} & DXML \cite{DXML} & FL \cite{focal} & CB \cite{CB} & DB \cite{DB} & PACA \cite{PACA} & FLEM \cite{FLEM} & DELA \cite{DELA} & CLIF \cite{CLIF} & KD-TEA \cite{KD-TEA} & AdaC2 \cite{AdaBoost.C2} & SLDL (Ours) \\
				\midrule
				cal500 & \underline{75.98$\pm$2.48} & \underline{76.23$\pm$2.48} & \underline{77.18$\pm$3.32} & \underline{77.78$\pm$4.15} & \underline{75.17$\pm$3.22} & \underline{77.51$\pm$3.19} & \underline{77.79$\pm$2.89} & \underline{75.51$\pm$4.04} & \underline{75.44$\pm$4.00} & \underline{77.90$\pm$2.65} & \underline{78.01$\pm$2.61} & \textbf{78.30$\pm$2.62} \\
				corel16k-s1 & \underline{32.64$\pm$1.01} & \underline{28.75$\pm$0.75} & \underline{29.78$\pm$0.44} & \underline{29.81$\pm$0.36} & \underline{27.42$\pm$0.51} & \underline{25.02$\pm$1.24} & \underline{32.44$\pm$0.56} & \underline{24.72$\pm$0.60} & \underline{24.28$\pm$0.80} & \underline{28.04$\pm$0.89} & \underline{32.67$\pm$0.72} & \textbf{33.68$\pm$0.76} \\
				corel16k-s2 & \underline{32.40$\pm$0.90} & \underline{28.34$\pm$0.50} & \underline{29.81$\pm$0.79} & \underline{29.78$\pm$0.71} & \underline{27.36$\pm$0.81} & \underline{25.34$\pm$0.97} & \underline{32.43$\pm$0.76} & \underline{24.90$\pm$0.53} & \underline{24.09$\pm$0.85} & \underline{28.11$\pm$1.06} & \underline{32.54$\pm$1.02} & \textbf{33.39$\pm$0.89} \\
				corel16k-s3 & \underline{32.52$\pm$1.16} & \underline{28.79$\pm$1.14} & \underline{29.76$\pm$1.11} & \underline{29.87$\pm$1.10} & \underline{27.42$\pm$1.19} & \underline{25.02$\pm$1.44} & \underline{32.51$\pm$1.07} & \underline{24.77$\pm$0.86} & \underline{24.06$\pm$0.96} & \underline{28.18$\pm$1.17} & \underline{32.66$\pm$0.63} & \textbf{33.60$\pm$1.14} \\
				CUB   & \underline{81.52$\pm$1.06} & \underline{76.65$\pm$0.86} & \underline{80.77$\pm$0.84} & \underline{81.03$\pm$0.94} & \underline{81.24$\pm$1.00} & \underline{81.37$\pm$0.93} & \underline{79.39$\pm$0.74} & \textbf{82.59$\pm$0.81} & \underline{79.60$\pm$1.06} & 81.90$\pm$1.07 & \underline{72.37$\pm$3.02} & 81.80$\pm$0.89 \\
				delicious & \underline{62.88$\pm$0.85} & \underline{54.96$\pm$1.00} & \underline{61.28$\pm$0.48} & \underline{61.23$\pm$0.56} & \underline{58.52$\pm$0.63} & \underline{61.36$\pm$0.88} & \underline{62.74$\pm$0.42} & \underline{60.97$\pm$0.68} & \underline{57.10$\pm$0.94} & \underline{63.13$\pm$0.49} & \underline{62.00$\pm$1.14} & \textbf{63.39$\pm$0.74} \\
				eurlex-dc & \underline{60.80$\pm$1.17} & \underline{60.81$\pm$0.87} & \underline{29.90$\pm$1.27} & \underline{29.84$\pm$1.29} & \underline{31.87$\pm$1.28} & \underline{49.90$\pm$2.63} & \underline{26.74$\pm$1.22} & \underline{51.54$\pm$1.28} & \underline{63.87$\pm$1.02} & \underline{46.32$\pm$1.32} & \underline{36.01$\pm$1.91} & \textbf{66.64$\pm$1.04} \\
				eurlex-sm & \underline{62.85$\pm$1.23} & \underline{66.09$\pm$0.86} & \underline{35.89$\pm$1.07} & \underline{35.87$\pm$1.06} & \underline{34.08$\pm$1.19} & \underline{63.36$\pm$1.64} & \underline{32.61$\pm$0.91} & \underline{58.72$\pm$1.10} & \underline{72.05$\pm$0.69} & \underline{57.27$\pm$1.34} & \underline{36.16$\pm$1.95} & \textbf{72.30$\pm$1.04} \\
				espgame & \underline{32.55$\pm$0.58} & \underline{26.81$\pm$0.86} & \underline{30.95$\pm$1.08} & \underline{31.10$\pm$1.01} & \underline{28.94$\pm$1.00} & \underline{31.61$\pm$1.33} & \underline{33.00$\pm$0.96} & \underline{29.56$\pm$0.75} & \underline{28.15$\pm$0.72} & \underline{31.88$\pm$0.77} & \textbf{37.90$\pm$0.96} & 36.90$\pm$0.50 \\
				stackex-chemistry & \underline{23.92$\pm$0.62} & \underline{28.70$\pm$0.93} & \underline{38.97$\pm$0.97} & \underline{38.93$\pm$0.96} & \underline{37.19$\pm$1.11} & \underline{34.74$\pm$1.07} & \underline{35.16$\pm$1.23} & \underline{38.60$\pm$0.90} & \underline{35.52$\pm$1.35} & \underline{39.86$\pm$0.94} & \underline{40.38$\pm$1.29} & \textbf{42.82$\pm$0.60} \\
				stackex-chess & \underline{26.80$\pm$2.42} & \underline{45.57$\pm$3.47} & \underline{38.97$\pm$2.87} & \underline{39.42$\pm$3.19} & \underline{38.07$\pm$2.39} & \underline{36.34$\pm$2.18} & \underline{35.73$\pm$2.67} & \underline{41.12$\pm$2.89} & \underline{39.91$\pm$2.94} & \underline{40.72$\pm$3.11} & \underline{44.28$\pm$2.39} & \textbf{47.07$\pm$4.16} \\
				stackex-coffee & \textbf{49.07$\pm$10.20} & \underline{32.39$\pm$7.94} & \underline{26.30$\pm$9.30} & \underline{26.41$\pm$7.53} & \underline{27.79$\pm$8.13} & \underline{20.21$\pm$6.55} & \underline{27.92$\pm$10.56} & \underline{34.94$\pm$10.84} & \underline{33.31$\pm$9.31} & \underline{26.62$\pm$7.11} & \underline{37.69$\pm$7.27} & 48.35$\pm$9.28 \\
				stackex-cooking & \underline{21.15$\pm$0.56} & \underline{28.13$\pm$5.47} & \underline{40.30$\pm$0.97} & \underline{40.32$\pm$1.00} & \underline{38.82$\pm$1.18} & \underline{38.22$\pm$1.33} & \underline{34.92$\pm$0.66} & \underline{43.08$\pm$1.00} & \underline{40.18$\pm$0.98} & \underline{45.81$\pm$1.23} & \textbf{48.69$\pm$1.05} & 48.16$\pm$0.75 \\
				stackex-cs & \underline{25.64$\pm$1.01} & \underline{39.02$\pm$0.92} & \underline{46.98$\pm$1.13} & \underline{47.00$\pm$0.95} & \underline{46.96$\pm$1.08} & \underline{43.26$\pm$1.35} & \underline{42.93$\pm$1.04} & \underline{47.37$\pm$1.31} & \underline{44.65$\pm$1.37} & \underline{49.30$\pm$1.18} & \underline{49.37$\pm$1.63} & \textbf{50.63$\pm$1.76} \\
				stackex-philosophy & \underline{23.40$\pm$1.61} & \underline{44.57$\pm$1.48} & \underline{39.15$\pm$1.53} & \underline{39.09$\pm$1.66} & \underline{37.73$\pm$1.69} & \underline{36.06$\pm$1.90} & \underline{35.53$\pm$1.66} & \underline{40.76$\pm$1.86} & \underline{40.37$\pm$1.55} & \underline{41.52$\pm$1.77} & \underline{45.92$\pm$1.24} & \textbf{48.16$\pm$0.93} \\
				\midrule
				Avg. Rank & 6.67  & 7.60  & 7.20  & 6.93  & 8.87  & 8.20  & 7.60  & 6.67  & 8.13  & 5.13  & 3.67  & \textbf{1.33}  \\
				\bottomrule
			\end{tabular}%
		}
	\end{table*}%

	\begin{table*}[t]
		\centering
		\scriptsize
		\setlength{\tabcolsep}{0.3mm}
		\caption{Experimental results (mean $\pm$ std) on the MLC benchmark datasets measured by nDCG@$5$. Each underlined result indicates that SLDL is statistically superior to the comparing method. The best performance on each dataset is denoted in boldface.}
		\label{table:result-nDCG@5}
		\resizebox{2\columnwidth}{!}{
			\begin{tabular}{lcccccccccccc}
				\toprule
				Dataset & SLEEC \cite{SLEEC} & DXML \cite{DXML} & FL \cite{focal} & CB \cite{CB} & DB \cite{DB} & PACA \cite{PACA} & FLEM \cite{FLEM} & DELA \cite{DELA} & CLIF \cite{CLIF} & KD-TEA \cite{KD-TEA} & AdaC2 \cite{AdaBoost.C2} & SLDL (Ours) \\
				\midrule
				cal500 & \underline{72.00$\pm$2.61} & \underline{71.56$\pm$2.98} & \underline{72.08$\pm$3.15} & \underline{72.33$\pm$3.32} & \underline{69.98$\pm$2.61} & \underline{71.97$\pm$2.80} & \underline{72.63$\pm$2.44} & \underline{70.77$\pm$3.17} & \underline{70.90$\pm$3.98} & \underline{73.23$\pm$2.39} & \underline{72.67$\pm$2.60} & \textbf{73.33$\pm$2.65} \\
				corel16k-s1 & \underline{36.63$\pm$0.93} & \underline{32.56$\pm$0.97} & \underline{33.78$\pm$0.47} & \underline{33.78$\pm$0.42} & \underline{31.34$\pm$0.56} & \underline{28.16$\pm$1.30} & \underline{36.70$\pm$0.62} & \underline{27.91$\pm$0.57} & \underline{27.34$\pm$0.91} & \underline{31.78$\pm$0.86} & \underline{36.65$\pm$0.72} & \textbf{38.16$\pm$0.56} \\
				corel16k-s2 & \underline{36.07$\pm$1.01} & \underline{31.83$\pm$0.57} & \underline{33.52$\pm$0.77} & \underline{33.47$\pm$0.77} & \underline{30.96$\pm$0.67} & \underline{28.43$\pm$0.94} & \underline{36.34$\pm$0.87} & \underline{27.80$\pm$0.60} & \underline{27.04$\pm$0.99} & \underline{31.60$\pm$1.08} & \underline{36.24$\pm$0.94} & \textbf{37.42$\pm$1.05} \\
				corel16k-s3 & \underline{36.43$\pm$1.11} & \underline{32.41$\pm$1.13} & \underline{33.79$\pm$1.09} & \underline{33.83$\pm$1.01} & \underline{31.30$\pm$1.25} & \underline{28.10$\pm$1.50} & \underline{36.53$\pm$1.13} & \underline{27.82$\pm$0.86} & \underline{27.06$\pm$0.89} & \underline{31.66$\pm$1.04} & \underline{36.49$\pm$0.85} & \textbf{37.80$\pm$1.15} \\
				CUB   & \underline{78.68$\pm$0.99} & \underline{72.46$\pm$0.94} & \underline{77.66$\pm$0.95} & \underline{77.81$\pm$1.09} & \underline{78.20$\pm$1.00} & \underline{78.25$\pm$0.92} & \underline{75.93$\pm$0.70} & \textbf{79.71$\pm$0.75} & \underline{76.90$\pm$1.05} & 78.88$\pm$1.08 & \underline{71.66$\pm$2.09} & 78.77$\pm$0.86 \\
				delicious & \underline{59.30$\pm$0.82} & \underline{51.39$\pm$0.96} & \underline{57.91$\pm$0.49} & \underline{57.84$\pm$0.56} & \underline{55.27$\pm$0.56} & \underline{57.96$\pm$0.77} & \underline{59.26$\pm$0.45} & \underline{57.57$\pm$0.58} & \underline{53.94$\pm$0.79} & \underline{59.70$\pm$0.50} & \underline{58.86$\pm$0.84} & \textbf{60.09$\pm$0.70} \\
				eurlex-dc & \underline{63.46$\pm$1.24} & \underline{64.07$\pm$0.84} & \underline{32.30$\pm$1.28} & \underline{32.23$\pm$1.33} & \underline{33.90$\pm$1.24} & \underline{52.88$\pm$2.60} & \underline{29.00$\pm$1.19} & \underline{54.74$\pm$1.03} & \underline{66.75$\pm$0.95} & \underline{49.67$\pm$1.21} & \underline{38.42$\pm$1.76} & \textbf{68.81$\pm$1.04} \\
				eurlex-sm & \underline{66.76$\pm$1.15} & \underline{70.25$\pm$0.82} & \underline{37.52$\pm$1.18} & \underline{37.52$\pm$1.19} & \underline{38.56$\pm$1.19} & \underline{66.69$\pm$1.52} & \underline{34.04$\pm$1.01} & \underline{62.34$\pm$1.12} & \textbf{75.35$\pm$0.53} & \underline{61.19$\pm$1.21} & \underline{40.13$\pm$1.52} & 75.18$\pm$0.97 \\
				espgame & \underline{31.46$\pm$0.52} & \underline{26.00$\pm$0.67} & \underline{30.38$\pm$0.86} & \underline{30.44$\pm$0.85} & \underline{28.41$\pm$0.90} & \underline{30.53$\pm$1.02} & \underline{32.16$\pm$0.77} & \underline{28.63$\pm$0.73} & \underline{27.34$\pm$0.57} & \underline{31.30$\pm$0.61} & \textbf{36.67$\pm$0.75} & 35.78$\pm$0.53 \\
				stackex-chemistry & \underline{27.40$\pm$0.66} & \underline{32.35$\pm$0.96} & \underline{43.44$\pm$1.02} & \underline{43.34$\pm$1.07} & \underline{41.60$\pm$0.98} & \underline{38.47$\pm$1.25} & \underline{39.30$\pm$1.11} & \underline{42.61$\pm$1.01} & \underline{39.51$\pm$1.20} & \underline{44.30$\pm$0.97} & \underline{44.27$\pm$1.12} & \textbf{46.72$\pm$0.83} \\
				stackex-chess & \underline{29.60$\pm$2.49} & \underline{48.36$\pm$2.99} & \underline{41.75$\pm$2.35} & \underline{42.43$\pm$2.96} & \underline{40.94$\pm$2.57} & \underline{38.93$\pm$1.87} & \underline{39.04$\pm$2.37} & \underline{43.84$\pm$2.39} & \underline{42.54$\pm$2.61} & \underline{43.93$\pm$2.60} & \underline{46.72$\pm$1.93} & \textbf{49.40$\pm$3.55} \\
				stackex-coffee & \textbf{54.26$\pm$8.60} & \underline{37.42$\pm$8.56} & \underline{29.87$\pm$8.98} & \underline{29.71$\pm$8.43} & \underline{31.35$\pm$8.62} & \underline{24.59$\pm$6.80} & \underline{31.70$\pm$10.61} & \underline{40.23$\pm$9.80} & \underline{37.21$\pm$9.37} & \underline{30.54$\pm$7.74} & \underline{42.58$\pm$7.24} & 53.10$\pm$8.71 \\
				stackex-cooking & \underline{23.47$\pm$0.51} & \underline{30.78$\pm$5.46} & \underline{43.45$\pm$0.84} & \underline{43.47$\pm$0.92} & \underline{42.02$\pm$1.12} & \underline{41.03$\pm$1.44} & \underline{38.22$\pm$0.73} & \underline{45.94$\pm$0.86} & \underline{42.92$\pm$1.17} & \underline{49.06$\pm$1.28} & \textbf{51.20$\pm$0.96} & 50.58$\pm$0.77 \\
				stackex-cs & \underline{28.34$\pm$1.09} & \underline{42.14$\pm$0.98} & \underline{50.91$\pm$1.09} & \underline{50.79$\pm$0.99} & \underline{50.55$\pm$0.83} & \underline{46.72$\pm$1.14} & \underline{46.38$\pm$0.95} & \underline{51.30$\pm$1.20} & \underline{48.30$\pm$1.40} & \underline{53.18$\pm$1.01} & \underline{52.64$\pm$1.32} & \textbf{53.88$\pm$1.65} \\
				stackex-philosophy & \underline{26.83$\pm$1.52} & \underline{47.30$\pm$1.28} & \underline{42.36$\pm$1.45} & \underline{42.28$\pm$1.37} & \underline{41.32$\pm$1.39} & \underline{38.51$\pm$2.07} & \underline{38.47$\pm$1.38} & \underline{43.18$\pm$1.55} & \underline{42.95$\pm$1.26} & \underline{44.74$\pm$1.51} & \underline{48.56$\pm$0.96} & \textbf{50.87$\pm$1.14} \\
				\midrule
				Avg. Rank & 6.60  & 7.60  & 7.00  & 7.07  & 8.73  & 8.47  & 7.20  & 6.80  & 8.07  & 4.87  & 4.07  & \textbf{1.40}  \\
				\bottomrule
			\end{tabular}%
		}
	\end{table*}%

	\begin{itemize}
		\item \textit{Sparse local embeddings for extreme classification} (SLEEC) \cite{SLEEC}: a multi-label classification method specifically proposed for MLC tasks with large-scale output space. SLEEC aims to embed labels by preserving the pairwise distances between a few nearest label neighbors. In the experiments, the embedding dimension is set to 100, the number of neighbors for embedding is set to 250, and the number of neighbors for prediction is set to 25 as suggested in the code provided by the authors.
		\item \textit{Deep extreme multi-label learning} (DXML) \cite{DXML}: a deep embedding method specifically proposed for multi-label classification tasks with large-scale output space. It integrates the concepts of non-linear embedding and graph priors-based label space modeling simultaneously. In the experiments, the embedding dimension is set to 300 as suggested by the original paper.
		\item \textit{Focal loss reweighting} (FL) \cite{focal}: a simple yet commonly used approach in classification tasks. It elevates the loss weight for instances that are hard to classify, particularly those predicted with low probability on the ground-truth. We apply a balance parameter of 2 and set $\gamma=2$ for focal loss, following the suggestion in \cite{DB}.
		\item \textit{Class-balanced loss reweighting} (CB) \cite{CB}: a balancing method that adjusts weights for each class based on their effective number, represented as $E_n=(1-\beta^n)/(1-\beta)$. It further reweights focal loss, capturing diminishing marginal benefits of data and consequently reducing redundant information from head classes.
		\item \textit{Distribution balance loss reweighting} (DB) \cite{DB}: a specialized balancing method crafted for multi-label classification tasks. It strategically mitigates label co-occurrence redundancy, crucial in the multi-label scenario, and selectively assigns lower weights to ``easy-to-classify'' negative instances through a combination of rebalanced weighting and negative-tolerant regularization.
		\item \textit{Probabilistic label-specific feature learning} (PACA) \cite{PACA}: a multi-label classification method that transforms features based on label-specific prototypes in an end-to-end manner.  In the experiments, $\alpha$ and $\gamma$ are set to 1 as suggested in the code provided by the authors.
		\item \textit{Fusion label enhancement for multi-label learning} (FLEM) \cite{FLEM}: a label-enhancement-based multi-label classification method that integrates the label enhancement process and the multi-label classification training process. In the experiments, $\alpha$ and $\beta$ are set to 0.01 as suggested in the official code.
		\item \textit{Dual perspective of label-specific feature learning} (DELA) \cite{DELA}: a label-specific feature learning approach with a dual perspective for multi-label classification. In the experiments, $\beta$ is set to 1 as suggested in the code provided by the authors.
		\item \textit{Collaborative learning of label semantics and deep label-specific features} (CLIF) \cite{CLIF}: a multi-label classification method that learns label semantics and label-specific features in a collaborative way. In the experiments, the maximum number of iterations is set to 100, $\lambda$ is set to 0.01 as suggested in the code provided by the authors.
		\item \textit{Target Embedding Autoencoder framework based on Knowledge Distillation} (KD-TEA) \cite{KD-TEA}: a multi-label classification method based on target embedding autoencoder and knowledge distillation. It compresses a Teacher model with large parameters into a small Student model through knowledge distillation. In the experiments, the lever parameter $\alpha$ is set to 0.5 as suggested by the original paper.
		\item \textit{AdaBoost.C2} (AdaC2) \cite{AdaBoost.C2}: a multi-path AdaBoost framework specific to MLC, where each boosting path is established for distinct labels and the combination of them is able to provide a maximum optimization to Hamming loss. In the experiments, the maximum number of iterations is set to 10, and the hyperparameter $\delta$ is set to 0.01 as suggested by the original paper.
	\end{itemize}
	
	\begin{table*}[t]
		\centering
		\scriptsize
		\setlength{\tabcolsep}{0.42mm}
		\caption{Experimental results (mean $\pm$ std) on the MLC benchmark datasets measured by PSP@$1$ (PSnDCG@$1$). Each underlined result indicates that SLDL is statistically superior to the comparing method. The best performance on each dataset is denoted in boldface.}
		\label{table:result-PSP@1}
		\resizebox{2\columnwidth}{!}{
			\begin{tabular}{lcccccccccccc}
				\toprule
				Dataset & SLEEC \cite{SLEEC} & DXML \cite{DXML} & FL \cite{focal} & CB \cite{CB} & DB \cite{DB} & PACA \cite{PACA} & FLEM \cite{FLEM} & DELA \cite{DELA} & CLIF \cite{CLIF} & KD-TEA \cite{KD-TEA} & AdaC2 \cite{AdaBoost.C2} & SLDL (Ours) \\
				\midrule
				cal500 & \underline{37.64$\pm$1.61} & \underline{37.48$\pm$1.34} & \underline{37.54$\pm$1.55} & \underline{37.56$\pm$1.75} & \underline{36.65$\pm$1.68} & \underline{37.79$\pm$1.45} & \underline{37.75$\pm$1.46} & \underline{36.67$\pm$1.75} & \underline{36.58$\pm$1.92} & \underline{37.35$\pm$1.59} & \underline{37.33$\pm$2.46} & \textbf{38.31$\pm$1.20} \\
				corel16k-s1 & \underline{24.64$\pm$0.98} & \underline{20.66$\pm$0.64} & \underline{22.55$\pm$0.56} & \underline{22.53$\pm$0.66} & \underline{20.73$\pm$0.48} & \underline{19.55$\pm$1.20} & \underline{24.17$\pm$0.83} & \underline{19.66$\pm$0.57} & \underline{18.81$\pm$0.87} & \underline{20.70$\pm$1.12} & \underline{25.03$\pm$0.79} & \textbf{25.10$\pm$0.91} \\
				corel16k-s2 & \underline{24.15$\pm$0.85} & \underline{20.09$\pm$0.76} & \underline{22.57$\pm$0.66} & \underline{22.44$\pm$0.69} & \underline{20.53$\pm$1.08} & \underline{19.31$\pm$1.10} & \underline{24.04$\pm$0.95} & \underline{19.52$\pm$0.73} & \underline{18.53$\pm$0.87} & \underline{20.63$\pm$1.03} & \underline{24.53$\pm$1.20} & \textbf{24.66$\pm$0.78} \\
				corel16k-s3 & \underline{24.28$\pm$0.96} & \underline{20.64$\pm$0.94} & \underline{22.16$\pm$1.21} & \underline{22.25$\pm$1.15} & \underline{20.26$\pm$1.26} & \underline{19.53$\pm$1.03} & \underline{23.51$\pm$0.89} & \underline{19.39$\pm$0.92} & \underline{18.59$\pm$1.02} & \underline{21.09$\pm$1.10} & \textbf{24.92$\pm$0.58} & 24.81$\pm$1.05 \\
				CUB   & 58.16$\pm$1.23 & \underline{55.76$\pm$0.78} & \underline{56.60$\pm$0.73} & \underline{56.71$\pm$0.93} & \underline{57.63$\pm$1.09} & 58.27$\pm$0.87 & \underline{54.99$\pm$0.62} & \textbf{58.33$\pm$1.09} & \underline{57.25$\pm$0.85} & \underline{57.86$\pm$1.22} & \underline{50.58$\pm$3.92} & 58.10$\pm$0.85 \\
				delicious & \underline{33.17$\pm$0.54} & \underline{28.53$\pm$0.54} & \underline{33.46$\pm$0.62} & \underline{33.41$\pm$0.59} & 33.79$\pm$0.49 & \underline{32.82$\pm$0.55} & \underline{33.71$\pm$0.38} & \underline{33.08$\pm$0.45} & \underline{31.21$\pm$0.40} & \underline{32.62$\pm$0.58} & \textbf{35.59$\pm$0.83} & 33.72$\pm$0.74 \\
				eurlex-dc & \underline{41.46$\pm$1.34} & \underline{40.44$\pm$0.71} & \underline{15.66$\pm$0.88} & \underline{15.60$\pm$0.89} & \underline{16.33$\pm$0.95} & \underline{30.47$\pm$2.36} & \underline{13.04$\pm$0.84} & \underline{32.10$\pm$1.16} & \underline{44.86$\pm$1.14} & \underline{26.73$\pm$1.19} & \underline{20.41$\pm$1.54} & \textbf{48.68$\pm$1.21} \\
				eurlex-sm & \underline{48.42$\pm$1.11} & \underline{51.37$\pm$0.82} & \underline{23.46$\pm$1.08} & \underline{23.47$\pm$1.08} & \underline{20.46$\pm$0.85} & \underline{49.06$\pm$1.61} & \underline{20.55$\pm$0.94} & \underline{45.10$\pm$0.94} & \textbf{58.26$\pm$0.70} & \underline{43.27$\pm$1.38} & \underline{24.44$\pm$3.17} & 57.36$\pm$1.17 \\
				espgame & \underline{24.70$\pm$0.77} & \underline{19.75$\pm$0.95} & \underline{24.14$\pm$1.22} & \underline{24.29$\pm$1.08} & \underline{23.94$\pm$1.08} & \underline{25.79$\pm$0.99} & \underline{24.81$\pm$1.05} & \underline{23.90$\pm$0.97} & \underline{22.53$\pm$0.62} & \underline{23.78$\pm$0.74} & \textbf{31.03$\pm$0.91} & 28.06$\pm$0.80 \\
				stackex-chemistry & \underline{14.06$\pm$0.84} & \underline{17.69$\pm$0.90} & \underline{24.02$\pm$1.22} & \underline{23.94$\pm$1.26} & \underline{23.02$\pm$1.14} & \underline{24.53$\pm$0.93} & \underline{20.65$\pm$1.24} & \underline{27.70$\pm$1.28} & \underline{25.58$\pm$1.19} & \underline{26.91$\pm$1.14} & \underline{29.04$\pm$1.83} & \textbf{30.30$\pm$0.70} \\
				stackex-chess & \underline{12.61$\pm$2.09} & \underline{26.55$\pm$2.27} & \underline{20.25$\pm$1.60} & \underline{20.89$\pm$2.32} & \underline{18.86$\pm$1.84} & \underline{19.96$\pm$2.01} & \underline{17.57$\pm$1.70} & \underline{24.00$\pm$2.37} & \underline{22.79$\pm$2.44} & \underline{21.73$\pm$2.08} & \underline{27.00$\pm$2.39} & \textbf{28.65$\pm$2.87} \\
				stackex-coffee & \underline{28.75$\pm$8.49} & \underline{19.85$\pm$6.99} & \underline{15.46$\pm$8.03} & \underline{16.71$\pm$7.32} & \underline{17.45$\pm$9.82} & \underline{10.01$\pm$4.57} & \underline{17.47$\pm$9.18} & \underline{20.00$\pm$6.95} & \underline{22.05$\pm$9.11} & \underline{14.51$\pm$6.80} & \underline{27.67$\pm$7.19} & \textbf{32.92$\pm$8.27} \\
				stackex-cooking & \underline{15.12$\pm$0.71} & \underline{18.46$\pm$5.30} & \underline{27.06$\pm$0.98} & \underline{27.15$\pm$0.94} & \underline{26.26$\pm$1.15} & \underline{29.57$\pm$1.28} & \underline{21.51$\pm$0.78} & \underline{33.93$\pm$0.93} & \underline{31.86$\pm$0.75} & \underline{32.83$\pm$0.76} & \underline{38.58$\pm$1.28} & \textbf{39.06$\pm$0.80} \\
				stackex-cs & \underline{15.88$\pm$0.78} & \underline{24.33$\pm$1.07} & \underline{28.92$\pm$1.38} & \underline{28.88$\pm$1.37} & \underline{30.21$\pm$1.17} & \underline{30.21$\pm$1.09} & \underline{25.11$\pm$1.34} & \underline{34.33$\pm$1.59} & \underline{31.78$\pm$1.15} & \underline{32.83$\pm$1.41} & \textbf{35.21$\pm$1.48} & 34.47$\pm$1.64 \\
				stackex-philosophy & \underline{11.06$\pm$1.21} & \underline{27.21$\pm$1.56} & \underline{21.29$\pm$1.51} & \underline{21.26$\pm$1.33} & \underline{19.74$\pm$1.59} & \underline{22.19$\pm$2.39} & \underline{18.81$\pm$1.26} & \underline{26.97$\pm$1.73} & \underline{26.27$\pm$1.69} & \underline{24.52$\pm$2.01} & \underline{30.93$\pm$1.62} & \textbf{31.50$\pm$1.68} \\
				\midrule
				Avg. Rank & 6.53  & 7.93  & 7.47  & 7.40  & 8.27  & 6.93  & 7.73  & 6.13  & 7.27  & 6.93  & 3.73  & \textbf{1.60}  \\
				\bottomrule
			\end{tabular}%
		}
	\end{table*}%

	\begin{table*}[t]
		\centering
		\scriptsize
		\setlength{\tabcolsep}{0.28mm}
		\caption{Experimental results (mean $\pm$ std) on the MLC benchmark datasets measured by PSP@$5$. Each underlined result indicates that SLDL is statistically superior to the comparing method. The best performance on each dataset is denoted in boldface.}
		\label{table:result-PSP@5}
		\resizebox{2\columnwidth}{!}{
			\begin{tabular}{lcccccccccccc}
				\toprule
				Dataset & SLEEC \cite{SLEEC} & DXML \cite{DXML} & FL \cite{focal} & CB \cite{CB} & DB \cite{DB} & PACA \cite{PACA} & FLEM \cite{FLEM} & DELA \cite{DELA} & CLIF \cite{CLIF} & KD-TEA \cite{KD-TEA} & AdaC2 \cite{AdaBoost.C2} & SLDL (Ours) \\
				\midrule
				cal500 & \underline{39.30$\pm$1.83} & \underline{38.80$\pm$1.99} & \underline{38.53$\pm$2.04} & \underline{38.61$\pm$1.94} & \underline{37.68$\pm$1.41} & \underline{38.65$\pm$1.88} & \underline{38.67$\pm$1.68} & \underline{38.42$\pm$1.67} & \underline{38.64$\pm$2.58} & \underline{39.60$\pm$1.43} & \underline{39.45$\pm$1.72} & \textbf{39.89$\pm$1.76} \\
				corel16k-s1 & \underline{33.00$\pm$0.89} & \underline{28.37$\pm$1.07} & \underline{30.84$\pm$0.53} & \underline{30.85$\pm$0.53} & \underline{28.34$\pm$0.72} & \underline{25.65$\pm$1.13} & \underline{32.77$\pm$0.54} & \underline{26.64$\pm$0.54} & \underline{25.79$\pm$0.84} & \underline{29.06$\pm$0.75} & \underline{32.62$\pm$0.60} & \textbf{34.08$\pm$0.71} \\
				corel16k-s2 & \underline{32.78$\pm$1.16} & \underline{27.88$\pm$0.88} & \underline{30.77$\pm$1.09} & \underline{30.76$\pm$1.13} & \underline{28.18$\pm$0.85} & \underline{26.24$\pm$0.86} & \underline{32.72$\pm$1.14} & \underline{26.71$\pm$0.85} & \underline{25.92$\pm$1.17} & \underline{28.91$\pm$1.11} & \underline{32.46$\pm$1.02} & \textbf{34.01$\pm$1.33} \\
				corel16k-s3 & \underline{33.47$\pm$1.01} & \underline{28.72$\pm$0.99} & \underline{31.42$\pm$1.02} & \underline{31.40$\pm$0.93} & \underline{28.74$\pm$0.99} & \underline{26.18$\pm$1.29} & \underline{33.19$\pm$1.17} & \underline{26.88$\pm$0.89} & \underline{26.04$\pm$0.90} & \underline{28.99$\pm$0.65} & \underline{32.98$\pm$0.89} & \textbf{34.34$\pm$0.83} \\
				CUB   & 59.88$\pm$0.91 & \underline{52.67$\pm$0.93} & \underline{57.29$\pm$0.91} & \underline{57.44$\pm$1.07} & \underline{58.92$\pm$0.94} & 59.39$\pm$1.00 & \underline{55.03$\pm$0.67} & \textbf{60.32$\pm$0.92} & \underline{58.71$\pm$1.14} & \underline{59.00$\pm$1.21} & \underline{56.40$\pm$1.49} & 59.18$\pm$0.78 \\
				delicious & \underline{34.84$\pm$0.63} & \underline{28.93$\pm$0.54} & \underline{34.73$\pm$0.45} & \underline{34.70$\pm$0.52} & \underline{34.64$\pm$0.53} & \underline{34.61$\pm$0.61} & \underline{35.02$\pm$0.41} & \underline{34.87$\pm$0.52} & \underline{32.93$\pm$0.50} & \underline{34.82$\pm$0.36} & \textbf{36.84$\pm$0.61} & 35.85$\pm$0.56 \\
				eurlex-dc & \underline{60.18$\pm$1.61} & \underline{60.92$\pm$1.33} & \underline{22.28$\pm$1.11} & \underline{22.20$\pm$1.14} & \underline{23.69$\pm$1.00} & \underline{44.21$\pm$2.25} & \underline{19.39$\pm$0.96} & \underline{48.45$\pm$1.10} & \underline{62.30$\pm$1.37} & \underline{42.06$\pm$1.25} & \underline{29.86$\pm$1.39} & \textbf{64.77$\pm$1.19} \\
				eurlex-sm & \underline{65.97$\pm$1.01} & \underline{68.62$\pm$0.91} & \underline{35.37$\pm$1.15} & \underline{35.35$\pm$1.17} & \underline{36.41$\pm$1.24} & \underline{62.02$\pm$1.44} & \underline{31.41$\pm$1.02} & \underline{58.40$\pm$1.20} & \underline{72.45$\pm$0.49} & \underline{56.63$\pm$1.19} & \underline{39.00$\pm$1.79} & \textbf{73.12$\pm$0.83} \\
				espgame & \underline{26.04$\pm$0.41} & \underline{20.82$\pm$0.53} & \underline{26.07$\pm$0.53} & \underline{26.10$\pm$0.58} & \underline{24.57$\pm$0.56} & \underline{25.99$\pm$0.65} & \underline{26.68$\pm$0.50} & \underline{25.46$\pm$0.57} & \underline{24.34$\pm$0.44} & \underline{26.14$\pm$0.54} & \textbf{31.22$\pm$0.54} & 30.19$\pm$0.37 \\
				stackex-chemistry & \underline{23.35$\pm$0.81} & \underline{27.87$\pm$1.34} & \underline{38.06$\pm$1.25} & \underline{37.93$\pm$1.31} & \underline{36.63$\pm$1.05} & \underline{36.02$\pm$1.42} & \underline{32.20$\pm$0.80} & \underline{40.07$\pm$0.75} & \underline{37.55$\pm$0.68} & \underline{40.46$\pm$1.29} & \underline{38.42$\pm$0.75} & \textbf{41.86$\pm$0.71} \\
				stackex-chess & \underline{21.27$\pm$1.95} & \underline{36.28$\pm$2.99} & \underline{29.07$\pm$1.90} & \underline{29.45$\pm$2.17} & \underline{29.04$\pm$2.03} & \underline{29.48$\pm$2.50} & \underline{26.56$\pm$1.96} & \underline{35.45$\pm$2.13} & \underline{35.07$\pm$2.37} & \underline{31.89$\pm$2.31} & \underline{34.59$\pm$2.75} & \textbf{37.81$\pm$1.84} \\
				stackex-coffee & \underline{58.14$\pm$7.81} & \underline{37.40$\pm$8.15} & \underline{31.72$\pm$8.67} & \underline{30.32$\pm$9.49} & \underline{31.82$\pm$8.21} & \underline{22.23$\pm$6.44} & \underline{34.19$\pm$11.45} & \underline{45.27$\pm$10.29} & \underline{40.19$\pm$9.85} & \underline{27.88$\pm$7.56} & \underline{41.48$\pm$8.61} & \textbf{58.48$\pm$9.85} \\
				stackex-cooking & \underline{19.88$\pm$0.61} & \underline{25.13$\pm$4.76} & \underline{35.77$\pm$0.78} & \underline{35.75$\pm$0.80} & \underline{35.11$\pm$0.90} & \underline{35.65$\pm$1.65} & \underline{30.16$\pm$0.85} & \underline{40.46$\pm$0.83} & \underline{38.36$\pm$1.18} & \underline{41.81$\pm$1.54} & \textbf{43.10$\pm$0.93} & 42.47$\pm$0.81 \\
				stackex-cs & \underline{22.94$\pm$0.86} & \underline{34.82$\pm$1.35} & \underline{43.14$\pm$1.30} & \underline{42.99$\pm$1.17} & \underline{43.02$\pm$1.00} & \underline{41.78$\pm$1.23} & \underline{37.51$\pm$0.89} & \underline{47.13$\pm$1.32} & \underline{45.10$\pm$1.46} & \underline{46.66$\pm$1.18} & \underline{45.71$\pm$1.13} & \textbf{47.44$\pm$1.33} \\
				stackex-philosophy & \underline{19.06$\pm$1.03} & \underline{35.55$\pm$1.49} & \underline{29.89$\pm$1.20} & \underline{29.86$\pm$1.07} & \underline{30.33$\pm$1.52} & \underline{29.50$\pm$2.29} & \underline{25.30$\pm$1.08} & \underline{34.56$\pm$1.27} & \underline{35.36$\pm$1.93} & \underline{33.77$\pm$1.27} & \underline{36.20$\pm$1.38} & \textbf{37.99$\pm$1.41} \\
				\midrule
				Avg. Rank & 6.27  & 7.80  & 7.40  & 7.80  & 8.60  & 8.53  & 7.67  & 5.73  & 7.07  & 5.53  & 4.20  & \textbf{1.40}  \\
				\bottomrule
			\end{tabular}%
		}
	\end{table*}%

	\subsection{Evaluation Metrics}
	We evaluate the performance of the algorithms using several widely-used evaluation metrics for multi-label classification. The metrics include Precision (P$@k$), nDCG (nDCG@$k$), Propensity Scored precision (PSP@$k$), and Propensity Scored nDCG (PSnDCG@$k$). These metrics are defined as follows:
	\begin{equation}
		\text{P}@k={{\mathbb{E}}_{\boldsymbol{x}}}\left[ \sum\limits_{i\in {{Top}_{k}}\left( y \right)}{\frac{1}{k}\cdot {\mathbb{P}\left( {{y}^{(i)}}=1\left| \boldsymbol{x} \right. \right)}} \right],
	\end{equation}
	\begin{equation}
		\text{nDCG}@k={{\mathbb{E}}_{\boldsymbol{x}}}\left[ \sum\limits_{i\in {{Top}_{k}}\left( y \right)}{\frac{\mathbb{P}\left( {{y}^{(i)}}=1\left| \boldsymbol{x} \right. \right)}{\cdot \log \left( i+1 \right)\cdot \sum\nolimits_{i=1}^{k}{\frac{1}{\log \left( i+1 \right)}}}} \right],
	\end{equation}
	\begin{equation}
		\text{PSP}@k={{\mathbb{E}}_{\boldsymbol{x}}}\left[ \sum\limits_{i\in {{Top}_{k}}\left( y \right)}{\frac{1}{k}\cdot \frac{\mathbb{P}\left( {{y}^{(i)}}=1\left| \boldsymbol{x} \right. \right)}{{{\xi }^{(i)}}}} \right],
	\end{equation}
	\begin{equation}
		\begin{aligned}
			&\quad \text{PSnDCG}@k\\
			& ={{\mathbb{E}}_{\boldsymbol{x}}}\left[ \sum\limits_{i\in {{Top}_{k}}\left( y \right)}{\frac{\mathbb{P}\left( {{y}^{(i)}}=1\left| \boldsymbol{x} \right. \right)}{{{\xi }^{(i)}}\cdot \log \left( i+1 \right)\cdot \sum\nolimits_{i=1}^{k}{\frac{1}{\log \left( i+1 \right)}}}} \right],
		\end{aligned}
	\end{equation}
	where 
	\begin{equation}
		\begin{aligned}
			&\quad \mathbb{P}\left( y^{(i)}_{\ast}=1\left| x \right. \right)\\
			&=\mathbb{P}\left( {{y}^{(i)}}=1\left| x \right. \right)\cdot {{\mathbb{E}}_{{{y}^{(\neg i)}}\left| x,{{y}^{(i)}}=1 \right.}}\left[ \frac{1}{1+\sum\nolimits_{j\ne i}{{{y}^{(j)}}}} \right],
		\end{aligned}
	\end{equation}
	${Top}_k(y)$ returns the $k$ largest indices of $y$ ranked in descending order, ${y}^{(\neg i)}$ denotes the vector of all but the $i$-th label, and $\xi^{(i)}$ is the propensity score for the $i$-th label which defined in \cite{PfastreXML}. In this experiment, we utilize P@$1$ (equal to nDCG@$1$), P@$3$, P@$5$, nDCG@3, nDCG@5, PSP@$1$ (equal to PSnDCG@$1$), PSP@$5$, and PSnDCG@5 to evaluate each algorithm.

	\begin{table*}[t]
		\centering
		\scriptsize
		\setlength{\tabcolsep}{0.3mm}
		\caption{Experimental results (mean $\pm$ std) on the MLC benchmark datasets measured by PSnDCG@$5$. Each underlined result indicates that SLDL is statistically superior to the comparing method. The best performance on each dataset is denoted in boldface.}
		\label{table:result-PSnDCG@5}
		\resizebox{2\columnwidth}{!}{
			\begin{tabular}{lcccccccccccc}
				\toprule
				Dataset & SLEEC \cite{SLEEC} & DXML \cite{DXML} & FL \cite{focal} & CB \cite{CB} & DB \cite{DB} & PACA \cite{PACA} & FLEM \cite{FLEM} & DELA \cite{DELA} & CLIF \cite{CLIF} & KD-TEA \cite{KD-TEA} & AdaC2 \cite{AdaBoost.C2} & SLDL (Ours) \\
				\midrule
				cal500 & \underline{38.72$\pm$1.50} & \underline{38.37$\pm$1.58} & \underline{38.22$\pm$1.73} & \underline{38.34$\pm$1.87} & \underline{37.37$\pm$1.44} & \underline{38.32$\pm$1.60} & \underline{38.36$\pm$1.50} & \underline{37.85$\pm$1.56} & \underline{38.05$\pm$2.30} & \underline{38.99$\pm$1.23} & \underline{39.05$\pm$1.63} & \textbf{39.34$\pm$1.44} \\
				corel16k-s1 & \underline{29.43$\pm$0.81} & \underline{25.09$\pm$0.82} & \underline{27.27$\pm$0.40} & \underline{27.28$\pm$0.38} & \underline{25.29$\pm$0.51} & \underline{23.00$\pm$1.11} & \underline{29.10$\pm$0.51} & \underline{23.48$\pm$0.53} & \underline{22.81$\pm$0.86} & \underline{25.52$\pm$0.70} & \underline{29.32$\pm$0.62} & \textbf{30.13$\pm$0.67} \\
				corel16k-s2 & \underline{28.75$\pm$0.95} & \underline{24.21$\pm$0.61} & \underline{26.81$\pm$0.82} & \underline{26.77$\pm$0.82} & \underline{24.88$\pm$0.74} & \underline{22.94$\pm$0.81} & \underline{28.55$\pm$0.93} & \underline{23.24$\pm$0.69} & \underline{22.44$\pm$0.93} & \underline{25.11$\pm$0.96} & \underline{28.82$\pm$0.96} & \textbf{29.40$\pm$1.02} \\
				corel16k-s3 & \underline{29.26$\pm$1.01} & \underline{24.93$\pm$1.02} & \underline{27.19$\pm$1.00} & \underline{27.21$\pm$0.91} & \underline{25.28$\pm$1.05} & \underline{23.01$\pm$1.19} & \underline{28.84$\pm$1.02} & \underline{23.39$\pm$0.78} & \underline{22.60$\pm$0.87} & \underline{25.37$\pm$0.93} & \underline{29.27$\pm$0.80} & \textbf{29.75$\pm$1.03} \\
				CUB   & 59.52$\pm$0.94 & \underline{53.35$\pm$0.81} & \underline{57.13$\pm$0.83} & \underline{57.28$\pm$0.99} & \underline{58.66$\pm$0.92} & 59.17$\pm$0.95 & \underline{55.07$\pm$0.66} & \textbf{59.89$\pm$0.92} & \underline{58.38$\pm$1.01} & \underline{58.75$\pm$1.16} & \underline{55.17$\pm$2.00} & 58.96$\pm$0.79 \\
				delicious & \underline{34.50$\pm$0.59} & \underline{28.84$\pm$0.53} & \underline{34.45$\pm$0.44} & \underline{34.42$\pm$0.50} & \underline{34.49$\pm$0.51} & \underline{34.22$\pm$0.57} & \underline{34.75$\pm$0.36} & \underline{34.48$\pm$0.46} & \underline{32.55$\pm$0.51} & \underline{34.32$\pm$0.38} & \textbf{36.62$\pm$0.64} & 35.32$\pm$0.64 \\
				eurlex-dc & \underline{53.63$\pm$1.30} & \underline{53.47$\pm$0.87} & \underline{20.92$\pm$0.90} & \underline{20.86$\pm$0.93} & \underline{22.39$\pm$0.87} & \underline{39.54$\pm$2.34} & \underline{18.26$\pm$0.80} & \underline{42.64$\pm$0.99} & \underline{56.04$\pm$1.05} & \underline{36.85$\pm$0.93} & \underline{27.94$\pm$1.19} & \textbf{59.53$\pm$0.96} \\
				eurlex-sm & \underline{58.84$\pm$1.12} & \underline{61.65$\pm$0.76} & \underline{28.62$\pm$1.01} & \underline{28.62$\pm$1.02} & \underline{29.99$\pm$1.08} & \underline{56.52$\pm$1.60} & \underline{25.53$\pm$0.88} & \underline{52.22$\pm$1.13} & \underline{67.09$\pm$0.50} & \underline{50.76$\pm$1.21} & \underline{32.71$\pm$1.24} & \textbf{67.40$\pm$0.98} \\
				espgame & \underline{24.60$\pm$0.41} & \underline{19.56$\pm$0.57} & \underline{24.47$\pm$0.76} & \underline{24.51$\pm$0.74} & \underline{23.54$\pm$0.77} & \underline{25.00$\pm$0.66} & \underline{25.03$\pm$0.65} & \underline{24.23$\pm$0.65} & \underline{23.11$\pm$0.49} & \underline{24.62$\pm$0.53} & \textbf{30.25$\pm$0.69} & 28.72$\pm$0.43 \\
				stackex-chemistry & \underline{19.67$\pm$0.58} & \underline{23.65$\pm$0.88} & \underline{32.64$\pm$1.04} & \underline{32.55$\pm$1.05} & \underline{31.40$\pm$1.06} & \underline{31.45$\pm$1.15} & \underline{27.85$\pm$0.88} & \underline{35.30$\pm$0.79} & \underline{32.76$\pm$0.89} & \underline{35.11$\pm$1.08} & \underline{35.06$\pm$1.14} & \textbf{37.07$\pm$0.84} \\
				stackex-chess & \underline{17.75$\pm$2.02} & \underline{32.81$\pm$2.92} & \underline{25.72$\pm$1.77} & \underline{26.26$\pm$2.37} & \underline{25.43$\pm$2.13} & \underline{25.33$\pm$2.00} & \underline{23.23$\pm$1.78} & \underline{31.21$\pm$2.11} & \underline{30.45$\pm$2.07} & \underline{28.05$\pm$2.65} & \underline{32.07$\pm$2.17} & \textbf{34.42$\pm$2.81} \\
				stackex-coffee & \textbf{46.03$\pm$8.57} & \underline{29.33$\pm$7.67} & \underline{25.01$\pm$7.74} & \underline{25.03$\pm$6.89} & \underline{25.15$\pm$7.14} & \underline{16.53$\pm$5.33} & \underline{26.77$\pm$9.16} & \underline{34.57$\pm$9.10} & \underline{31.80$\pm$8.83} & \underline{21.54$\pm$6.36} & \underline{35.83$\pm$7.41} & 45.78$\pm$9.16 \\
				stackex-cooking & \underline{17.88$\pm$0.38} & \underline{22.40$\pm$5.03} & \underline{32.23$\pm$0.69} & \underline{32.23$\pm$0.75} & \underline{31.55$\pm$0.90} & \underline{33.02$\pm$1.56} & \underline{26.65$\pm$0.68} & \underline{37.62$\pm$0.69} & \underline{35.51$\pm$1.00} & \underline{38.10$\pm$1.17} & \textbf{41.09$\pm$0.90} & 40.40$\pm$0.80 \\
				stackex-cs & \underline{19.49$\pm$0.83} & \underline{30.07$\pm$0.99} & \underline{37.28$\pm$1.07} & \underline{37.18$\pm$0.97} & \underline{37.78$\pm$0.92} & \underline{36.67$\pm$0.80} & \underline{32.15$\pm$0.84} & \underline{41.77$\pm$1.16} & \underline{39.55$\pm$1.21} & \underline{40.98$\pm$1.12} & \underline{41.40$\pm$1.17} & \textbf{41.85$\pm$1.75} \\
				stackex-philosophy & \underline{16.95$\pm$1.04} & \underline{33.80$\pm$1.21} & \underline{27.79$\pm$1.16} & \underline{27.75$\pm$1.08} & \underline{27.44$\pm$1.20} & \underline{27.61$\pm$2.15} & \underline{23.92$\pm$1.03} & \underline{32.55$\pm$1.40} & \underline{32.67$\pm$1.49} & \underline{31.23$\pm$1.39} & \underline{35.46$\pm$1.29} & \textbf{36.42$\pm$1.04} \\
				\midrule
				Avg. Rank & 6.13  & 7.93  & 7.67  & 7.53  & 8.40  & 8.20  & 7.87  & 5.93  & 7.20  & 6.00  & 3.53  & \textbf{1.47}  \\
				\bottomrule
			\end{tabular}%
		}
	\end{table*}%

	\begin{table}[t]
		\centering
		\caption{Friedman statistics $F_F$ in terms of each evaluation criterion and the critical value at 0.05 significance level.}
		\setlength{\tabcolsep}{6.5mm}
		\resizebox{\columnwidth}{!}{
			\begin{tabular}{lcc}
				\toprule
				Evaluation Criterion & $F_F$ & Critical value \\
				\midrule
				P@1   & 6.0870  & \multirow{8}[2]{*}{1.8513 } \\
				P@3   & 7.3261  &  \\
				P@5   & 7.0806  &  \\
				nDCG@3 & 7.7378  &  \\
				nDCG@5 & 7.1698  &  \\
				PSP@1 & 5.6957  &  \\
				PSP@5 & 6.9181  &  \\
				PSnDCG@5 & 7.1817  &  \\
				\bottomrule
			\end{tabular}%
		}		
		\label{tab:Friedman}%
	\end{table}%

	\begin{figure*} [t]
		\centering 
		\resizebox{2\columnwidth}{!}{
			\subfloat[P@1]{
				\includegraphics[width=0.486\columnwidth]{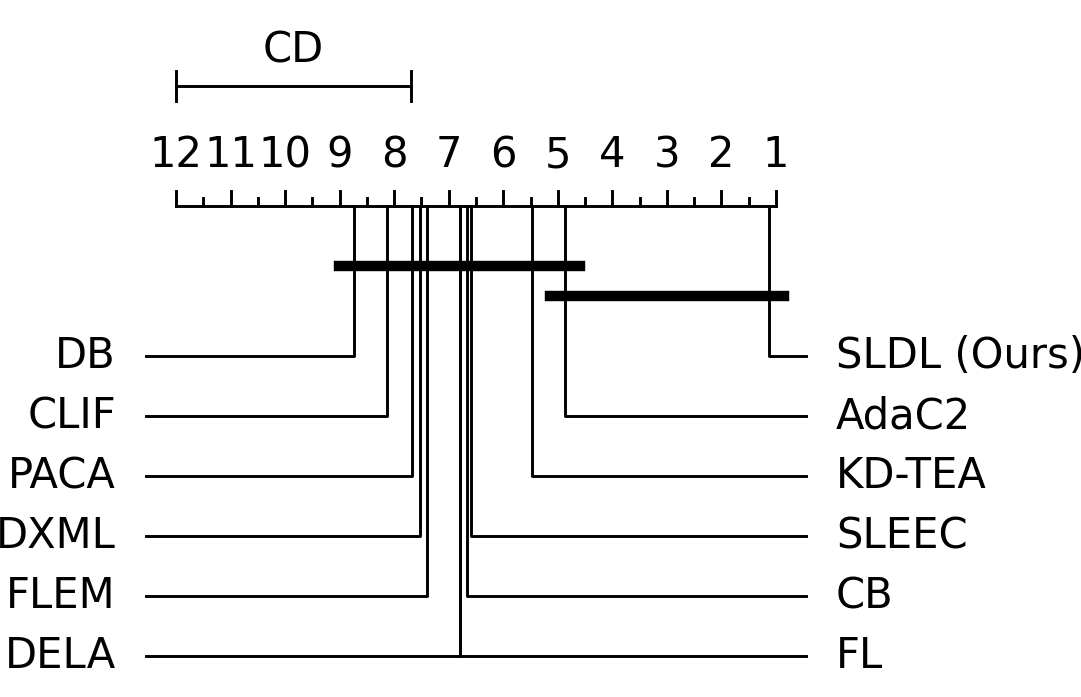}
				\label{fig:CD-diagram-P@1}
			}
			\subfloat[P@3]{
				\includegraphics[width=0.486\columnwidth]{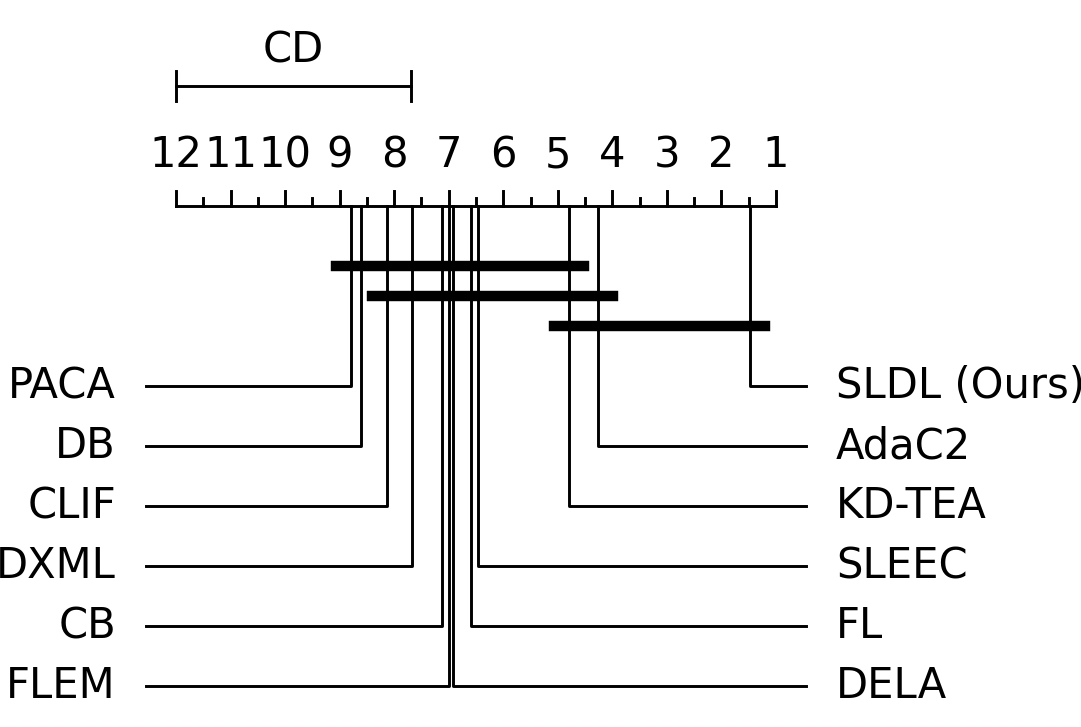}
				\label{fig:CD-diagram-P@3}
			}
			\subfloat[P@5]{
				\includegraphics[width=0.486\columnwidth]{assets/CD-diagram/P-5}
				\label{fig:CD-diagram-P@5}
			}
			\subfloat[nDCG@3]{
				\includegraphics[width=0.486\columnwidth]{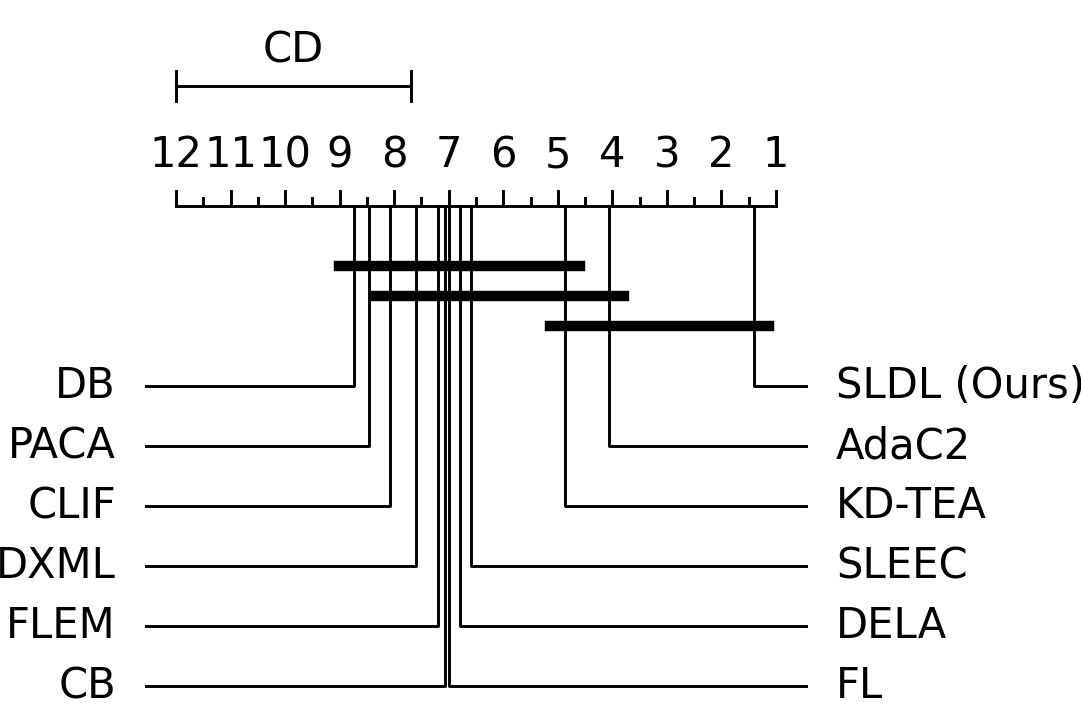}
				\label{fig:CD-diagram-nDCG@3}
			}
		}
		\resizebox{2\columnwidth}{!}{
			\subfloat[nDCG@5]{
				\includegraphics[width=0.486\columnwidth]{assets/CD-diagram/nDCG-5}
				\label{fig:CD-diagram-nDCG@5}
			}
			\subfloat[PSP@1]{
				\includegraphics[width=0.486\columnwidth]{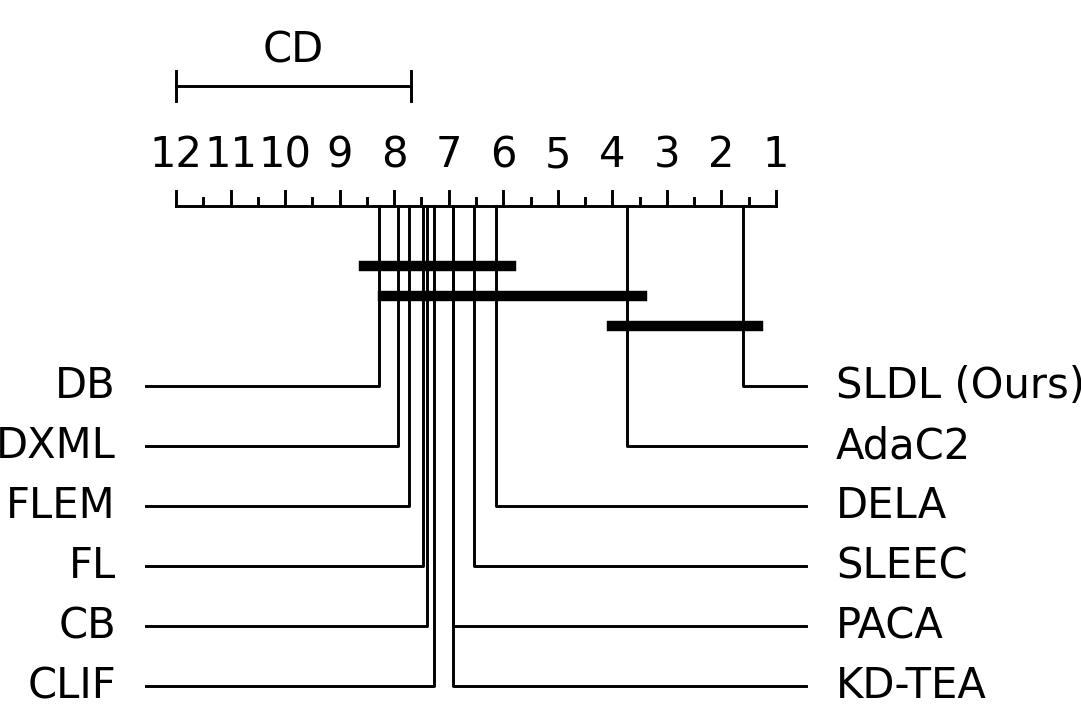}
				\label{fig:CD-diagram-PSP@1}
			}
			\subfloat[PSP@5]{
				\includegraphics[width=0.486\columnwidth]{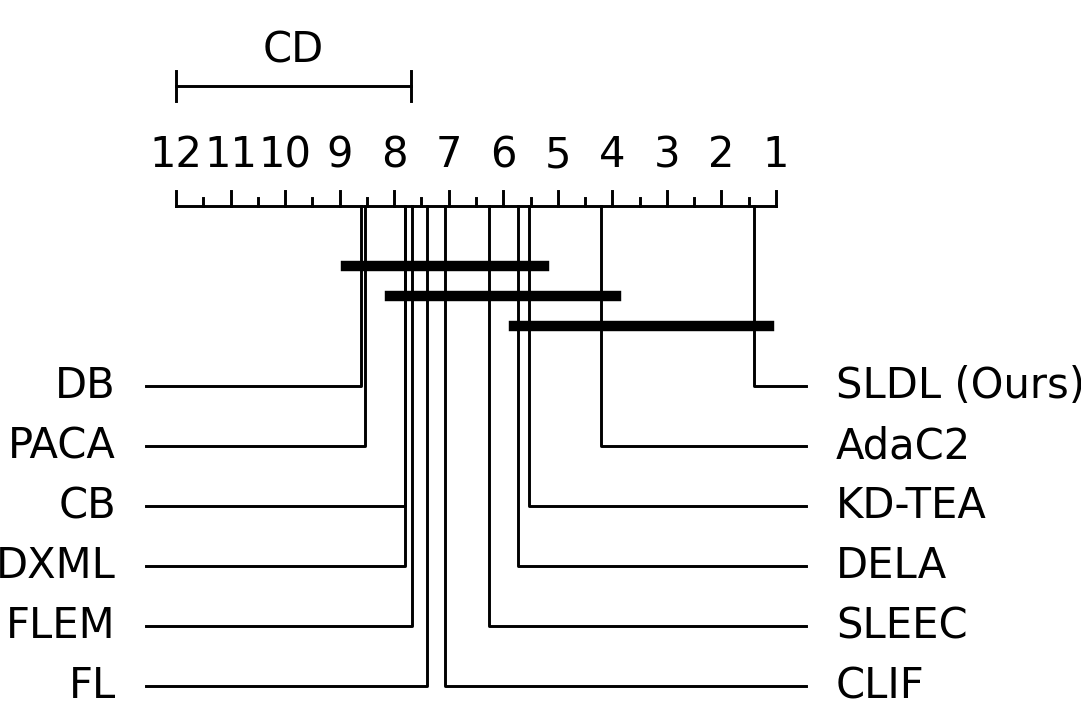}
				\label{fig:CD-diagram-PSP@5}
			}
			\subfloat[PSnDCG@5]{
				\includegraphics[width=0.486\columnwidth]{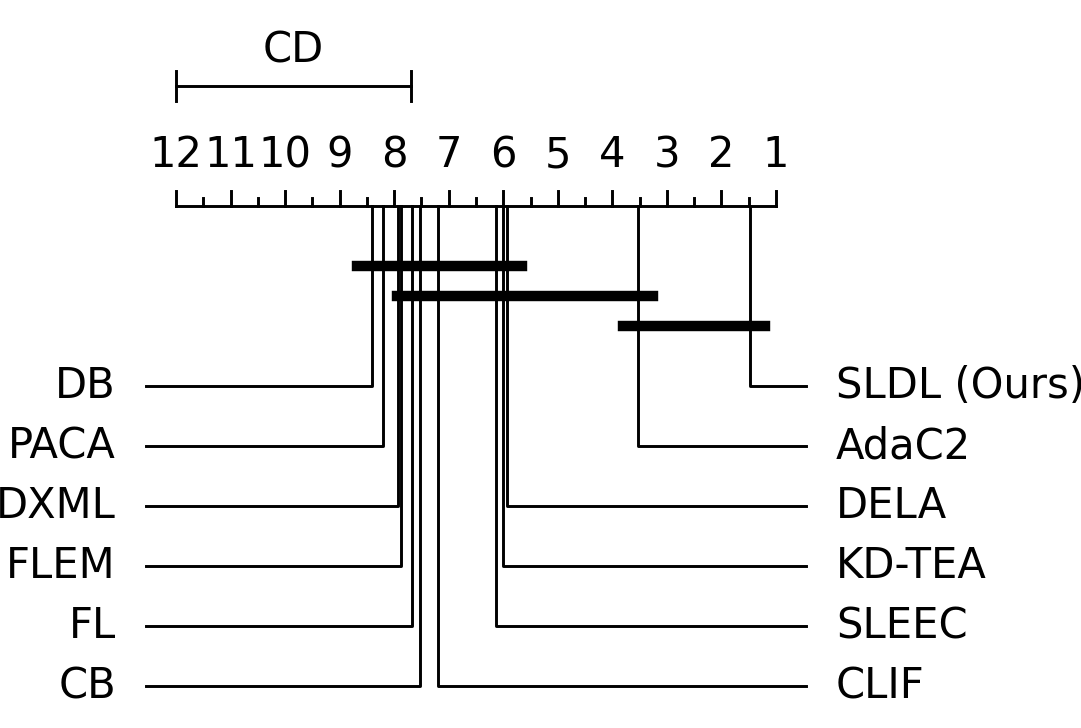}
				\label{fig:CD-diagram-PSnDCG@5}
			}
		}
		\caption{Comparison of SLDL (control algorithm) against comparing algorithms with the \textit{Nemenyi test}. Algorithms not connected with SLDL in the CD diagram are considered to have a significantly different performance from the control algorithm.}
		\label{fig:CD-diagram}
	\end{figure*}

	\begin{figure*} [t]
		\resizebox{2\columnwidth}{!}{
			\subfloat[cal500]{
				\includegraphics[width=0.395\columnwidth]{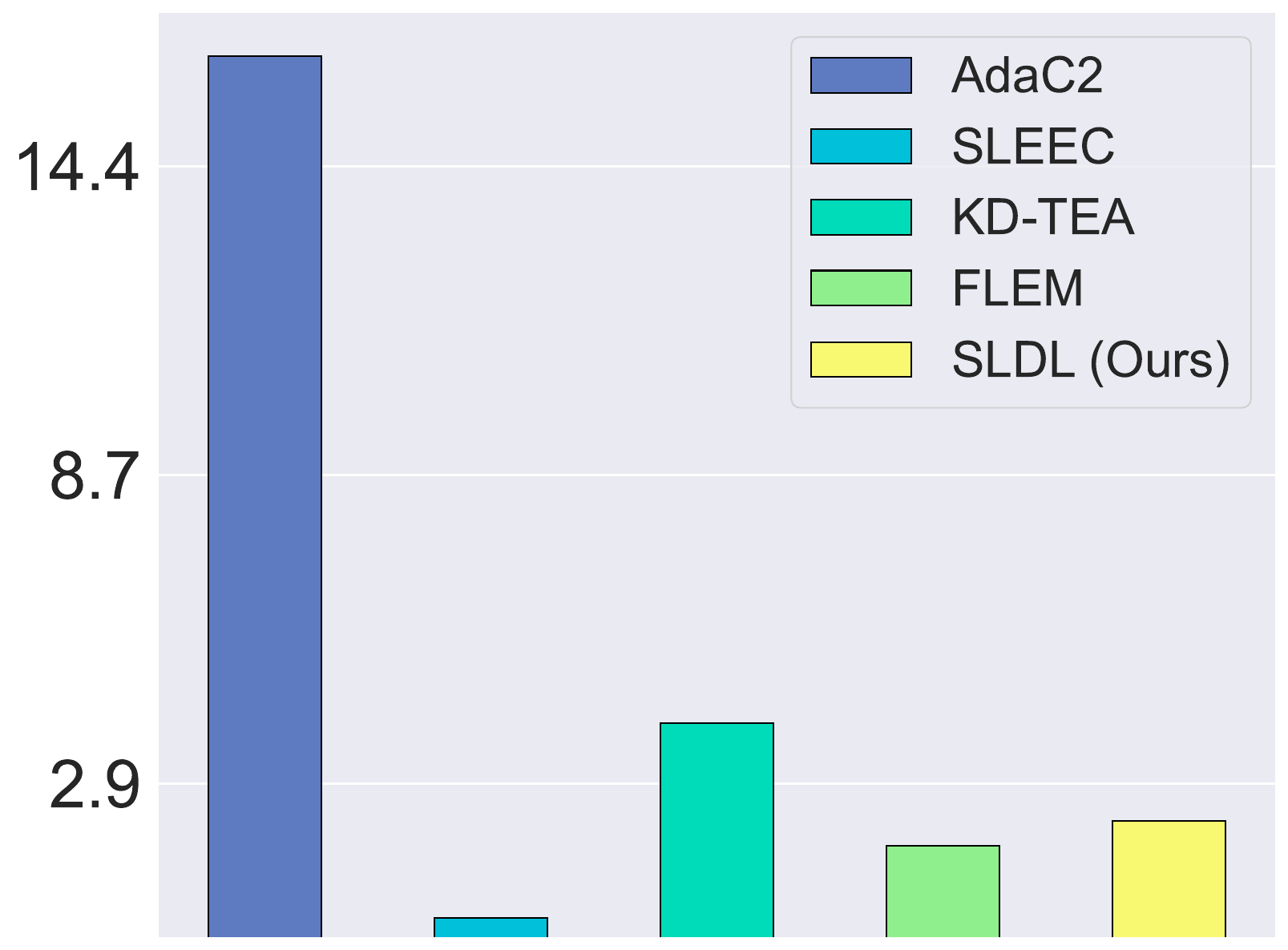}
			}
			\subfloat[corel16k-s1]{
				\includegraphics[width=0.395\columnwidth]{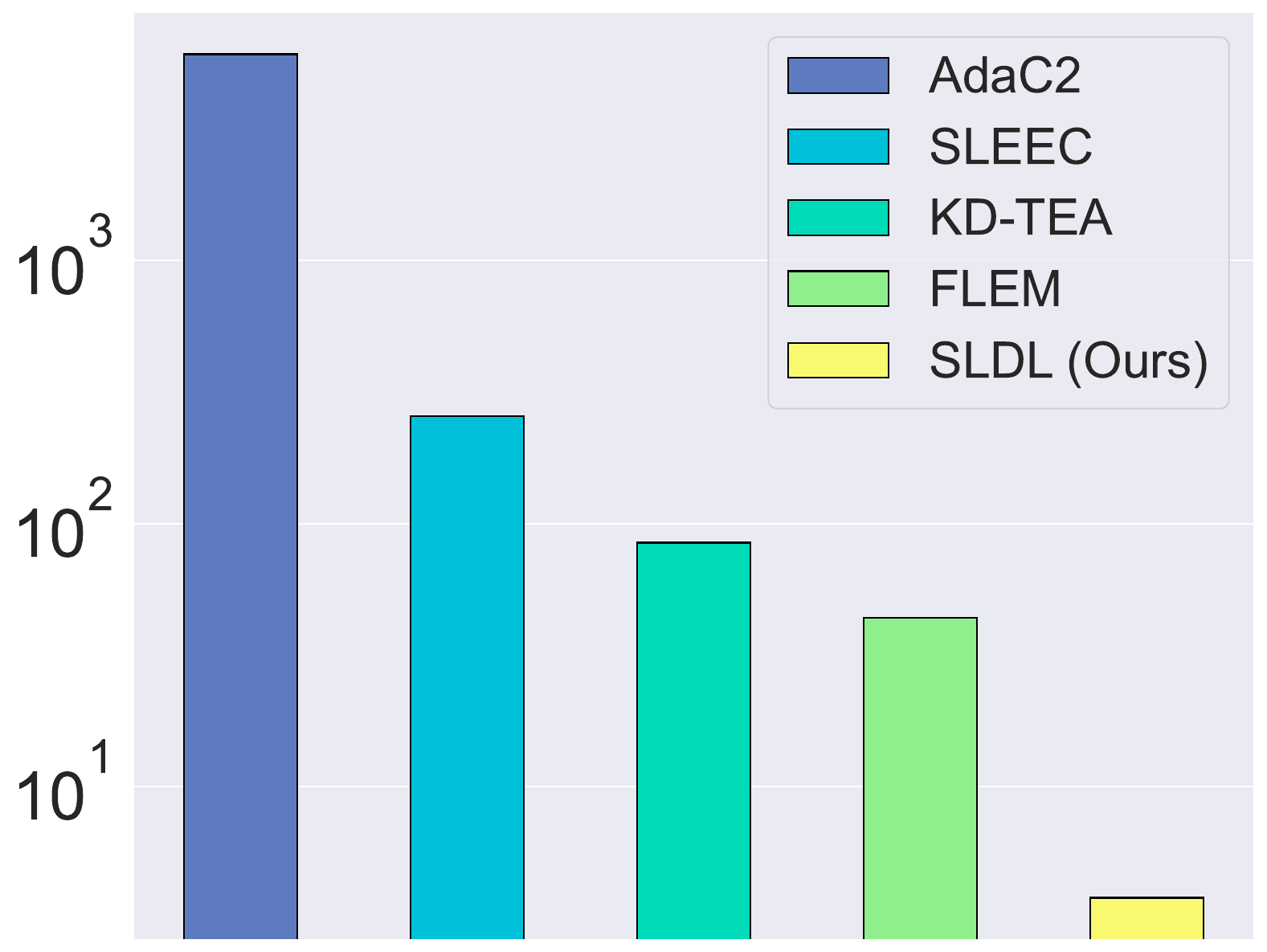}
			}
			\subfloat[corel16k-s2]{
				\includegraphics[width=0.395\columnwidth]{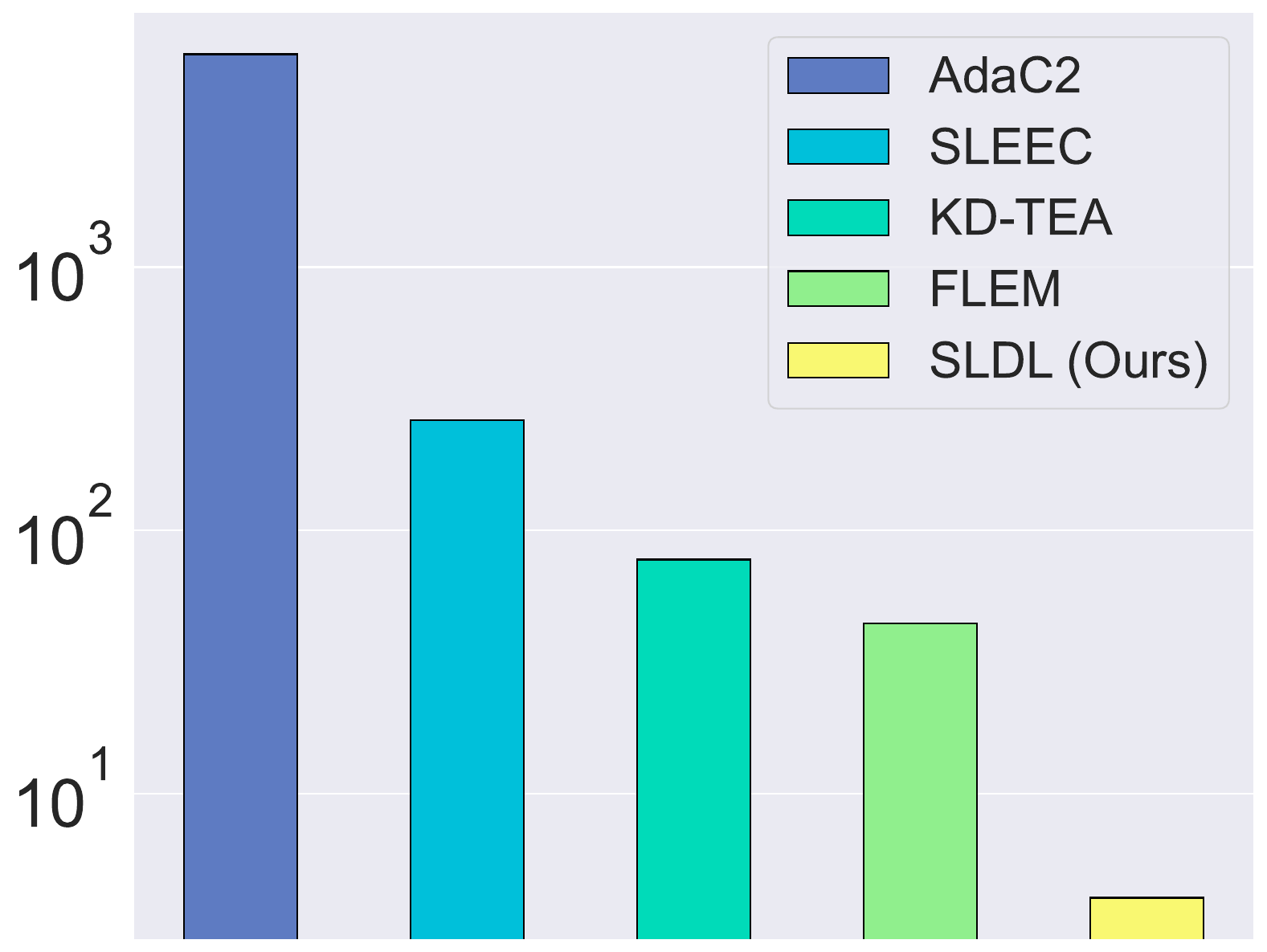}
			}
			\subfloat[corel16k-s3]{
				\includegraphics[width=0.395\columnwidth]{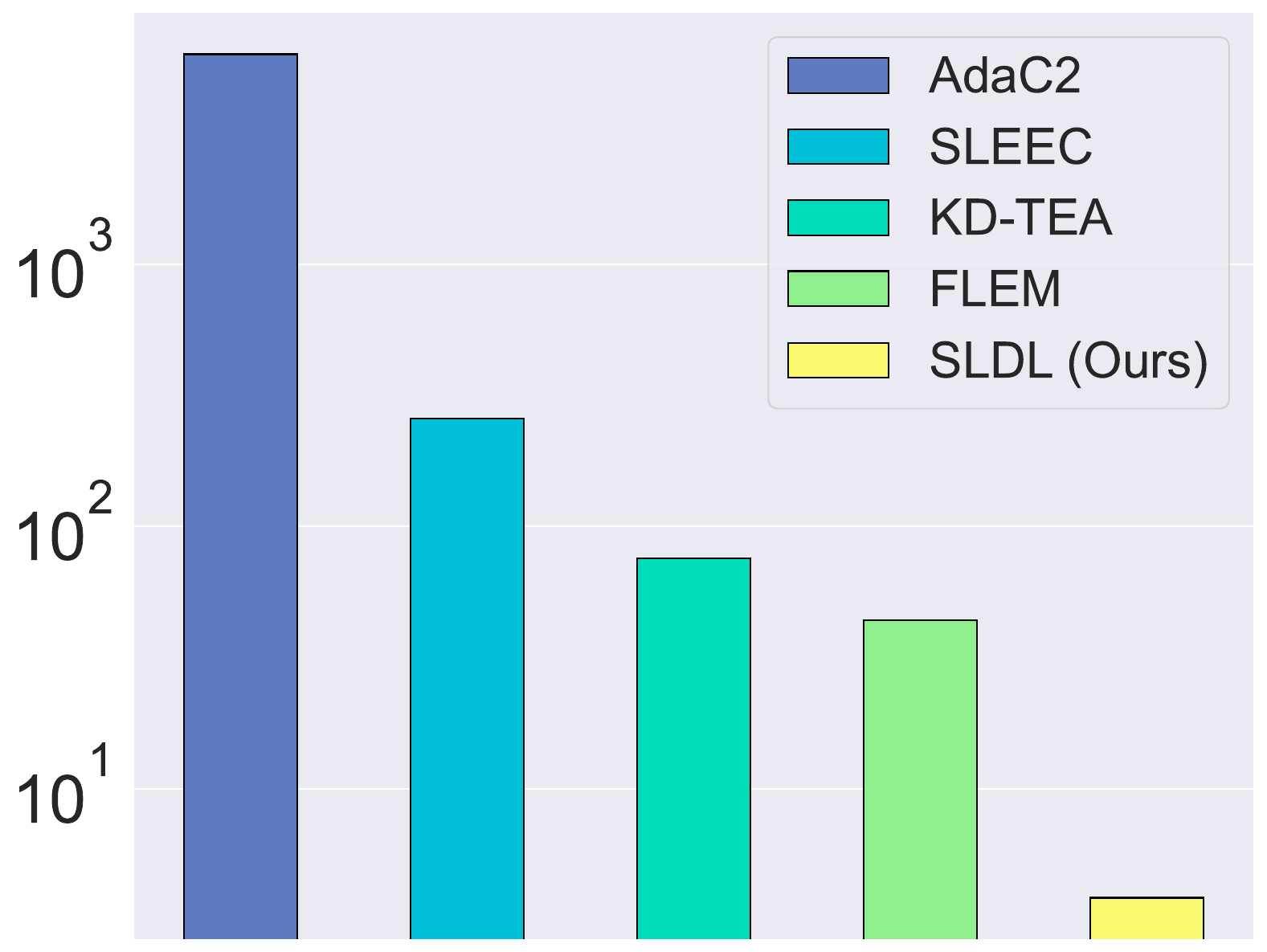}
			}
			\subfloat[CUB]{
				\includegraphics[width=0.395\columnwidth]{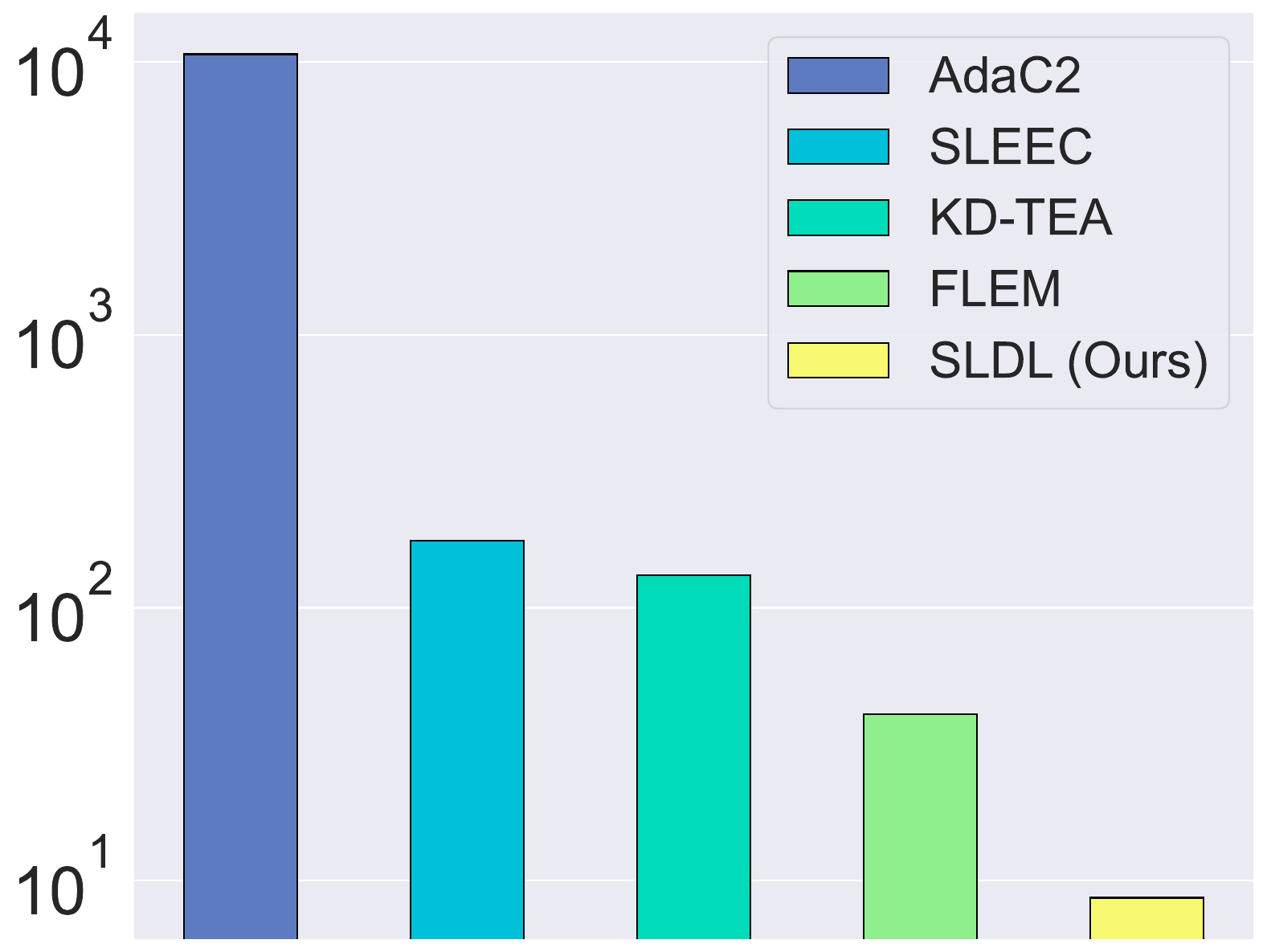}
			}
		}
		
		\resizebox{2\columnwidth}{!}{
			\subfloat[delicious]{
				\includegraphics[width=0.395\columnwidth]{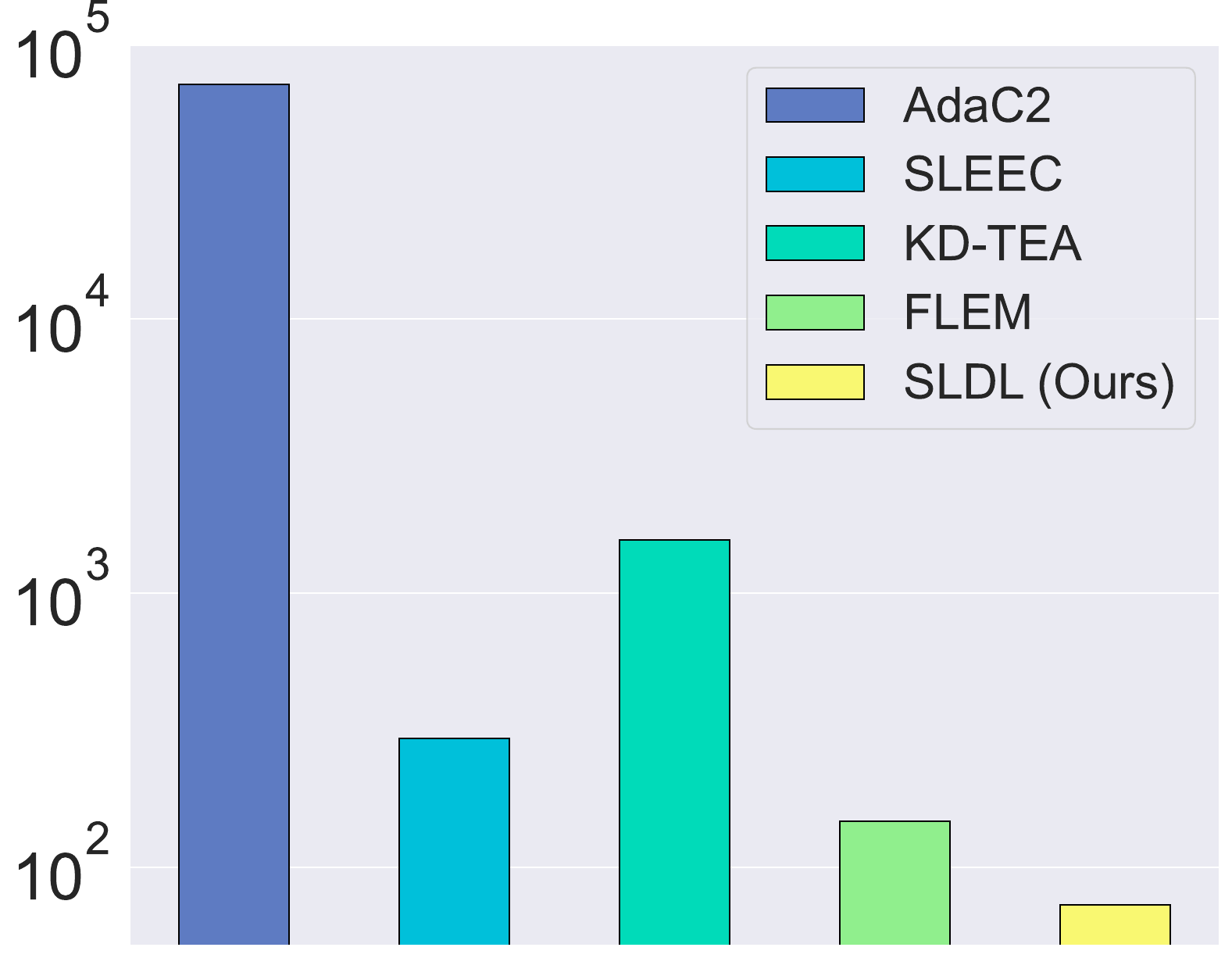}
			}
			\subfloat[eurlex-dc]{
				\includegraphics[width=0.395\columnwidth]{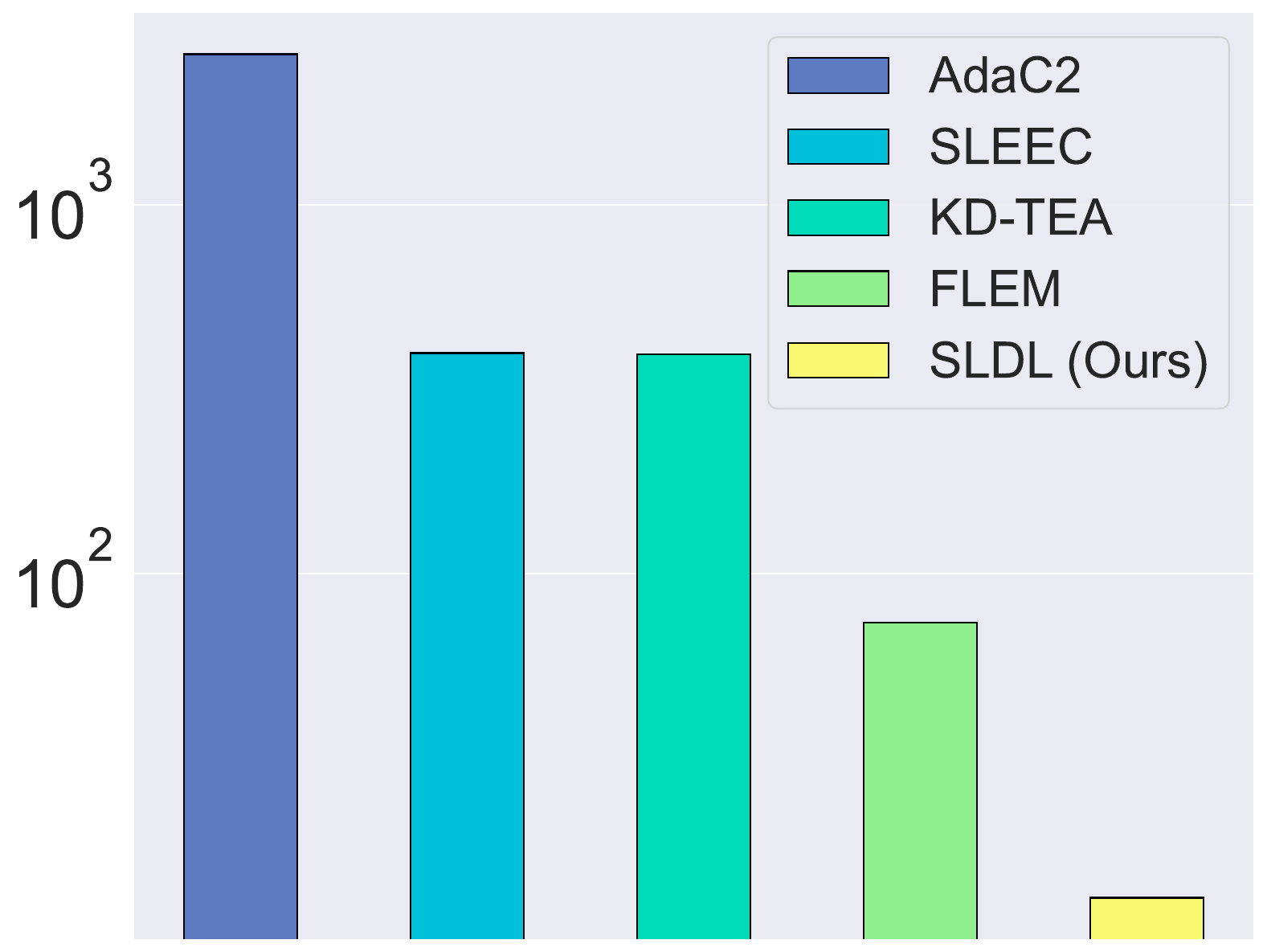}
			}
			\subfloat[eurlex-sm]{
				\includegraphics[width=0.395\columnwidth]{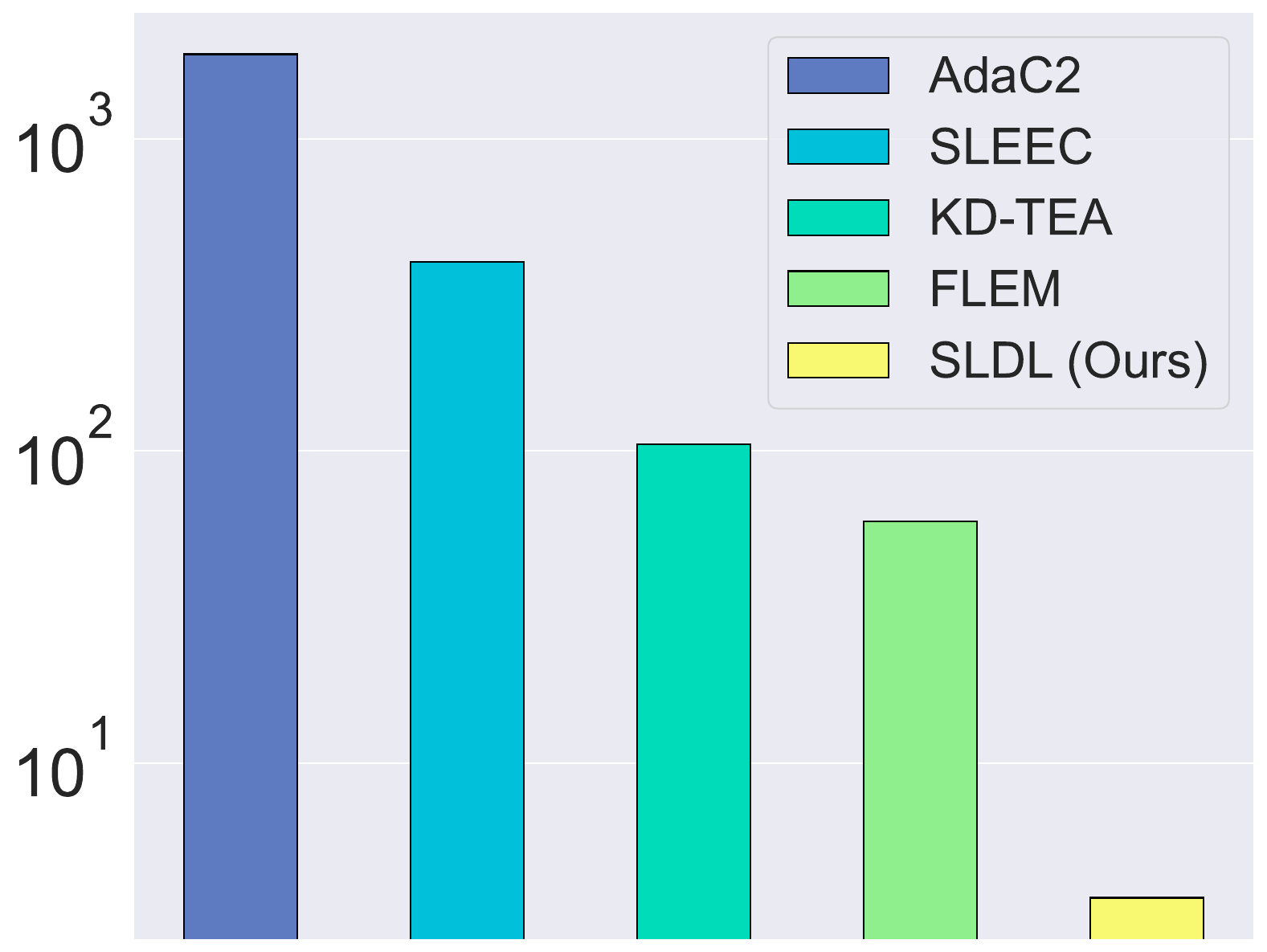}
			}
			\subfloat[espgame]{
				\includegraphics[width=0.395\columnwidth]{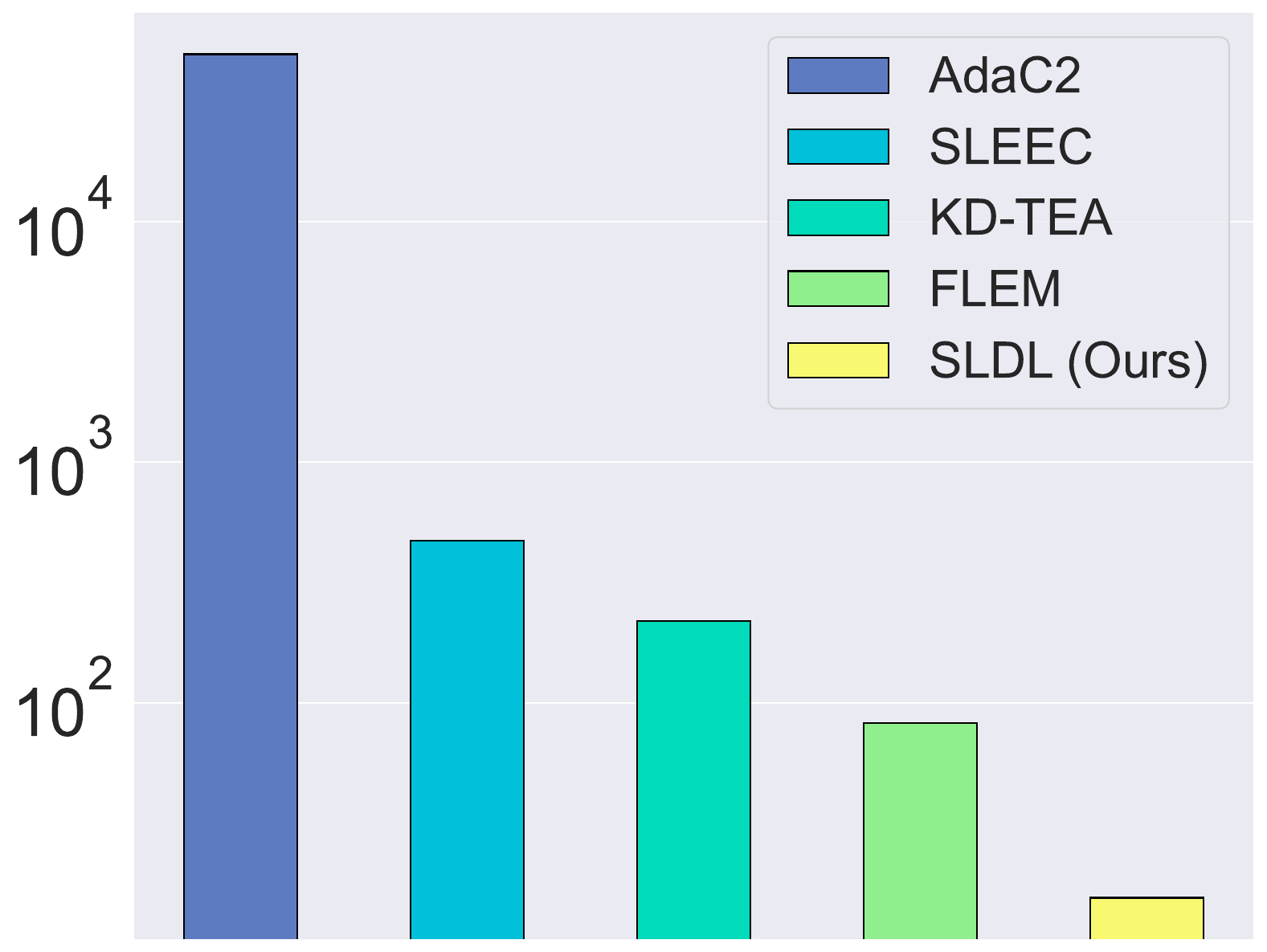}
			}
			\subfloat[stackex-chemistry]{
				\includegraphics[width=0.395\columnwidth]{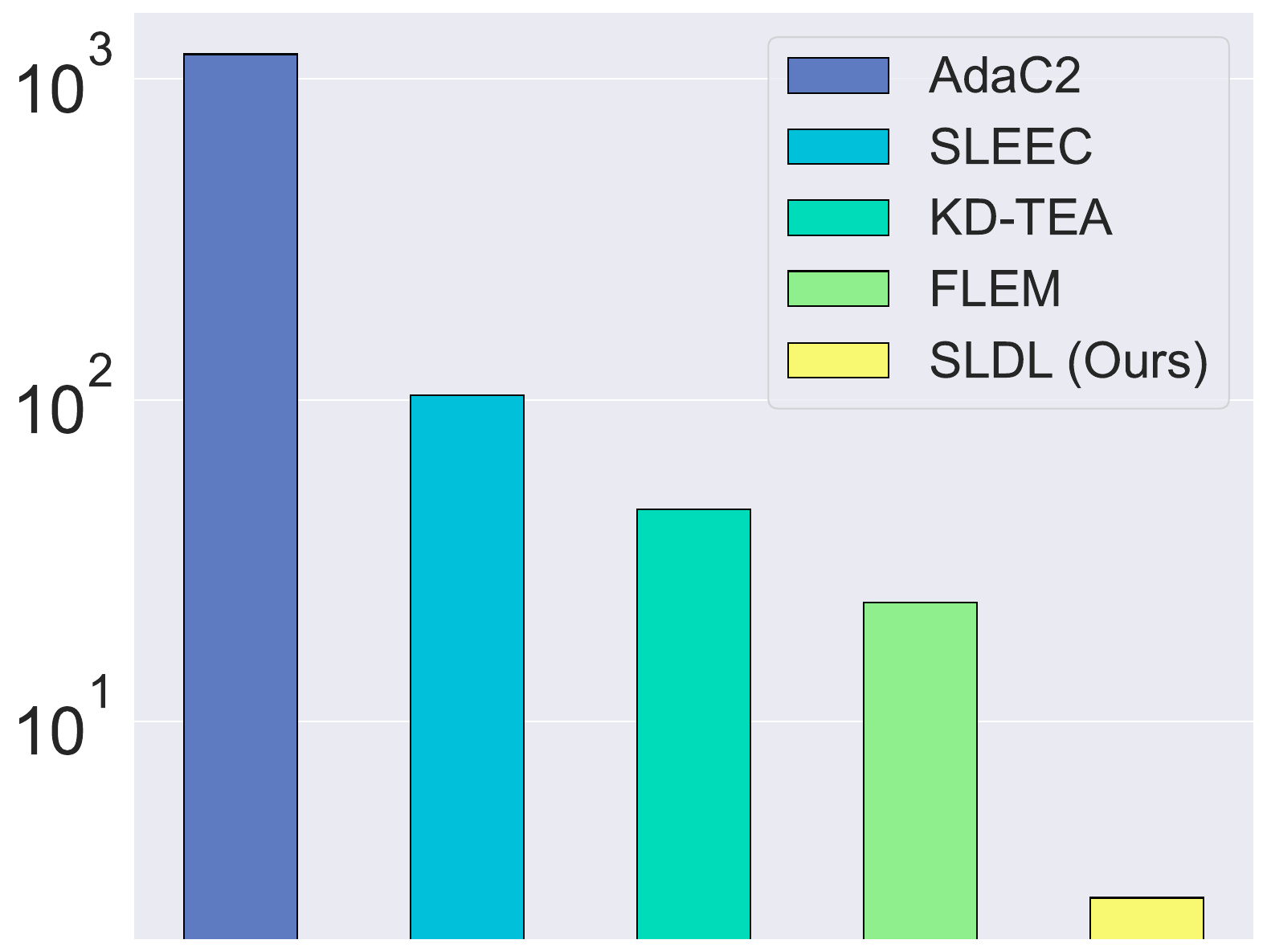}
			}
		}
		
		\resizebox{2\columnwidth}{!}{
			\subfloat[stackex-chess]{
				\includegraphics[width=0.395\columnwidth]{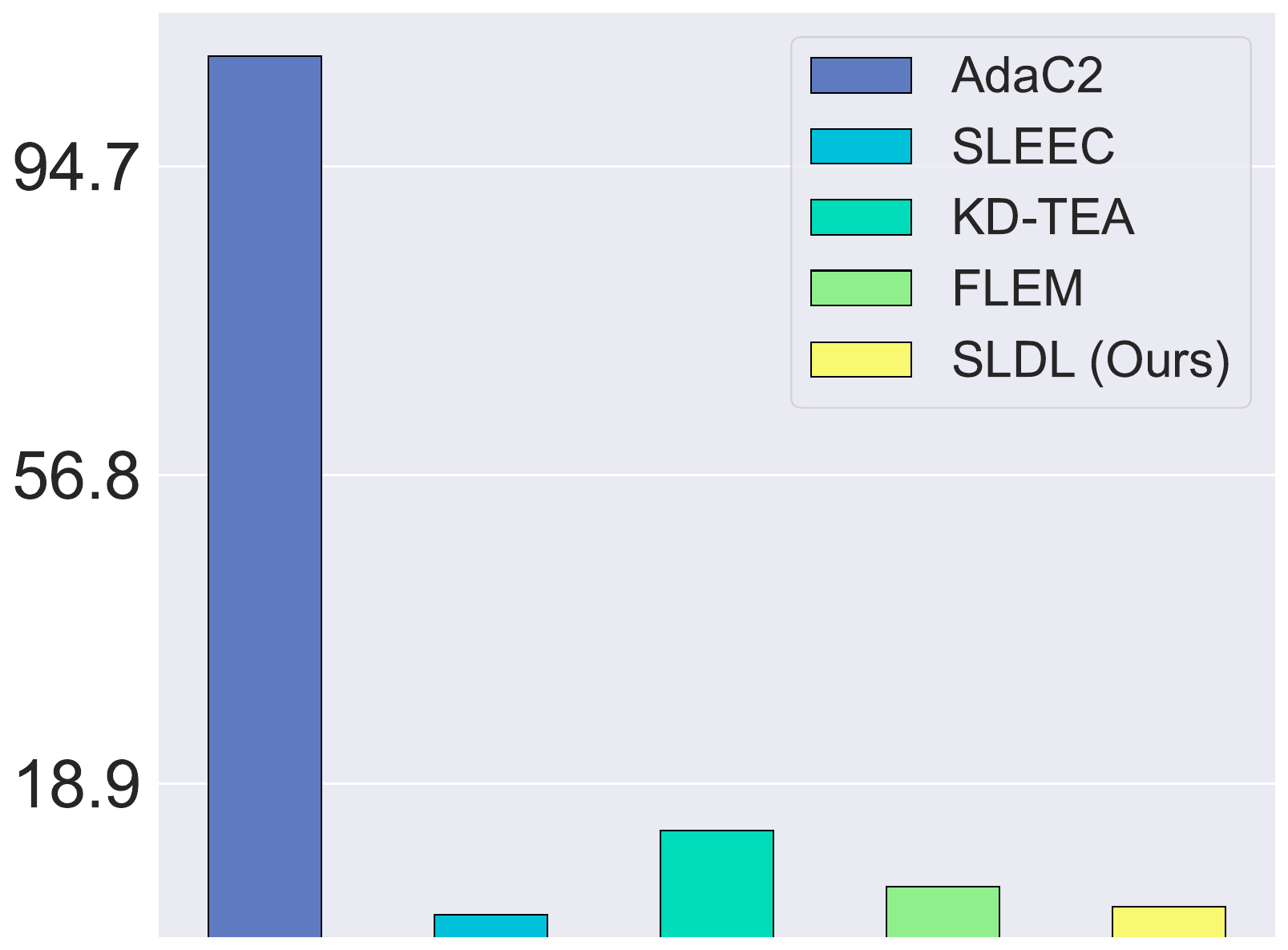}
			}
			\subfloat[stackex-coffee]{
				\includegraphics[width=0.395\columnwidth]{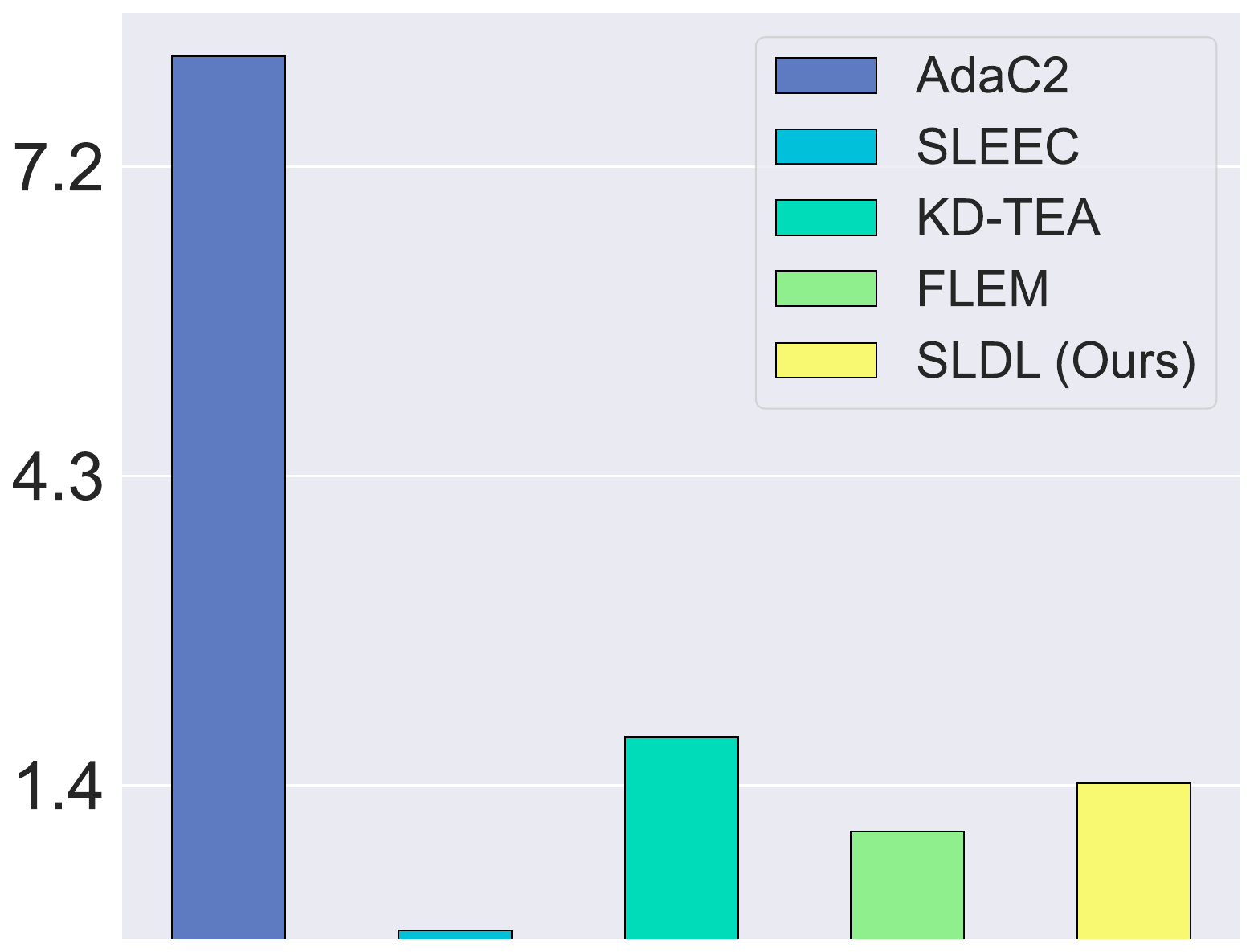}
			}
			\subfloat[stackex-cooking]{
				\includegraphics[width=0.395\columnwidth]{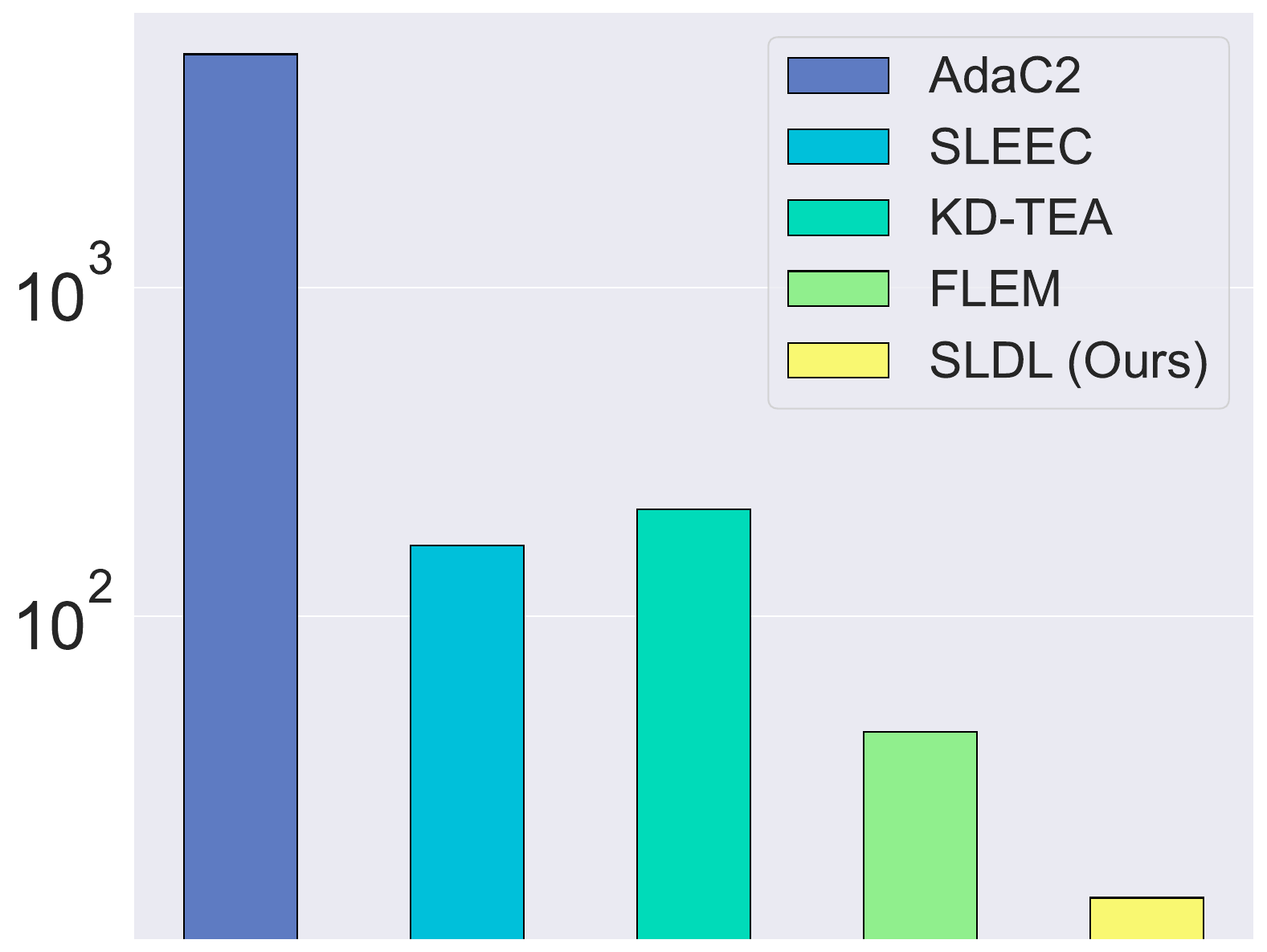}
			}
			\subfloat[stackex-cs]{
				\includegraphics[width=0.395\columnwidth]{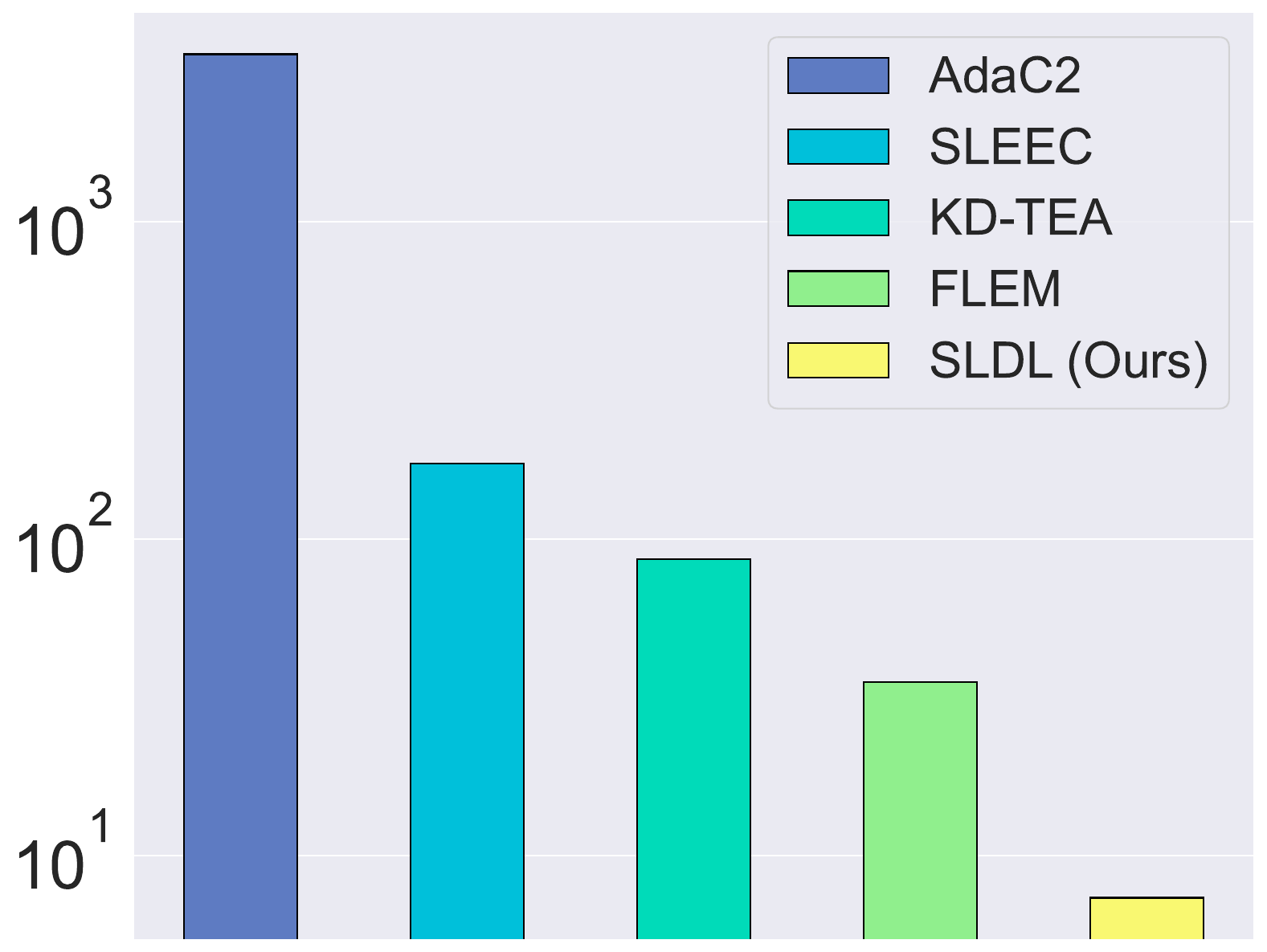}
			}
			\subfloat[stackex-philosophy]{
				\includegraphics[width=0.395\columnwidth]{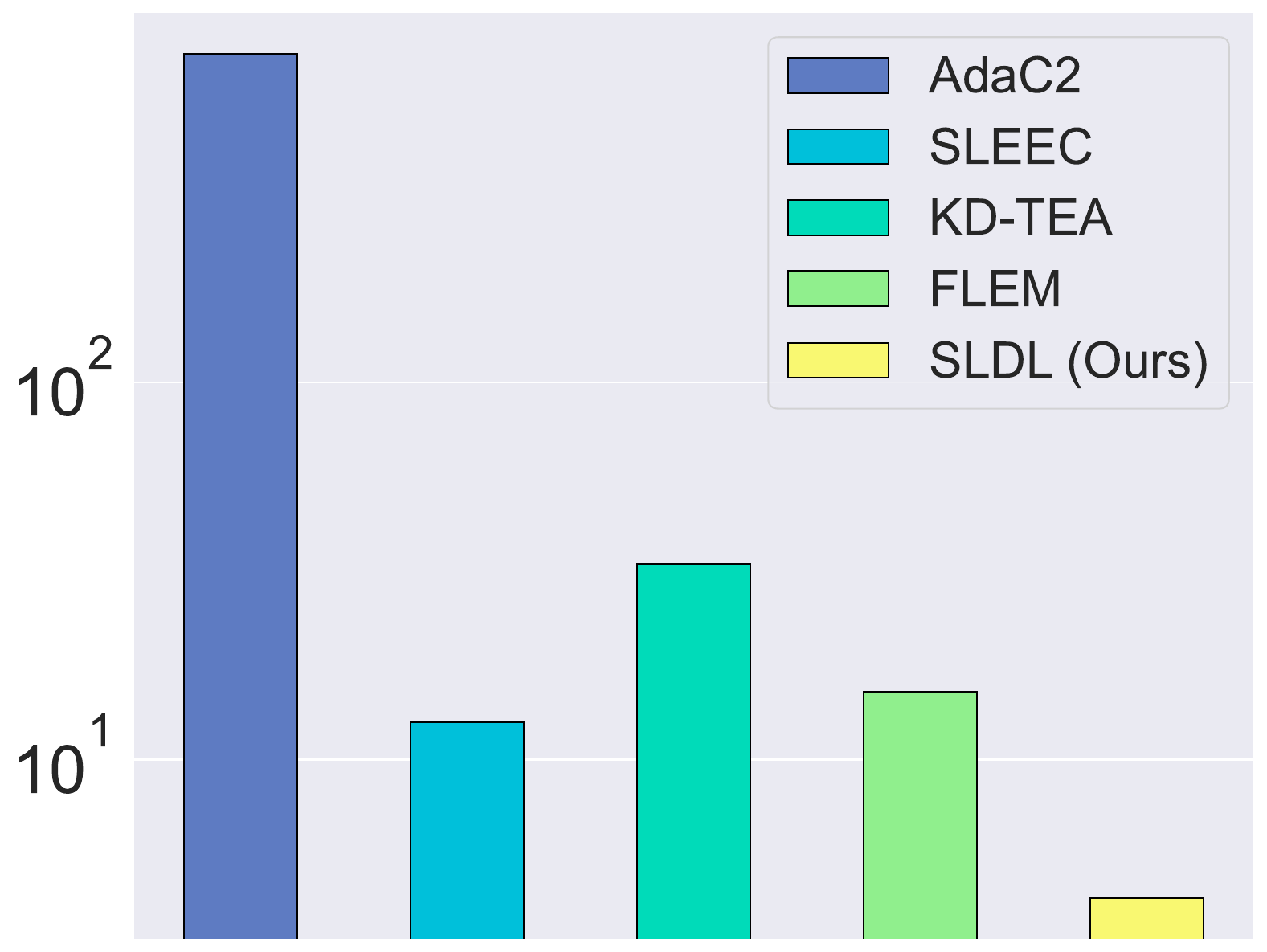}
			}
		}
		\caption{Training time of AdaC2, SLEEC, FLEM, KD-TEA, and SLDL on different datasets. In each subfigure, the x-axis indicates the MLC method, the y-axis indicates the training time ($s$).}
		\label{fig:running-times}
	\end{figure*}

	\begin{figure*} [t]
		\centering
		\resizebox{2\columnwidth}{!}{
			\subfloat[P@$1$]{
				\includegraphics[width=0.965\columnwidth]{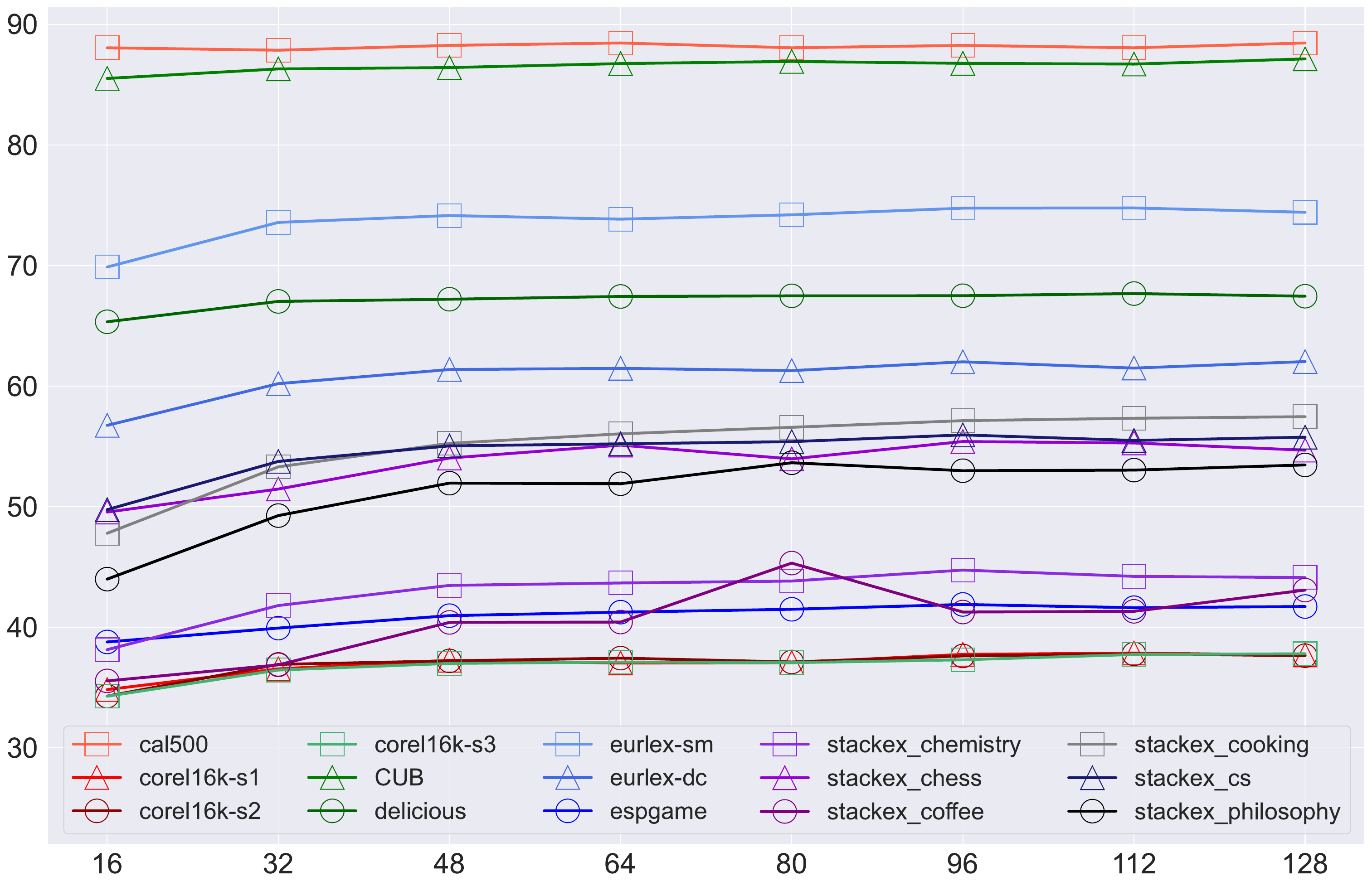}
			}
			\subfloat[P@$5$]{
				\includegraphics[width=0.965\columnwidth]{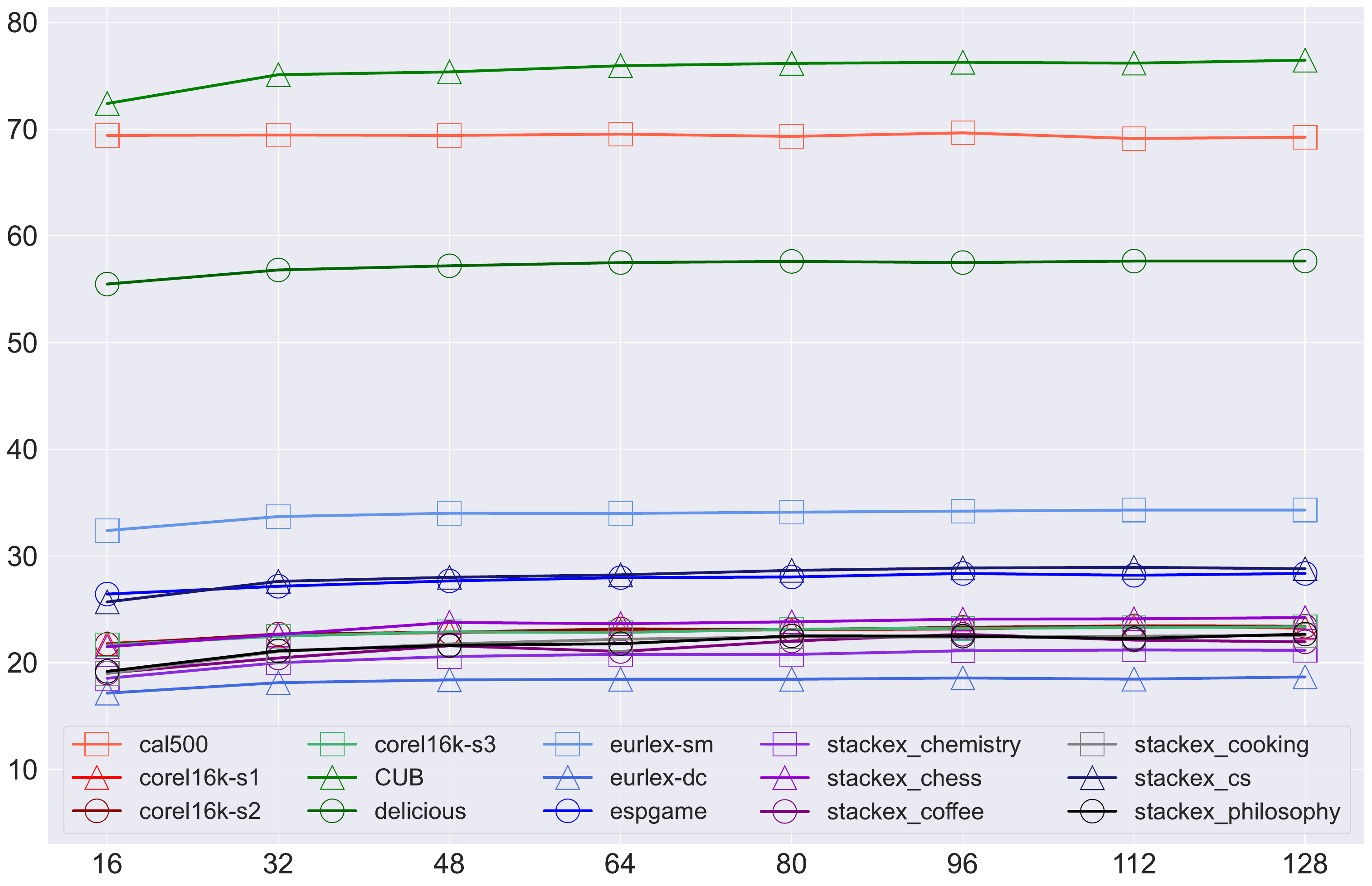}
			}
		}
		\caption{Effects of $\hat{c}$ on P@$1$ and P@$5$. In each subfigure, the x-axis indicates the value of $\hat{c}$, the y-axis indicates the value of P@$1$ and P@$5$.}
		\label{fig:ablation-param-c}
	\end{figure*}

	\begin{figure*} [t]
		\centering
		\resizebox{2\columnwidth}{!}{
			\subfloat[P@$1$]{
				\includegraphics[width=0.965\columnwidth]{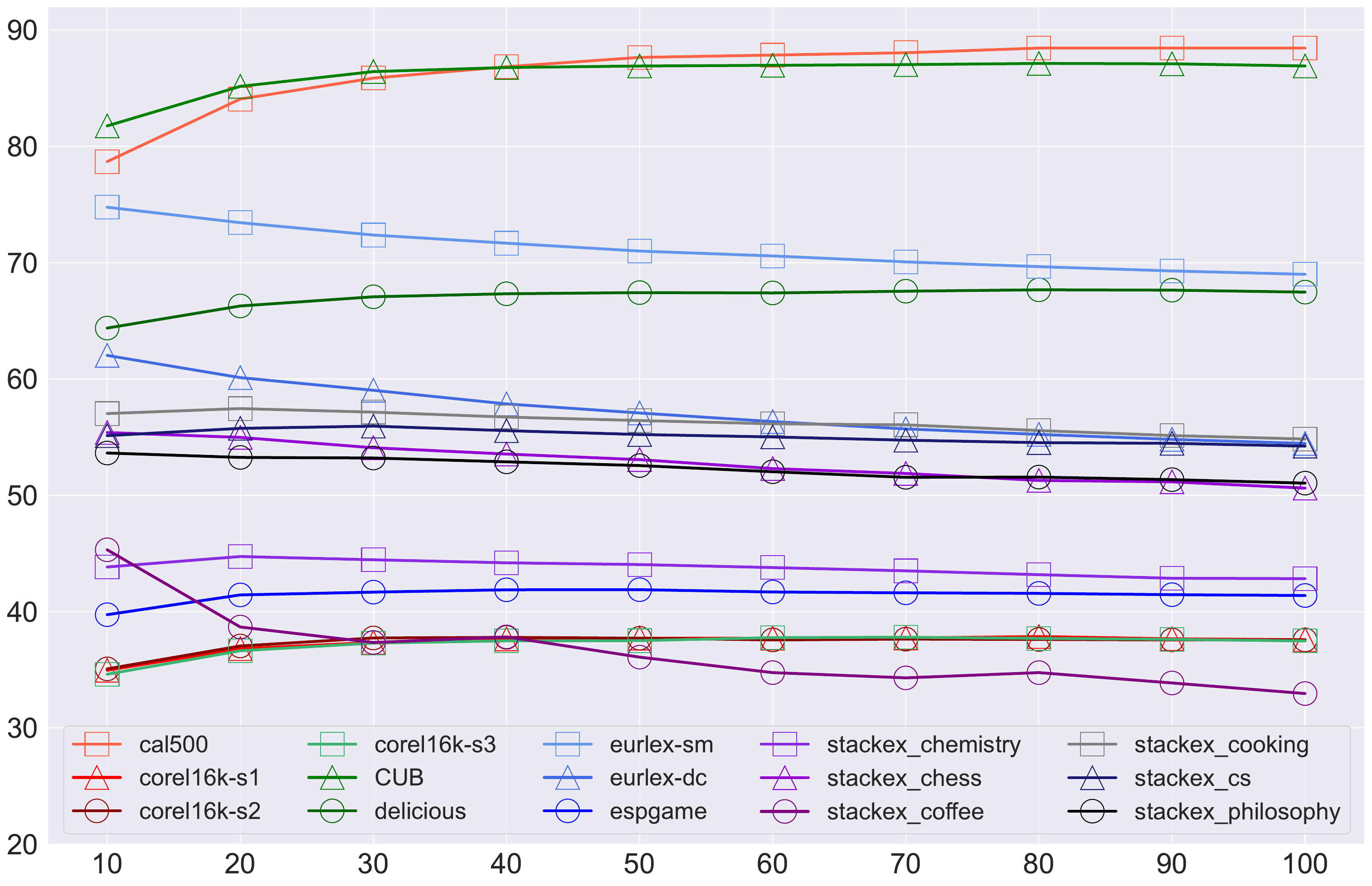}
			}
			\subfloat[P@$5$]{
				\includegraphics[width=0.965\columnwidth]{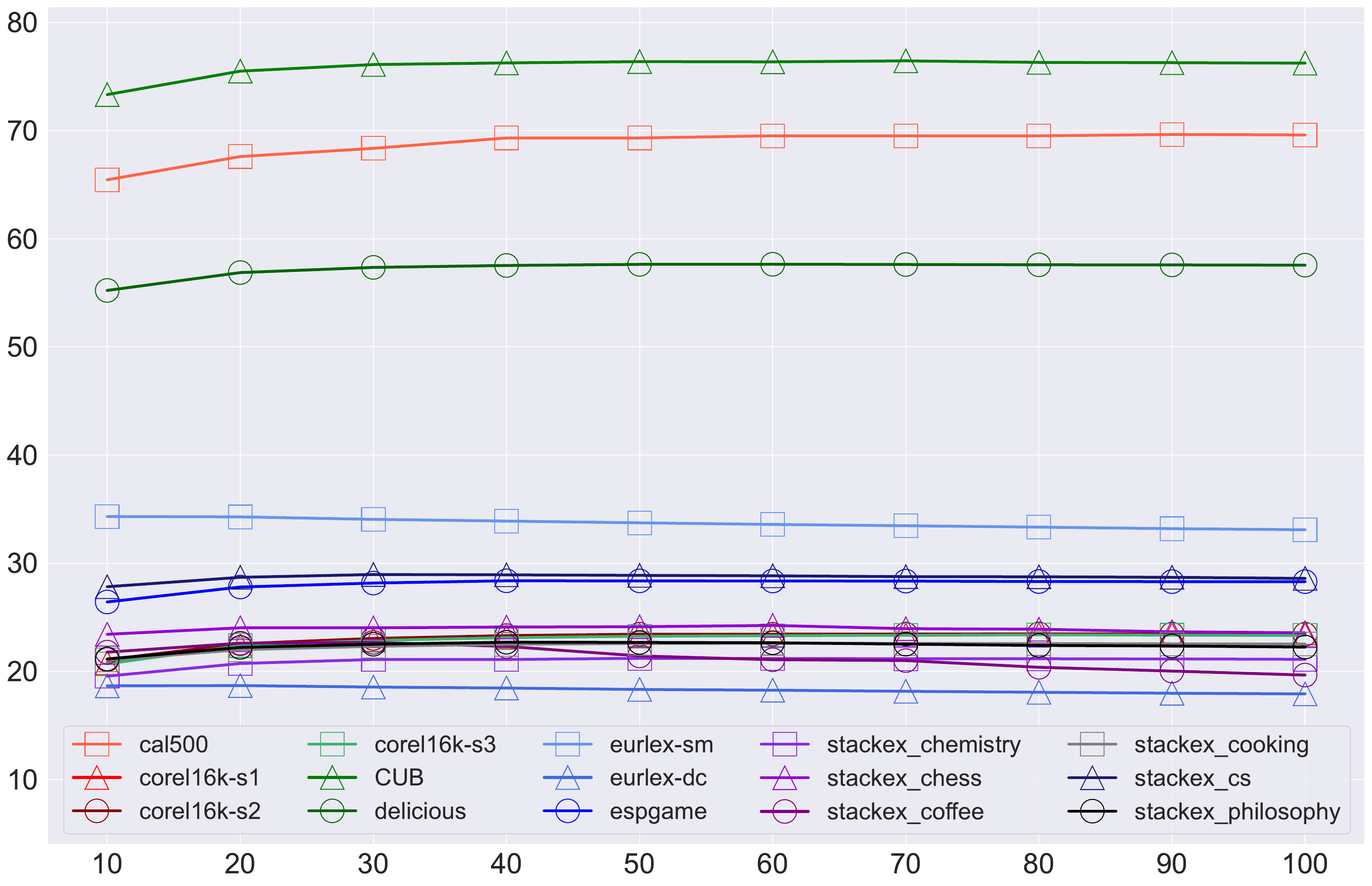}
			}
		}
		\caption{Effects of $k$ on P@$1$ and P@$5$. In each subfigure, the x-axis indicates the value of $\hat{c}$, the y-axis indicates the value of P@$1$ and P@$5$.}
		\label{fig:ablation-param-k}
	\end{figure*}

	\subsection{Performance Comparison}
	
	Tables \ref{table:result-P@1} to \ref{table:result-PSnDCG@5} present the experimental results of different algorithms evaluated using different evaluation metrics. For SLDL, the threshold $\tau$ for optimizing the target embedding distributions is set to 0.1, the balancing factor $\alpha$ is set to 1, the embedding dimension $\hat{c}$ and the number of neighbors $k$ are selected through grid search from the set $\{16,32,\cdots,128\}$ and $\{10,20,\cdots,100\}$, respectively. We perform ten-fold cross-validation for each algorithm and record the mean values and standard deviations of different methods on each evaluation metric. The two-tailed t-test at 0.05 significance level is conducted, and the best performance on each dataset is denoted in boldface. Each underlined result indicates that SLDL is statistically superior to the comparing method. Additionally, the average rank of each algorithm across all datasets is presented in the last row of each table.
	
	From Tables \ref{table:result-P@1} to \ref{table:result-PSnDCG@5}, it can be observed that SLDL outperforms existing multi-label classification algorithms in almost all cases. Moreover, SLDL can consistently statistically outperform the state-of-the-art multi-label classification methods in 96.4\% cases. Specifically, the compared methods, including SLEEC, DXML, CLIF, and KD-TEA try to mine and leverage label correlation, while the performance of our proposed SLDL significantly surpasses these approaches in almost all cases. These experimental results reveal the effectiveness of the asymmetric-correlational Gaussian embedding approach in SLDL and its ability to improve classification performance. In the meantime, we observe that on some datasets where asymmetric label correlations are not prominent, SLDL does not perform as well as on others. For example, on the CUB dataset, because the labels are fine-grained, some of the labels are very similar, while these labels are very different from others. Therefore, the asymmetric label correlations are not as prominent as in other datasets. Despite that, SLDL still achieves competitive performance on these datasets. When considering the average ranks across all fifteen benchmark datasets, SLDL demonstrates superior performance compared to other algorithms. Especially, SLDL achieves 1st or 2nd in 92.5\% cases when compared to these state-of-the-art algorithms. Therefore, SLDL exhibits superior performance over the state-of-the-art algorithms across all evaluation metrics.
	
	To further investigate the relative performance among the comparing algorithms, a statistical test, \textit{Friedman test}, is conducted. Table \ref{tab:Friedman} reports the Friedman statistics $F_F$ and the corresponding critical value on each evaluation criterion. As shown in Table \ref{tab:Friedman}, the null hypothesis of indistinguishable performance among the compared methods is rejected at a significance level of 0.05 for each evaluation criterion. Subsequently, a \textit{post-hoc Nemenyi test} \cite{Demsar2006} is applied at a significance level of 0.05 to determine whether the proposed SLDL method achieves competitive performance against the compared algorithms. The critical difference (CD) diagrams are depicted in Fig. \ref{fig:CD-diagram}. The lines connecting different algorithms in each sub-graph indicate that the corresponding compared algorithm does not exhibit a significant difference from SLDL. From Fig. \ref{fig:CD-diagram}, it can be observed that in almost all cases, our proposed SLDL method significantly outperforms the existing methods. Only the most recent state-of-the-art multi-label classification methods, KD-TEA and AdaC2, can achieve comparable performance to SLDL in some metrics. In the meantime, SLDL can outperform the compared algorithms on all metrics. These observations further demonstrate the effectiveness of SLDL.
	
	\begin{table}[t]
		\centering
		\caption{Comparison of SLDL with different cases on P@$5$.}
		\setlength{\tabcolsep}{2.8mm}
		\begin{tabular}{lccc}
			\toprule
			Dataset & w/o GE & w/o AC & SLDL (Ours) \\
			\midrule
			cal500 & 69.32$\pm$3.19 & 68.88$\pm$3.26 & \boldmath{}\textbf{69.64$\pm$3.21}\unboldmath{} \\
			corel16k-s1 & 23.15$\pm$0.51 & 23.23$\pm$0.51 & \boldmath{}\textbf{23.40$\pm$0.42}\unboldmath{} \\
			corel16k-s2 & 23.08$\pm$0.64 & 23.31$\pm$0.65 & \boldmath{}\textbf{23.46$\pm$0.65}\unboldmath{} \\
			corel16k-s3 & 23.31$\pm$0.48 & 23.31$\pm$0.58 & \boldmath{}\textbf{23.37$\pm$0.58}\unboldmath{} \\
			CUB   & 74.95$\pm$0.85 & 76.25$\pm$0.85 & \boldmath{}\textbf{76.45$\pm$0.83}\unboldmath{} \\
			delicious & 56.05$\pm$0.72 & 57.52$\pm$0.58 & \boldmath{}\textbf{57.64$\pm$0.68}\unboldmath{} \\
			eurlex-dc & 18.59$\pm$0.23 & 18.63$\pm$0.30 & \boldmath{}\textbf{18.68$\pm$0.30}\unboldmath{} \\
			eurlex-sm & 34.13$\pm$0.53 & 34.27$\pm$0.57 & \boldmath{}\textbf{34.31$\pm$0.55}\unboldmath{} \\
			espgame & 27.36$\pm$0.46 & 28.15$\pm$0.50 & \boldmath{}\textbf{28.37$\pm$0.40}\unboldmath{} \\
			stackex-chemistry & 20.42$\pm$0.47 & 20.63$\pm$0.28 & \boldmath{}\textbf{21.20$\pm$0.37}\unboldmath{} \\
			stackex-chess & 22.54$\pm$1.30 & 22.61$\pm$1.59 & \boldmath{}\textbf{24.24$\pm$1.38}\unboldmath{} \\
			stackex-coffee & 19.56$\pm$2.91 & 20.17$\pm$3.16 & \boldmath{}\textbf{22.67$\pm$2.97}\unboldmath{} \\
			stackex-cooking & 21.61$\pm$0.40 & 21.91$\pm$0.33 & \boldmath{}\textbf{22.59$\pm$0.23}\unboldmath{} \\
			stackex-cs & 28.65$\pm$0.42 & 28.55$\pm$0.62 & \boldmath{}\textbf{28.95$\pm$0.47}\unboldmath{} \\
			stackex-philosophy & 20.08$\pm$0.66 & 20.66$\pm$0.57 & \boldmath{}\textbf{22.69$\pm$0.56}\unboldmath{} \\
			\bottomrule
		\end{tabular}%
		\label{tab:ablation}%
	\end{table}%

	\subsection{Analysis of Time Consumption}
	
	In this subsection, we compare the training time of different methods. Several multi-label classification methods are included: the representative classifier chain-based approach, AdaC2 \cite{AdaBoost.C2}; the representative embedding-based approach, SLEEC \cite{SLEEC}; the representative label distribution learning (LDL)-based approach, FLEM \cite{FLEM}; and the most recent multi-label classification approach, KD-TEA \cite{KD-TEA}. For a fair comparison, all these methods run on CPU. For SLDL, the embedding dimension $\hat{c}$ is set to 128 and the number of neighbors $k$ is set to 100. The training time of these methods on different datasets is illustrated in Fig. \ref{fig:running-times}. As shown in Fig. \ref{fig:running-times}, AdaC2 requires significant time consumption when dealing with large-scale output space multi-label classification problems because it necessitates training a series of classifiers. For the other compared methods, although the time consumption has decreased compared to AdaC2, it remains very high on most datasets. Owing to the leveraging of simple yet effective Gaussian embedding and model training approaches, SLDL can achieve quite competitive results with lower time consumption in most cases. However, in some datasets with small sample sizes and numbers of labels (e.g., cal500 and stackex-coffee), SLDL takes longer time than SLEEC and FLEM. This phenomenon occurs because these datasets are relatively small, so the time reduction achieved by the Gaussian embedding method proposed in this paper is not as significant as with other large-scale datasets. Despite that, SLDL can achieve 10x to even 1000x speedup compared to other methods on most datasets, especially when the number of samples and the number of labels are large, which demonstrates the efficiency of SLDL.

	\subsection{Ablation Study} \label{sec:ablation}
	
	For a comprehensive understanding of our model, we further design two cases for SLDL to evaluate the effect of the proposed asymmetric-correlational Gaussian embedding approach:
	
	\begin{itemize}
		\item Case 1 (w/o GE):  without learning the Gaussian embedding, i.e., replace latent multivariate Gaussian distributions and Eq.(\ref{eq:loss-embedding}) with latent embedding vectors and mean square error loss function.
		\item Case 2 (w/o AC): without learning the asymmetric correlation among different labels, i.e., replace KL divergence with Jensen–Shannon (JS) divergence.
	\end{itemize}
	
	Table \ref{tab:ablation} tabulates the experimental results of different cases on P@$5$. From Table \ref{tab:ablation}, we can find that the Gaussian embedding is beneficial to improve the performance.	 Moreover, the introduction of asymmetric correlation can also promote the performance of the classification model. According to Theorem \ref{thm:bound}, these results further demonstrate that our embedding approaches are effective and superior. Moreover, the observations further prove the validity of the proposed asymmetric-correlational Gaussian embedding approach.

	\subsection{Effect of Hyperparameters $\hat{c}$ and $k$}
	
	In this subsection, we explore the effect of hyperparameters $\hat{c}$ and $k$. We compare the performances of SLDL with different values of $\hat{c}$ and $k$ on the fifteen datasets measured by P@$1$ and P@$5$. Figs. \ref{fig:ablation-param-c} and \ref{fig:ablation-param-k} illustrates the performances of SLDL with different values of $\hat{c}$ and $k$. From these curves, we can find that: 1) overall, SLDL has stable performances with a wide range of hyperparameter values on all fifteen datasets; 2) on each dataset, the impact trends of $\hat{c}$ and $k$ on P@$1$ and P@$5$ are almost the same; 3) the optimal performance of the model can be obtained when the value of $\hat{c}$ is between 64 and 96; 4) appropriate values of $k$ can bring slight performance gains on some datasets. These findings further demonstrate the robustness of the proposed SLDL.
	
	\section{Conclusion} \label{sec:conclusions}
	In this paper, a novel label distribution learning approach named SLDL is proposed, which can effectively explore and leverage the asymmetric correlation of different labels in multi-label classification and be well resilient to large-scale output space multi-label classification problems. SLDL can learn the distribution representation of different labels in the latent embedding space to extract the asymmetric correlation among different labels and ameliorate the scalability of the model. It first transforms labels into continuous distributions within a low-dimensional latent space, employing an asymmetric metric to establish correlations between different labels. Subsequently, it learns a mapping from the feature space to the latent space, where computational complexity is no longer tied to the number of labels. Finally, SLDL utilizes a nearest-neighbor-based strategy to decode the latent representations and generate the ultimate predictions. The effectiveness of SLDL is proved by both theoretical analysis and experimental results. In future work, we will further explore the theory of SLDL and promote the versatility of SLDL.

	\bibliographystyle{IEEEtran}
	\bibliography{references}

\end{document}